\documentclass[11pt]{article}
\usepackage[letterpaper, margin=1.0in]{geometry}

\usepackage{times}

\usepackage[numbers,sort]{natbib}

\usepackage[dvipsnames]{xcolor} %
\usepackage{tikz}
\usetikzlibrary{shapes}
\usetikzlibrary{positioning}
\usetikzlibrary{plotmarks}
\usepackage{microtype}
\usepackage{graphicx}
\usepackage{enumitem,caption,subcaption}
\usepackage{booktabs} %
\usepackage{multirow}
\usepackage{adjustbox}

\usepackage{stmaryrd}
\usepackage{graphicx}
\usepackage{caption}
\usepackage{subcaption}

\usepackage{textcomp} %
\usepackage{quoting}
\usepackage{titletoc} %
\usepackage{fontawesome} %

\usepackage[colorlinks = true,
            linkcolor = blue,
            urlcolor  = blue,
            citecolor = blue,
            anchorcolor = blue]{hyperref}

\usepackage{algorithm}
\usepackage[noend]{algpseudocode}
\algrenewcommand\algorithmicrequire{\textbf{Input:}}
\algrenewcommand\algorithmicensure{\textbf{Output:}}

\newcommand{\fedavg}{\textit{FedAvg}}
\newcommand{\newfl}{$\Delta$-\textit{FL}} %
\newcommand{\conf}{\mathrm{conf}}
\renewcommand{\epsilon}{\varepsilon}

\newlength{\continueindent}
\setlength{\continueindent}{2em}
\usepackage{etoolbox}
\makeatletter
\newcommand*{\ALG@customparshape}{\parshape 2 \leftmargin \linewidth \dimexpr\ALG@tlm+\continueindent\relax \dimexpr\linewidth+\leftmargin-\ALG@tlm-\continueindent\relax}
\apptocmd{\ALG@beginblock}{\ALG@customparshape}{}{\errmessage{failed to patch}}
\makeatother
\algblock{ParFor}{EndParFor}
\algnewcommand\algorithmicpardo{\textbf{in parallel do}}
\algrenewtext{ParFor}[1]{\algorithmicfor\ #1\ \algorithmicpardo}
\algrenewtext{EndParFor}{\textbf{end for}}
\algtext*{EndParFor}
\algdef{SE}[SUBALG]{Indent}{EndIndent}{}{\algorithmicend\ }%
\algtext*{Indent}
\algtext*{EndIndent}

\newcommand{\myparagraph}[1]{\paragraph{#1.}\hspace{-0.8em}}  %
\usepackage{bm,mathtools,amsthm,amssymb}
\usepackage{bbm}

\usepackage{color}
\definecolor{puorange}{rgb}{0.80,0.20,0}
\definecolor{bluegray}{rgb}{0.04,0,0.7}
\definecolor{greengray}{rgb}{0.05,0.50,0.15}
\definecolor{darkbrown}{rgb}{0.40,0.2,0.05}
\definecolor{darkcyan}{rgb}{0,0.4,1}
\definecolor{black}{rgb}{0,0,0}
\definecolor{grey}{rgb}{0.93,0.93,0.93}

\newcommand \reals {\mathbb{R}}

\newcommand \T {^{\top}}	%

\newcommand \prob {\mathbb{P}}

\newcommand \expect {\mathbb{E}}

\DeclarePairedDelimiterX{\inp}[2]{\langle}{\rangle}{#1, #2} %
\DeclarePairedDelimiterX{\norm}[1]{\Vert}{\Vert}{#1} %
\DeclarePairedDelimiterX{\normsq}[1]{\Vert}{\Vert^2}{#1} %

\newcommand \normd [1]{\Vert #1 \Vert^{*}}

\newcommand \eps \epsilon

\newcommand \argmin {\operatorname*{arg\,min}} %
\newcommand \argmax {\operatorname*{arg\,max}} %
\newcommand \conv {\operatorname*{conv}} %
\newcommand \dom {\operatorname*{dom}} %
\newcommand \dist {\operatorname*{dist}} %

\newcommand \grad {\nabla}

\newtheorem{theorem}{Theorem}
\newtheorem{lemma}[theorem]{Lemma}

\newtheorem{property}[theorem]{Property}
\newtheorem{proposition}[theorem]{Proposition}
\newtheorem{corollary}[theorem]{Corollary}

\theoremstyle{definition}
\newtheorem{definition}[theorem]{Definition}

 \usepackage{thmtools}
 \declaretheoremstyle[
notefont=\bfseries, notebraces={}{},
bodyfont=\normalfont\itshape,
headformat=\NAME \NOTE
]{nopar}

\title{Device Heterogeneity in Federated Learning: \\ 
A Superquantile Approach}

\author{
Yassine Laguel$^{1}$  \hspace{2em}
Krishna Pillutla$^{2}$ \hspace{2em}
J\'{e}r\^{o}me Malick$^{3}$  \hspace{2em}
Zaid Harchaoui$^{2}$ \\
\small{$^{1}$UGA, Lab. J. Kuntzmann, Grenoble, France}  \\
\small{
$^{2}$University of Washington, Seattle, USA} \\
\small{
$^{3}$CNRS, Lab. J. Kunztmann, Grenoble, France} 
}
\date{\vspace{-2em}}
\begin{document}
\maketitle

\begin{abstract}
We propose a federated learning framework to handle heterogeneous client devices which do not conform to the population data distribution. The approach hinges upon a parameterized superquantile-based objective, where the parameter ranges over levels of conformity. We present an optimization algorithm and establish its convergence to a stationary point. We show how to practically implement it using secure aggregation by interleaving iterations of the usual federated averaging method with device filtering. We conclude with numerical experiments on neural networks as well as linear models on tasks from computer vision and natural language processing. \end{abstract}

\section{Introduction} \label{sec:intro}
The proliferation of mobile phones, wearables
and edge devices has led to 
an unprecedented growth in the
generation of user interaction data. Systems which 
tap into the power of this rich data while respecting 
the privacy of users are geared to lead the next generation of intelligent applications and devices.
The leading distributed learning framework in this setting is federated learning~\citep{mcmahan2017communication}.

In federated learning, a number of client devices with privacy-sensitive
data collaboratively learn a machine learning model under the orchestration of a central server, while
keeping their data decentralized. 
This is achieved by pushing the actual 
computation to the devices while 
the server coordinates with the devices for  aggregation of model updates.
{\em Secure aggregation} ensures 
that no individual device's updates are known 
to either the server or other devices~\citep{bonawitz2017practical}.
Federated learning has found myriad applications ranging from
smartphone apps~\citep{yang2018applied} %
to healthcare~\citep{huang2019patient}.
A key feature of federated learning is 
statistical heterogeneity, i.e., 
client data distributions
are {\em not} identically distributed. 
Each user has unique
characteristics which are reflected in the data they generate. These characteristics are influenced by personal, cultural, regional and geographical 
factors. %
For instance, the varied use of language contributes to data heterogeneity
in a next word prediction task.

Vanilla federated learning and its de facto 
standard algorithm, \fedavg~\citep{mcmahan2017communication},
aim to fit a model 
to the population distribution of the devices available for training. 
While this approach works for users who conform to the population (e.g., trend followers),
it is liable to fail on individuals 
who do not conform to the population,
leading to poor user experience.
The goal of this work 
is to present a framework to 
improve the experience of 
these diversely non-conforming users
without sacrificing the good experience of conforming users.

Diversity of users leads to heterogeneity in the loss functions of the users,
which in turn manifests itself as heavy tails in 
the loss distribution over users.
Therefore, a natural approach to handling user heterogeneity consists in building an objective based on upper quantiles, in order to focus on the tail
distribution. 
In particular, we use the superquantile~\citep{rockafellar2000optimization} of the loss distribution (i.e.,\;the expectation over its upper tail). 

Training with a superquantile-based objective is not straightforward because of its inherent non-smoothness. 
It is worth emphasizing that any optimization issue can be exacerbated in the federated setting because of the constraints imposed by communication costs and privacy-preserving requirements. Here, we present an algorithm
to optimize a superquantile-based objective which overcomes these challenges. It enjoys a time and space complexity which is a constant multiple of the complexity of \fedavg{}.

\begin{figure*}[t]
\begin{center}
\begin{minipage}[c]{0.90\linewidth}

\begin{tikzpicture}[scale=0.15,
servernode/.style={ellipse, draw=black, fill=Goldenrod!10, very thick, minimum size=7mm},
devicenode/.style={rectangle, draw=black!60, very thick, minimum size=5mm, aspect=1.62},
secaggnode/.style={circle, draw=black, very thick, minimum size=1mm, fill=Lavender!10},
]

\tikzset{mark size=8pt} %
\tikzstyle{linesty1} = [cyan, very thick]
\tikzstyle{linesty2} = [BurntOrange, ultra thick, densely dotted]
\tikzstyle{linesty3} = [ForestGreen, ultra thick, dashdotted]
\tikzstyle{linesty4} = [OrangeRed, ultra thick, dashed]
\tikzstyle{linesty1s} = [cyan!75, very thick]
\tikzstyle{linesty2s} = [BurntOrange!75, ultra thick, densely dotted]
\tikzstyle{linesty3s} = [ForestGreen!75, ultra thick, dashdotted]
\tikzstyle{linesty4s} = [OrangeRed!75, ultra thick, dashed]
\tikzstyle{rectsty0} = [fill=gray!15, very thick]
\tikzstyle{rectsty1} = [fill=cyan!10, very thick]
\tikzstyle{rectsty2} = [fill=BurntOrange!10, very thick]
\tikzstyle{rectsty3} = [fill=ForestGreen!10, very thick]
\tikzstyle{rectsty4} = [fill=OrangeRed!10, very thick]

\node at (25,48) {\textbf{Trajectories of 
model parameters over time}};  %

\draw[thick, -] (5, 10) -- (45, 10); %
\draw[thick, -] (5, 10) -- (5, 33); %
\node at (24,7) {Iteration $\to$}; %
\node[rotate=90] at (3, 22) {Model parameters $\to$}; %

\draw[thin, -] (5, 1) rectangle (45, 5);
\draw[linesty1] (6,3) -- (10, 3);
\node  at (11.5,3) {$\theta_1$};

\draw[linesty2] (16,3) -- (20, 3);
\node  at (21.5,3) {$\theta_2$}; 

\draw[linesty3] (26,3) -- (30, 3);
\node  at (31.5,3) {$\theta_3$}; 

\draw[linesty4] (36,3) -- (40, 3);
\node  at (41.5,3) {$\theta_4$}; 

\draw [linesty1] plot [smooth] coordinates {(10.0, 27.0) (18.0, 31.5) (30.0, 25.5) (33.5, 26.2) (35.0, 25.7) (36.0, 26.2) (37.0, 25.7) (38.0, 26.2) (39.0, 25.7) (40.0, 26.0) };
\draw [linesty1] plot [only marks, mark=*] coordinates {(40,26)};
\node[outer sep=2pt] at (42.5,26) {$w_{\scriptscriptstyle\theta_1}$};

\draw [linesty2] plot [smooth] coordinates {(10.0, 27.0) (16.0, 29.0) (28.0, 20.0) (33, 22.5) (35.0, 21.8) (36.0, 21.9) (37.0, 22.1) (38.0, 21.8) (39.0, 22.1) (40.0, 22.0) };

\draw [linesty2] plot [only marks, mark=square*] coordinates {(40,22)};
\node[outer sep=2pt] at (42.5,22) {$w_{\scriptscriptstyle\theta_2}$};

\draw [linesty3] plot [smooth] coordinates {(10.0, 27.0) (15.0, 26.0) (26.0, 15.0) (33.0, 18.0) (34.4, 18.5) (36.0, 17.8) (37.0, 18.1) (38.0, 17.9) (39.0, 18.2) (40.0, 18.0) };

\draw [linesty3] plot [only marks, mark=diamond*] coordinates {(40,18)};
\node[outer sep=2pt] at (42.5,18) {$w_{\scriptscriptstyle\theta_3}$};

\draw [linesty4] plot [smooth] coordinates {(10.0, 27.0) (15.0, 27.0) (27.0, 12.0) (32.0, 13.5) (33.0, 14.0) (35.0, 13.7) (36.0, 14.2) (37.0, 13.8) (38.0, 14.1) (39.0, 13.9) (40.0, 14.05) };

\draw [linesty4] plot [only marks, mark=pentagon*] coordinates {(40,14)};
\node[outer sep=2pt] at (42.5,14) {$w_{\scriptscriptstyle\theta_4}$};

\fill (10,27) circle[radius=0.5];
\node[outer sep=2pt] at (8,27) {$w_0$};

\draw  [fill=RoyalPurple!50,draw=gray!50, opacity=0.25]  (25, 10) rectangle (27, 33) ;
\node[outer sep=2pt] at (26,9) {${t}$};
\node[draw=black!0] (highlight_rect) (26, 33) {}; %

\node (trainlabel) at (75,48) {\textbf{In iteration $t$ of training}};  %
\node[servernode] (trainserver) [below=1pt of trainlabel] {Server};

\node[secaggnode,scale=0.8] (sec_agg) [below=15pt of trainserver] {$\bm{+}$};

\node  at ([xshift=150pt,yshift=-10pt]sec_agg.north) {\scriptsize{Sec.Agg.}}; 

\node[devicenode, rectsty0] (traindev2) [below=20pt of sec_agg] {};
\node[devicenode, rectsty0] (traindev1) [left=of traindev2] {};
\node[devicenode, rectsty0] (traindev3) [right=of traindev2] {};

\node  [left=0.1pt of traindev1, yshift=3.5pt] {\scriptsize{selected}};
\node  [left=0.1pt of traindev1, yshift=-3.5pt] {\scriptsize{devices}};

\draw[->,linesty4s] ([xshift=30pt]traindev1.north) to [bend left=10] ([xshift=10pt,yshift=-30pt]sec_agg.west) ;

\draw[->,linesty3s] ([xshift=10pt]traindev1.north) to [bend left=20] ([xshift=5pt,yshift=-10pt]sec_agg.west) ;

\draw[->,linesty2s] ([xshift=-10pt]traindev1.north) to [bend left=30] ([xshift=0pt,yshift=5pt]sec_agg.west) ;

\draw[->,linesty1s] ([xshift=-30pt]traindev1.north) to [bend left=40] ([xshift=3pt,yshift=20pt]sec_agg.west) ;

\draw[->,linesty4s] ([xshift=30pt]traindev2.north) to [bend right=20] ([xshift=22.5pt,yshift=0pt]sec_agg.south) ;

\draw[->,linesty3s] ([xshift=10pt]traindev2.north) to [bend right=10] ([xshift=7.5pt,yshift=0pt]sec_agg.south) ;

\draw[->,linesty2s] ([xshift=-10pt]traindev2.north) to [bend left=10] ([xshift=-7.5pt,yshift=0pt]sec_agg.south) ;

\draw[->,linesty1s] ([xshift=-30pt]traindev2.north) to [bend left=20] ([xshift=-22.5pt,yshift=0pt]sec_agg.south) ;

\draw[->,linesty4s] ([xshift=30pt]traindev3.north) to [bend right=40] ([xshift=-3pt,yshift=20pt]sec_agg.east) ;

\draw[->,linesty3s] ([xshift=10pt]traindev3.north) to [bend right=30] ([xshift=0pt,yshift=5pt]sec_agg.east) ;

\draw[->,linesty2s] ([xshift=-10pt]traindev3.north) to [bend right=20] ([xshift=-5pt,yshift=-10pt]sec_agg.east) ;

\draw[->,linesty1s] ([xshift=-30pt]traindev3.north) to [bend right=10] ([xshift=-10pt,yshift=-30pt]sec_agg.east) ;

\draw[->,linesty4s] ([xshift=30pt]sec_agg.north) to [bend right=20] ([xshift=22.5pt,yshift=0pt]trainserver.south) ;

\draw[->,linesty3s] ([xshift=10pt]sec_agg.north) to [bend right=10] ([xshift=7.5pt,yshift=0pt]trainserver.south) ;

\draw[->,linesty2s] ([xshift=-10pt]sec_agg.north) to [bend left=10] ([xshift=-7.5pt,yshift=0pt]trainserver.south) ;

\draw[->,linesty1s] ([xshift=-30pt]sec_agg.north) to [bend left=20] ([xshift=-22.5pt,yshift=0pt]trainserver.south) ;

\node (testlabel) at (75,20) {\textbf{At test time}};  %
\node[servernode] (testserver) [below=1pt of testlabel] {Server};

\node[devicenode,rectsty2] (testdev3) [below=of testserver] {};
\node[devicenode,rectsty3] (testdev2) [left=of testdev3] {};
\node[devicenode,rectsty1] (testdev1) [left=of testdev2] {};
\node[devicenode,rectsty1] (testdev4) [right=of testdev3] {};
\node[devicenode,rectsty4] (testdev5) [right=of testdev4] {};

\draw[->,linesty1] (testserver.south) -- ([yshift=10pt,xshift=5pt]testdev1.north);
\draw[->,linesty3] (testserver.south) -- ([yshift=5pt]testdev2.north);
\draw[->,linesty2] (testserver.south) -- (testdev3.north);
\draw[->,linesty1] (testserver.south) -- ([yshift=5pt]testdev4.north);
\draw[->,linesty4] (testserver.south) -- ([xshift=-5pt,yshift=10pt]testdev5.north);

\node  [below=-4pt of testdev3, label={[align=center]below:\scriptsize{test devices select their level of conformity $\theta$}}] {};

\draw[-latex,thin, dashed] (26, 33.2) to [bend left=10] (trainlabel.west);

\end{tikzpicture} \end{minipage}
\end{center}
    \caption{\small{
    Schematic summary of the \newfl{} framework.~~\textbf{Left}: The server maintains multiple models $w_{\theta_j}$, one for each level of conformity\;$\theta_j$. 
    \textbf{Right-top}:
    In each iteration of the training process, 
    the selected devices participate in training 
    {\em each} model $w_{\theta_j}$.
    The updates proposed by individual devices are 
    the combined with the use of secure aggregation to
    update the model parameters at the server. 
    \textbf{Right-bottom}:
    At test time, each test device is provided with a tuning 
    knob which lets the user select their level of conformity
    $\theta$, and they are served the corresponding
    model $w_\theta$.
    Note that the conformity level cannot directly be
    measured due to data privacy restrictions.
    For simplicity, this schematic omits 
    the device filtering step, which is explained further in\;Figure\;\ref{fig:main:schematic_of_algo}.
    }}
    \label{fig:main:schematic_of_framework}
\end{figure*}
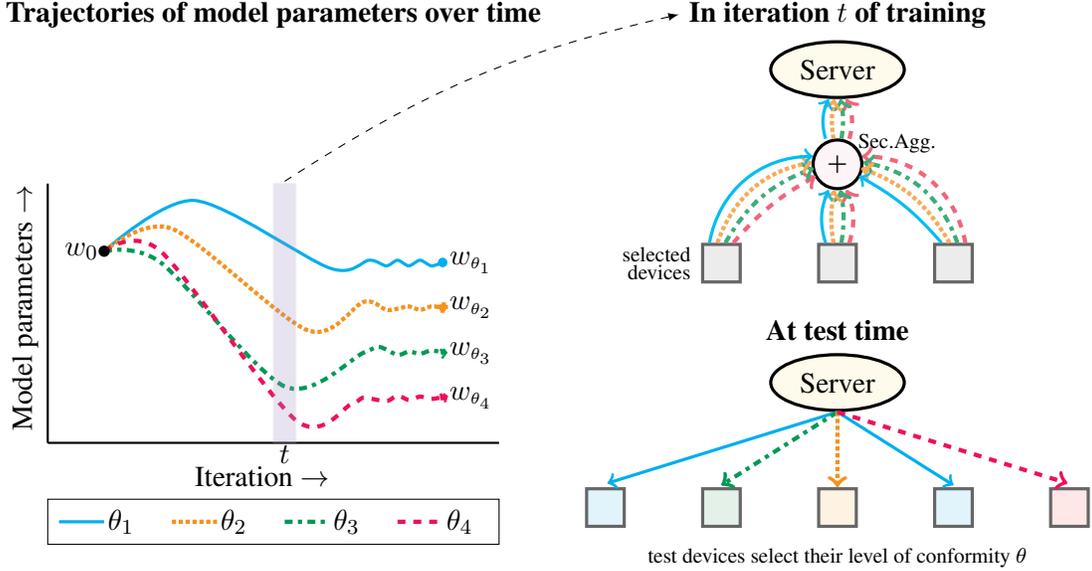

\myparagraph{Contributions}
We make the following contributions.
\begin{enumerate}[label=(\alph*),topsep=0.0ex,partopsep=0.0ex,itemsep=0.0ex,leftmargin=1.5em]

\item \textit{The \newfl{} Framework}\footnote{
pronounced as {\em Simplicial-FL}}:
We introduce the \newfl{} framework,
summarized in Figure~\ref{fig:main:schematic_of_framework},
to handle 
heterogeneity of client data distributions. 
The framework relies on a superquantile-based objective 
parameterized by the conformity level, 
which is a scalar summary of
how closely a device conforms to the 
population.

\item \textit{Optimization Algorithm}:
We present an algorithm to 
optimize the \newfl{} objective and establish 
its almost sure convergence to a stationary point.
We show how to implement a practical variant of the algorithm
with the use of secure aggregation such that it has the same
per-iteration communication 
cost as \fedavg.
See Figure~\ref{fig:main:schematic_of_algo} for a schematic summary of the algorithm.

\item \textit{Numerical Simulations}:
We demonstrate the breadth of our framework with numerical simulations using 
neural networks and linear models, on tasks including image classification, language modeling and sentiment analysis based on public datasets. The simulations demonstrate superior performance of \newfl{} 
on the upper quantiles of the error on test devices, while being competitive with vanilla federated learning on the mean error.
We have released a 
Python package with scripts to reproduce all simulations~\cite{simplicial_fl_repo}.

\end{enumerate}

\myparagraph{Outline}
Section~\ref{sec:setting} describes the setting
and precisely defines conformity.
Section~\ref{sec:method} describes the \newfl{} framework
and the training objective.
Section~\ref{sec:algos} describes a provably convergent
algorithm to optimize the \newfl{} objective, and presents a practical variant of the algorithm. 
Section~\ref{sec:expt} presents numerical simulations of the proposed method.
Section~\ref{sec:related} surveys related work.
The supplement contains
a rigorous presentation of the material with proofs.

\begin{figure*}[t!]
    \centering %
    \begin{subfigure}[b]{0.47\linewidth}
    		\centering
        \adjincludegraphics[width=0.99\linewidth, trim={0 0 0 0},clip=true]{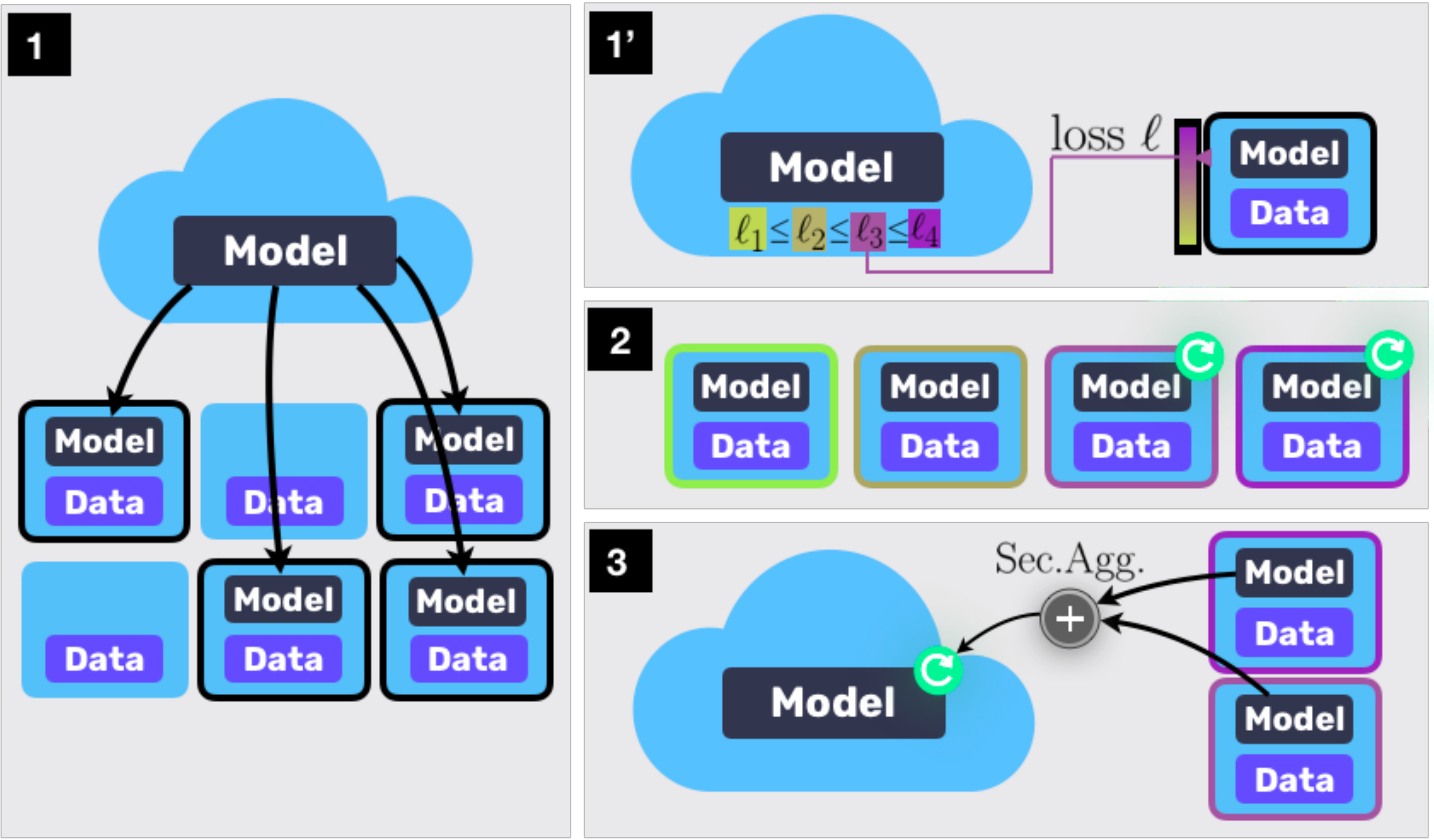}
     \caption{\small{\newfl.}}
    \end{subfigure}
    \hspace{0.5mm}
    \begin{subfigure}[b]{0.47\linewidth}
    		\centering
        \adjincludegraphics[width=0.99\linewidth, trim={0 0 0 0},clip=true]{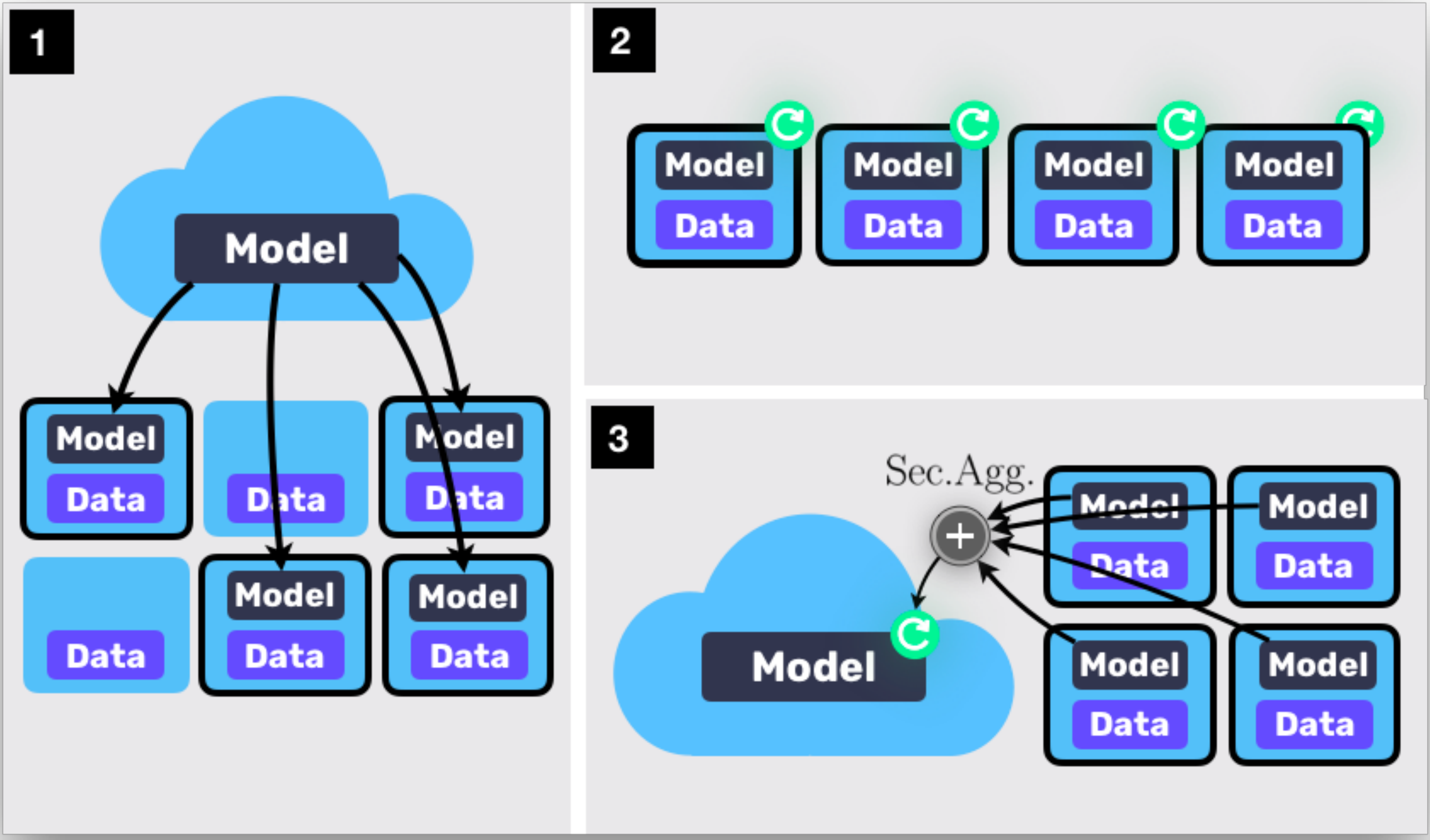}
     \caption{\small{\fedavg.}}
    \end{subfigure}
     \caption{\small{
     Key steps of each round of 
     the optimization algorithm (Algo.~\ref{algo:main:fed:proposed}) 
     of the \newfl{} framework for a fixed level
     $\theta$ of conformity, contrasted with \fedavg.
     Both algorithms consist of the following steps
     (note difference in step 1'). 
     \textbf{Step 1}:
     Server selects $m$ client devices and broadcasts 
     the global model to each selected device. 
     \textbf{Step 1' (\newfl{} only)}: Each selected device computes the 
     loss (a scalar) incurred by the global model on its local data and sends it to the server. Based on these losses,
     the server computes a threshold loss. It only keeps
     devices whose losses are larger than this threshold,
     and un-selects the other devices.  
     \textbf{Step 2}: Each selected device 
     computes an update to the server model based on its local
     data.  
     \textbf{Step 3}: Updates from selected devices are securely aggregated 
     to update the server model.
     }}
     \label{fig:main:schematic_of_algo}
\end{figure*}

\section{Problem Setting} \label{sec:setting}
Federated learning consists of a number of heterogeneous 
client devices which collaboratively 
train a machine learning model. The model 
is then deployed on all client devices, including those not seen 
during training.
We first review the training setup,
followed by test devices.

Concretely, suppose that we have $N$ client devices available for training.
We characterize each training device by a probability distribution
$q_k$ over some data space $\mathcal{Z}$ and a weight
$\alpha_k > 0$. We assume that
the data on device $k$ are distributed i.i.d. according to $q_k$
and $\sum_{k=1}^N \alpha_k = 1$ w.l.o.g.

We measure the loss incurred by a model $w \in \reals^d$ 
on a device with data distribution $q$ by 
\[
    F(w; q) := \expect_{\xi \sim q}
        [f(w; \xi)] \,,
\]
where $f : \reals^d \times \mathcal{Z} \to \reals$ is given.
We use $F_k(w) := F(w; q_k)$ 
to denote the loss on training device $k$.

We are interested in supervised machine learning, where $\xi \in \mathcal{Z}$ is an input-output pair $\xi = (x, y)$.
The function $f$ is of the form $\ell(y, \varphi(x ; w))$, where $\varphi(x; w)$ 
makes a prediction on input $x$
under model $w$ using, e.g., a neural network, and $\ell$ is a loss function such as the logistic loss.
The weight $\alpha_k$ is set proportional to the amount of data on device $k$.

\myparagraph{Test Devices and Conformity}
In this work, we consider
``test'' devices, unseen during training, whose distribution
can be written as a mixture of the training distributions. 
We define a mixture $p_\pi$
with weight $\pi \in \Delta^{N-1}$ %
as
\begin{align*}
    p_\pi := \sum_{k=1}^N \pi_k q_k\,,
\end{align*}
where $\Delta^{N-1}$ is the probability simplex in $\reals^N$.
Under this notation, the training distribution
is $p_\alpha$.
We now define {\textit{conformity}} of a 
mixture
to the training distribution.

\begin{definition} \label{def:setting:conformity}
    The {\em conformity} $\conf(p_\pi)$
    of a mixture distribution $p_\pi$ with weight $\pi$
    to the training distribution $p_\alpha$ 
    is defined as $\min_{k \in [N]} \alpha_k / \pi_k$.
    The conformity of a client device refers to the conformity of
    its data distribution.
\end{definition}

For every mixture $p_\pi$, we have that 
$0 < \conf(p_\pi) \le 1$.
A mixture distribution $p_\pi$
with $\conf(p_\pi)=\theta$ must satisfy
$\pi_k \le \alpha_k/\theta$ for each $k$.
Since $\sum_k \pi_k = 1$, we also get that 
$\pi_k \ge \max\{0, \alpha_k - (1-\theta)\}$.
We do not directly impose a lower bound on 
$\pi_k$ because it is not realistic to assume that 
the distribution on a test device must necessarily contain a component of 
every training distribution $q_k$.

\myparagraph{Interpretation}
Assuming that the training devices are a representative sample
of the population, every device's distribution can be 
well-approximated by a mixture $p_\pi$ for some $\pi \in \Delta^{N-1}$. 
Then, the conformity of a device is 
{\em a scalar summary of how close it is to the population}.
A test device with conformity $\theta \approx 1$ closely 
conforms to the population. Then, a model 
trained on the population 
$p_\alpha$ can be expected
to have a high predictive power, and the user experience 
on such a device is likely good.
In contrast, a test device with conformity 
$\theta \approx 0$ would be vastly different from the 
population $p_\alpha$. 
Here, the predictive power of a model trained on $p_\alpha$ could be arbitrarily poor. 
There is a trade-off between the fitting to the population distribution and supporting 
non-conforming test devices, i.e., 
those with 
distribution $p_\pi$ for small $\conf(p_\pi)$.
The conformity level $\theta$
presents a natural way to
encapsulate this tradeoff in a scalar
parameter. That is, given a 
conformity $\theta \in (0, 1)$,
we choose to only support test devices 
with distribution $p_\pi$ satisfying $\conf(p_\pi) \ge \theta$.

\myparagraph{Quantile and Superquantile}
Before proceeding, we recall that
the $(1-\theta)$-superquantile~\citep{rockafellar2000optimization} 
of a real-valued random variable $X$ is defined as 
\[
    S_X(\theta) := 
    \inf_{\eta \in \reals} \left\{
        \eta + \frac{1}{\theta} \, \expect(X - \eta)_+
    \right\} \,,
\]
where $(\rho)_+ := \max\{0, \rho\}$.
The right hand side is minimized by the 
corresponding quantile $\eta^\star = Q_X(\theta)$, 
\[
    Q_{X}(\theta) := 
    \inf\left\{ x \in \reals \, : \, \prob(X > x) \le \theta \right\} \,.
\]
When $X$ is continuous, the superquantile has the alternate 
representation
$S_X(\theta) = \expect\big[X \,\big|\, X > Q_{X}(\theta)\big]$,
as the average of $X$ 
above its $(1-\theta)$-quantile. 
The superquantile is, therefore, 
{\em a measure of the upper tail of $X$}.

\section{The \newfl{} Framework} \label{sec:method}

We now present the \newfl{} framework to 
(a) maintain good predictive power on 
high-conformity devices, and,
(b) improve the predictive power on 
low-conformity devices. 

The \newfl{} framework supplies each test 
device with a model appropriate to its 
conformity. In particular, given a discretization
$\{\theta_1, \ldots, \theta_r\}$ of $(0, 1]$, 
\newfl{} maintains $r$ models, one for 
each level $\theta_j$ of conformity. 
Owing to privacy restrictions, 
the local data is not allowed to leave a 
device, and hence, the conformity
of a test device cannot be measured.
Instead, we allow each test device to
tune their conformity.

In order to train a model for a given level $\theta$ of 
conformity, we aim to do well on {\em all}
mixtures $p_\pi$ with $\conf(p_\pi)\ge \theta$.
Therefore, we consider the optimization problem 
\begin{align} \label{eq:method:obj:minmax}
    \min_{w \in \reals^d} \left[
        F_\theta(w) := \max_{\pi \in \mathcal{P}_\theta}
        F(w; p_\pi) 
    \right]\,,
\end{align}
where $\mathcal{P}_\theta := \{\pi \,: \, \conf(p_\pi) \ge \theta\}$.
Equivalently, we have,
\[
   \mathcal{P}_\theta = \left\{
      \pi \in \Delta^{N-1} \, : \, 
        \pi_k \le \frac{ \alpha_k}{\theta}
      \,\,\forall \, k \in [N]
   \right\} \,.
\]

First, we formalize
the duality of $F_\theta$ as
a {superquantile}.
\begin{property} \label{property:main:duality}
   We have
   $F_\theta(w) = \min_{\eta \in \reals} \overline F_\theta(w, \eta)$,
   for any $\theta \in (0, 1)$,
   where $\overline F_\theta: \reals^d \times \reals \to \reals$ is given by
   \begin{align} \label{eq:main:overline_theta}
        \overline F_\theta(w, \eta) := 
            \eta + \frac{1}{\theta} 
            \sum_{k=1}^N \alpha_k 
            \big(F_k(w) - \eta \big)_+  \,.
   \end{align}
\end{property}

The optimal $\eta$ above is the
$(1-\theta)$-weighted quantile of $F_k(w)$ with weight $\alpha_k$ for $k=1,\ldots, N$.
The next property shows that the superquantile preserves convexity.

\begin{property} \label{property:main:cvx}
     If each $F_k$ is convex,
    then for any $\theta \in (0, 1)$,
    (a) $\overline F_\theta$ 
    is convex on $\reals^d \times \reals$, and,
    (b) $F_\theta$ is convex on $\reals^d$.

\end{property}

Motivated by these observations, we consider 
in lieu of \eqref{eq:method:obj:minmax}:
\begin{align} \label{eq:method:obj:minmin}
    \min_{w\in\reals^d, \eta \in \reals} \overline F_\theta(w, \eta) \,.
\end{align}

\begin{figure*}[t!]
    \centering %
    \begin{subfigure}[b]{0.3\linewidth}
    		\centering
        \adjincludegraphics[width=0.99\linewidth, trim={0 0 0 0},clip=true]{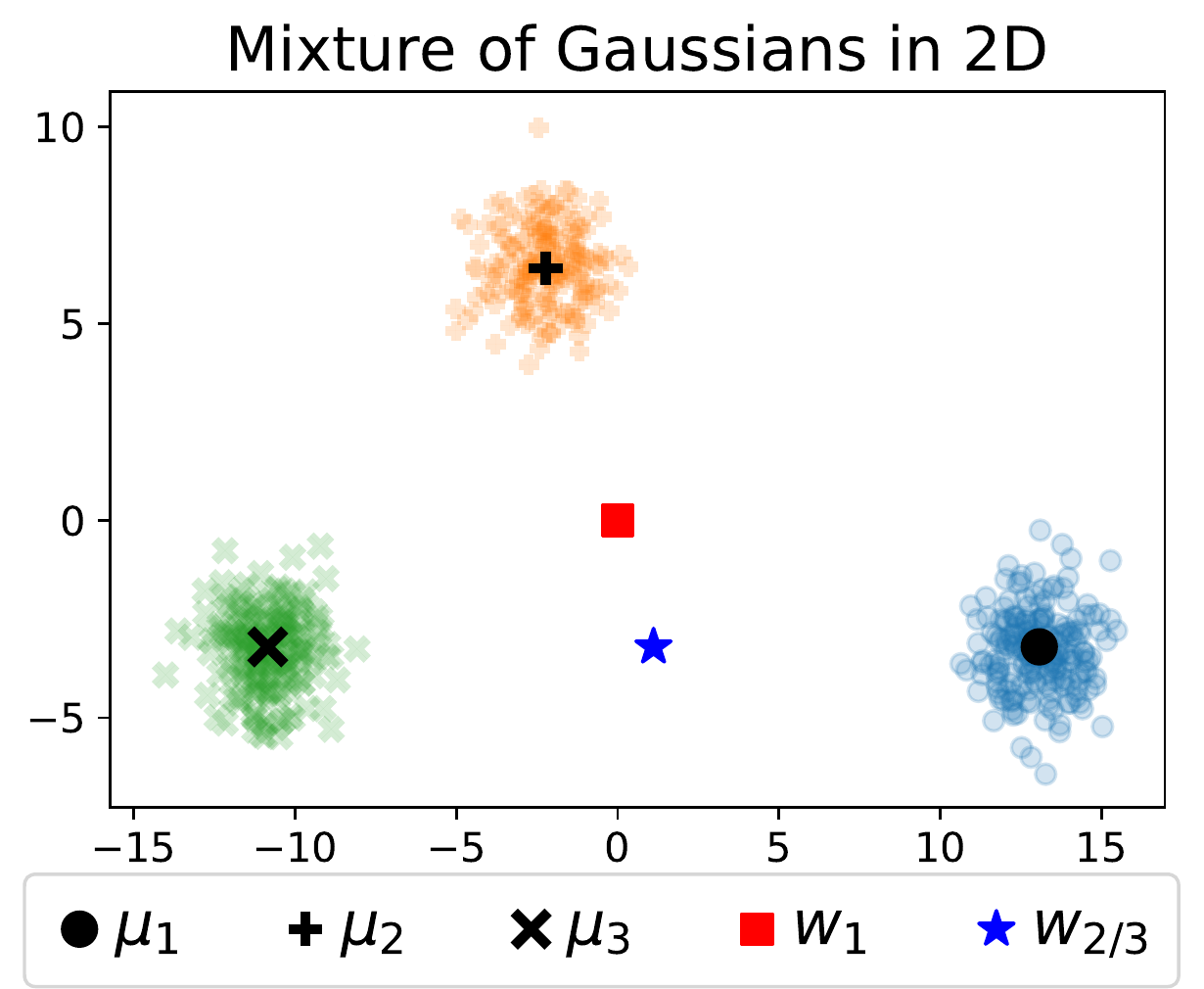}
    \end{subfigure}
    \hspace{10mm}
    \begin{subfigure}[b]{0.24\linewidth}
    		\centering
        \adjincludegraphics[width=0.99\linewidth, trim={0 0 0 0},clip=true]{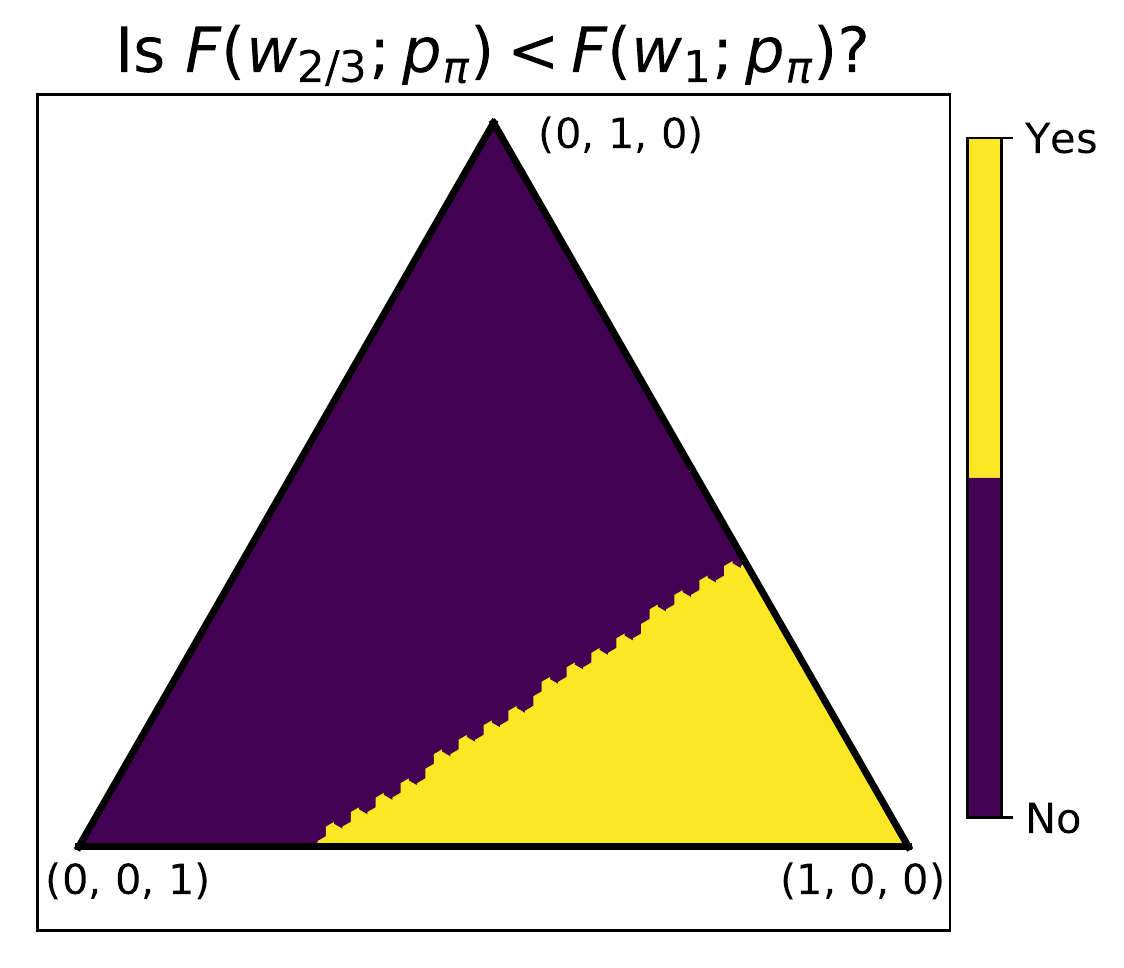}
        \vspace{2.1mm}
    \end{subfigure}
    \hspace{3mm}
    \begin{subfigure}[b]{0.32\linewidth}
    		\centering
        \adjincludegraphics[width=0.95\linewidth, trim={0 0 0 0},clip=true]{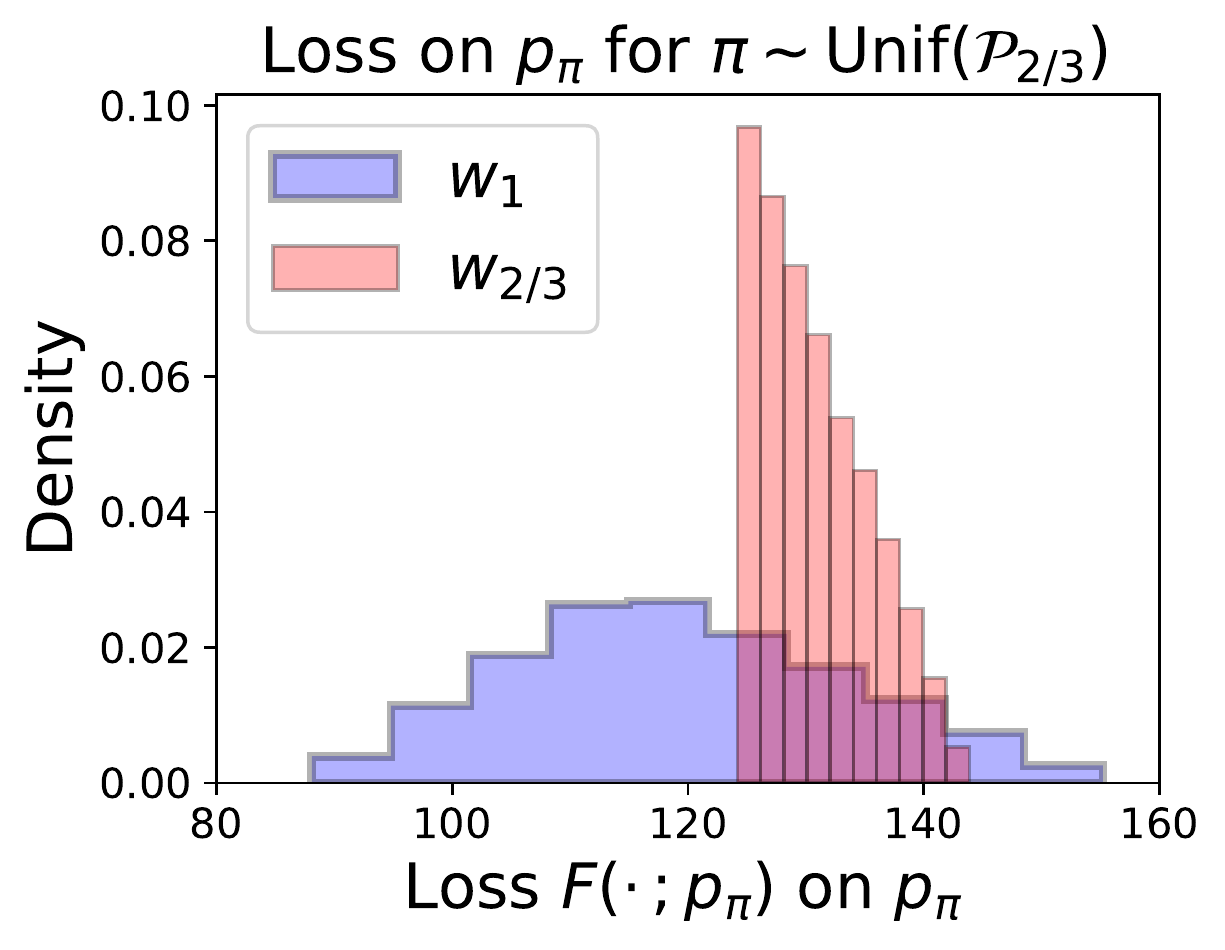}
        \vspace{1mm}
    \end{subfigure}

     \caption{\small{
     Illustration of \newfl{} with a uniform mixture of Gaussians.
     \textbf{Left}: Positions in $\reals^2$ 
        of the means $\mu_1, \mu_2, \mu_3$ of Gaussians 
        $q_1, q_2, q_3$ resp., 
        the vanilla federated learning model $w_1$, and the
        \newfl{} model $w_{2/3}$ at conformity $\theta=2/3$.
    \textbf{Middle}: Comparison of the loss $F(\cdot\,; p_\pi)$
        for each possible mixture $p_\pi$ with weights
        $\pi = (\pi_1, \pi_2, \pi_3)$.
    \textbf{Right}: Histogram of losses 
        $F(\cdot; p_\pi)$ 
        for $p_\pi$ drawn uniformly from the set of all 
        mixtures of $q_1, q_2, q_3$ with conformity at least
        $\theta = 2/3$.
        }}
     \label{fig:method:gauss-mix}
\end{figure*}

\myparagraph{Illustration}
We now illustrate the objective on a simple example
of a mixture of Gaussians. 
Consider a
mixture of $N=3$ Gaussian distributions
in $\reals^2$, with 
uniform weights ($\alpha_k=1/N$), identity covariance
and respective means $\mu_1, \mu_2, \mu_3$ which form a
scalene triangle -- see Figure~\ref{fig:method:gauss-mix}.
We assume that each distribution represents a training device.

Consider the task of mean estimation where
$f(w; \xi) = \normsq{\xi - w}$
so that $F(w; p)$ is minimized by 
the mean of $p$.
Suppose in our toy federated learning scenario that
a model $w$ trained on the 3 available 
training devices and is deployed on a test device
with distribution $p_\pi$.

Vanilla federated learning, 
which is a special case of the \newfl{} framework
with conformity $\theta = 1$,
aims to minimize $F(\cdot\,; p_\alpha)$
over the training distribution $p_\alpha$.
The minimizer $w_1$ of the 
loss $F(\cdot\,; p_\alpha)$ on the training distribution
is simply the mean
$w_1 := (\mu_1 + \mu_2 + \mu_3) / 3$.

Now consider a conformity level of $\theta=2/3$. In this 
case, a simple calculation shows that the \newfl{} 
objective is a piecewise quadratic, which is 
minimized at the midpoint of the longest side of 
the triangle formed by $\mu_1, \mu_2, \mu_3$.
In the example of Figure~\ref{fig:method:gauss-mix}, this is 
$w_{2/3}=(\mu_1+ \mu_3)/2$.

Next, consider the set $\mathcal{P}_{2/3}$ of all mixture weights $\pi$ 
such that $\conf(p_\pi)\ge 2/3$.
We see from Figure~\ref{fig:method:gauss-mix} (middle) that
there are mixtures for which $w_{2/3}$ is better than $w_1$ and vice-versa.
However, from the histogram of losses in Figure~\ref{fig:method:gauss-mix},
we see that the {\em worst loss} $F(\cdot\,; p_\pi)$ over all 
such mixtures is lower for the \newfl{} model $w_{2/3}$.
In practical terms, \newfl{} presents an improvement on devices
with the worst user experience. Moreover, by optimizing the superquantile, 
\newfl{} aims for good performance on {\em all}  
test devices with a given conformity, {\em irrespective of their 
distribution}. 
Note that while we use a uniform distribution in 
the illustration of Figure~\ref{fig:method:gauss-mix} (right), this
distribution is unknown in practice.

\section{Algorithms and Convergence}  \label{sec:algos}

We consider optimization algorithms to solve 
Problem~\eqref{eq:method:obj:minmin} for a fixed 
conformity level $\theta$.
We start with a meta-algorithm based
on the technique of alternating minimization
and 
then present a concrete implementation of it adapted to the engineering constraints of the federated setting.

\begin{algorithm}[tb]
	\caption{Alternating Minimization Meta-Algorithm}
	\label{algo:a:fed:proposed:am-raw}
\begin{algorithmic}[1]
		\Require Function $\overline F: \reals^d \times \reals \to \reals$, initial iterate $w_0 \in \reals^d$,
		    positive inexactness sequence $(\epsilon_t)_{t=0}^\infty$
	    \For{$t=0, 1,2, \cdots$}
	    	\State $\eta_t \in \argmin_{\eta \in \reals} 
	    	    \overline F(w_t, \eta)$
	    	 \State $w_{t+1} \approx \argmin_{w \in \reals^d} \overline F(w, \eta_t)$ such that
	    	 \eqref{eq:main:am:inexactness} holds
	    \EndFor
\end{algorithmic}
\end{algorithm}

\myparagraph{Meta-Algorithm}
We start by assuming that
all devices participate at all times.
An inexact
alternating minimization meta-algorithm
is given in
Algorithm~\ref{algo:a:fed:proposed:am-raw}.
It alternates updates of $w$ and $\eta$,
where the $\eta$-step can be performed in closed form.
For the $w$-step, we consider
for some $\epsilon_t > 0$ 
the inexactness
criterion\footnote{
    We use $\sigma(w_t)$ to denote the sigma field
    generated by $w_t$.
 }
\begin{align} \label{eq:main:am:inexactness}
    \expect\left[
        \overline F(w_{t+1}, \eta_t) 
        | \sigma(w_t) \right]
    - \min_w \overline F(w, \eta_t) \le \epsilon_t  \,.
\end{align}
The template in Algorithm~\ref{algo:a:fed:proposed:am-raw}
can be concretely instantiated with
a stochastic optimization algorithm such as SGD 
to satisfy the inexactness bound.

Note that $\overline F_\theta$ is not smooth\footnote{
We say $f: \reals^n \to \reals$ is $L$-smooth if it is continuously differentiable and $\grad f$ is $L$-Lipschitz w.r.t. $\norm{\cdot}_2$.
}
 owing to the non-smoothness of 
$(\rho)_+$. To show convergence, we consider a
smooth surrogate
$\overline F_{\theta, \nu}$ 
of $\overline F_\theta$ defined for $\nu > 0$ as
\begin{align}
    \label{eq:main:setup:smoothing}
    \overline F_{\theta, \nu}(w, \eta) &:=
    \eta + \frac{1}{\theta} \sum_{k=1}^N \alpha_k g_\nu\big(F_k(w) - \eta \big) \,, \text{where,}
    \\
    \label{eq:main:setup:smoothing:relu}
    g_\nu(\rho) &:= 
    \begin{cases}
        \nu / 2 \,, & \text{ if } \rho \le 0 \,, \\
        \rho^2/(2\nu) + \nu/2 \,, & \text{ if } 0 < \rho \le \nu \,, \\
        \rho\,, & \text{ if } \rho > \nu   \,.
    \end{cases}
\end{align}
i.e., $g_\nu(\rho)$ is a smoothing of $(\rho)_+$.
It is known that $\overline F_{\theta, \nu}$ uniformly
approximates $\overline F_{\theta}$ to $\nu/(2\theta)$
and enjoys the same convexity properties as $\overline F_\theta$.
Analogous to $F_\theta$, we define
\begin{align}
    F_{\theta, \nu}(w) := \min_{\eta \in \reals} 
        \overline F_{\theta, \nu}(w, \eta) \,,
\end{align}
where the minimization over $\eta$ can be performed in 
closed form again.
The next proposition shows the convergence of
Algorithm~\ref{algo:a:fed:proposed:am-raw} provided the inexactness in the $w$-step satisfies 
$\epsilon_t = o(t^{-1})$.
Note that the stationary point guarantee does 
not require convexity.

\begin{proposition} \label{prop:main:am:converge:asymp}
    Fix $\theta \in (0, 1)$ and $\nu = 2\delta\theta$ for some  $\delta > 0$.
    Suppose each $F_k$ is $B$-Lipshitz and $L$-smooth.
    Consider Algorithm~\ref{algo:a:fed:proposed:am-raw}
    with inputs $\overline F_{\theta, \nu}$
    and a positive sequence $(\epsilon_t)$
    such that $\sum_{t=0}^\infty \epsilon_t < \infty$.
    Then, the iterates $(w_t, \eta_t)$
    generated by Algorithm~\ref{algo:a:fed:proposed:am-raw}
    almost surely satisfy\footnote{
    The notation $\partial$ refers to the 
    Clarke subdifferential~\cite{clarke1990optimization}
    --- see
    Appendix~\ref{sec:a:technical_lemmas}.
    }\textsuperscript{,}\footnote{
    We use
    $\dist(a, S) := \inf_{z \in S} \norm{z-a}$.
    }:
    \begin{enumerate}[label=(\alph*),nolistsep,topsep=0em, leftmargin=1.6em]
        \item $\grad_{w, \eta} \overline F_{\theta, \nu}(w_t, \eta_t) \to 0$, ~and,
        \label{prop:main:am:part:ncvx}
        \vspace*{0.8ex}
        \item $\dist(0, \partial F_{\theta, \nu}(w_t)) \to 0$.
        \label{prop:main:am:part:subdiff}
    \end{enumerate}
    Furthermore, if each $F_k$ is convex, 
    then almost surely,
    \begin{enumerate}[resume, label=(\alph*), nolistsep, topsep=0em, leftmargin=1.6em]
        \item $F_{\theta, \nu}(w_t) \to \min F_{\theta, \nu}$, ~and,
        \label{prop:main:am:part:cvx1}
        \vspace*{0.7ex}
        \item $\limsup_{t \to \infty} 
         F_\theta(w_t) \le 
         \min F_\theta + \delta$.
         \label{prop:main:am:part:cvx2}
    \end{enumerate}
\end{proposition}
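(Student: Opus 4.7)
The plan is to recast Algorithm~\ref{algo:a:fed:proposed:am-raw} as an inexact descent on the marginal $F_{\theta,\nu}(w) = \min_\eta \overline F_{\theta,\nu}(w, \eta)$ and apply the Robbins--Siegmund almost-sure supermartingale theorem. First I would verify two preliminary facts. (i)~A chain-rule computation, using that $g'_\nu \in [0,1]$ is $(1/\nu)$-Lipschitz together with the $B$-Lipschitz and $L$-smooth assumptions on each $F_k$, shows that $\overline F_{\theta,\nu}(\cdot,\eta)$ is $L'$-smooth uniformly in $\eta$ with $L' = (B^2/\nu + L)/\theta$. (ii)~Since $\eta_t$ exactly minimizes the smooth convex function $\overline F_{\theta,\nu}(w_t,\cdot)$, a Danskin/envelope identity gives $\nabla_w \overline F_{\theta,\nu}(w_t,\eta_t) \in \partial F_{\theta,\nu}(w_t)$ in the Clarke sense, with equality to $\nabla F_{\theta,\nu}(w_t)$ whenever the $\eta$-minimizer at $w_t$ is unique. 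This reduces the subdifferential in part~\ref{prop:main:am:part:subdiff} to tracking a smooth gradient along the iterates.

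Next I would derive a one-step stochastic descent inequality. The descent lemma applied to $L'$-smooth $\overline F_{\theta,\nu}(\cdot,\eta_t)$ yields
\[
    \min_w \overline F_{\theta,\nu}(w,\eta_t) \le \overline F_{\theta,\nu}(w_t,\eta_t) - \frac{1}{2L'}\|\nabla_w \overline F_{\theta,\nu}(w_t,\eta_t)\|^2 .
\]
Combining this with the inexactness bound~\eqref{eq:main:am:inexactness} and the elementary inequality $F_{\theta,\nu}(w_{t+1}) \le \overline F_{\theta,\nu}(w_{t+1},\eta_t)$ (from the $\eta$-step at iteration $t{+}1$), conditioning on $\sigma(w_t)$ gives
\[
    \expect[F_{\theta,\nu}(w_{t+1})\mid \sigma(w_t)] \le F_{\theta,\nu}(w_t) - \frac{1}{2L'}\|\nabla_w \overline F_{\theta,\nu}(w_t,\eta_t)\|^2 + \epsilon_t .
\]
After checking that $F_{\theta,\nu}$ is bounded below (which follows from $g_\nu \ge 0$ and, WLOG, $F_k \ge 0$), Robbins--Siegmund with $\sum_t \epsilon_t < \infty$ delivers almost sure convergence of $F_{\theta,\nu}(w_t)$ together with $\sum_t \|\nabla_w \overline F_{\theta,\nu}(w_t,\eta_t)\|^2 < \infty$ almost surely, hence $\nabla_w \overline F_{\theta,\nu}(w_t,\eta_t) \to 0$ almost surely. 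Combined with $\nabla_\eta \overline F_{\theta,\nu}(w_t,\eta_t)=0$ from the exact $\eta$-step, this proves parts~\ref{prop:main:am:part:ncvx} and~\ref{prop:main:am:part:subdiff} via the envelope identity.

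For the convex case, Property~\ref{property:main:cvx} propagates convexity to $\overline F_{\theta,\nu}$ and $F_{\theta,\nu}$. Since $\nabla F_{\theta,\nu}(w_t) \to 0$ a.s.\ and $F_{\theta,\nu}(w_t)$ converges a.s., a standard cluster-point argument (any cluster point of a vanishing-gradient sequence of a convex smooth function is a minimizer) forces the limit to be $\min F_{\theta,\nu}$, proving part~\ref{prop:main:am:part:cvx1}. For part~\ref{prop:main:am:part:cvx2}, a direct case analysis on the three pieces of $g_\nu$ verifies the sandwich $(\rho)_+ \le g_\nu(\rho) \le (\rho)_+ + \nu/2$, which lifts to $F_\theta(w) \le F_{\theta,\nu}(w) \le F_\theta(w) + \nu/(2\theta) = F_\theta(w) + \delta$. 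Hence $\min F_{\theta,\nu} \le \min F_\theta + \delta$, and $\limsup_t F_\theta(w_t) \le \lim_t F_{\theta,\nu}(w_t) \le \min F_\theta + \delta$.

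The most delicate step is making the cluster-point argument in part~\ref{prop:main:am:part:cvx1} rigorous without a priori boundedness of $\{w_t\}$: a convex smooth function can have vanishing-gradient sequences whose values converge but not to the infimum if its level sets are unbounded. A clean remedy is to exploit convexity together with the envelope identity to establish Fej\'er monotonicity of $\|w_t - w^\star\|^2$ up to summable perturbations (for any $w^\star \in \arg\min F_{\theta,\nu}$), which secures boundedness of the iterates and closes part~\ref{prop:main:am:part:cvx1}.
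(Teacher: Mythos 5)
Your argument for parts~\ref{prop:main:am:part:ncvx}, \ref{prop:main:am:part:subdiff} and~\ref{prop:main:am:part:cvx2} follows essentially the same route as the paper: the same smoothness constant $L_w=(L+B^2/\nu)/\theta$, the same one-step inequality combining the descent lemma with \eqref{eq:main:am:inexactness} and $F_{\theta,\nu}(w_{t+1})\le \overline F_{\theta,\nu}(w_{t+1},\eta_t)$, and the same identification of $\grad_w\overline F_{\theta,\nu}(w_t,\eta_t)$ with an element of the Clarke subdifferential of the marginal $F_{\theta,\nu}$. Invoking Robbins--Siegmund instead of summing total expectations is a harmless variant (the paper sums $\expect\normsq{\grad_w\overline F_{\theta,\nu}(w_\tau,\eta_\tau)}$ and concludes a.s.\ finiteness of the random series); it buys you a.s.\ convergence of $F_{\theta,\nu}(w_t)$, which you then lean on in part~\ref{prop:main:am:part:cvx1}. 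One point you treat as routine but which the paper does not: the ``Danskin/envelope identity'' for the \emph{minimum} over $\eta$ of a function that is nonconvex in $w$ is not the classical Danskin theorem; the paper justifies $\partial F_{\theta,\nu}(w)=\conv Y_{\theta,\nu}(w)$ via Rockafellar--Wets' parametric-minimization theorem (Lemma~\ref{lem:diff}), whose hypothesis of uniform level-boundedness has to be verified separately (Lemma~\ref{lem:levelset}). Your sketch should at least record why that hypothesis holds.

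The genuine gap is in your remedy for part~\ref{prop:main:am:part:cvx1}. You correctly observe that a vanishing-gradient sequence of a smooth convex function need not have values converging to the minimum without boundedness of the iterates, but the fix you propose --- quasi-Fej\'er monotonicity of $\normsq{w_t-w^\star}$ --- does not follow from the algorithm as specified. The $w$-step is a black box constrained only by the expected-suboptimality bound \eqref{eq:main:am:inexactness}; $w_{t+1}$ may be any near-minimizer of $\overline F_{\theta,\nu}(\cdot,\eta_t)$, and nothing ties its distance to $w^\star$ (or to $w_t$) to the previous distance plus a summable error. Fej\'er-type arguments require an explicit nonexpansive update rule (a prescribed gradient or proximal step), which is precisely what the meta-algorithm abstracts away. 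For what it is worth, the paper's own proof of this part (Corollary~\ref{cor:a:am:stoc:convex}) simply writes $0\le \overline F_{\theta,\nu}(\overline w_t)-\min\overline F_{\theta,\nu}\le \norm{\grad\overline F_{\theta,\nu}(\overline w_t)}\,\norm{\overline w_t-\overline w^\star}\to 0$, which also tacitly assumes $\norm{\overline w_t-\overline w^\star}$ stays bounded; so you have put your finger on a real subtlety, but you have not closed it, and presenting Fej\'er monotonicity as a ``clean remedy'' overstates what the stated hypotheses deliver.
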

The proof
is given in Appendix~\ref{sec:a:algo}.
The proofs of parts~\ref{prop:main:am:part:ncvx},
\ref{prop:main:am:part:cvx1} and~\ref{prop:main:am:part:cvx2}
are elementary, while part~\ref{prop:main:am:part:subdiff}
is more technical.

\begin{algorithm}[tb]
	\caption{The \newfl{} algorithm}
	\label{algo:main:fed:proposed}
\begin{algorithmic}[1]
		\Require $N$ devices $\{(q_k, \alpha_k)\}_{k \in [N]}$, 
	   		number of local updates $n_\mathrm{local}$, 
	   		learning rate sequence $(\gamma_t)$,
	   		devices per round $m$,
	   		initial iterate $w_0$, conformity level $\theta \in (0, 1)$
	   	\Statex \textbf{Server executes:}
	    \For{$t=1,2, \cdots$}
	    	\State Sample devices $S_t \sim \mathrm{Unif}([N])^m$
	    	\State Broadcast $w_t$ to each device $k \in S_t$
	    	\State Each\;$k \in S_t$\;computes\;$F_k(w_t)$\;and\;sends\;to\;server
	    	\State $\eta_t \gets \textit{Quantile}\left(1-\theta,  \big(F_k(w_t), \alpha_k)\big)_{k\in S_t}\right)$
	    	\State Filter out $S_t' = \{k \in S_t \, : \, F_k(w_t) \ge \eta_t\}$ 
	    	\label{line:main:filtering}
	    	\ParFor{each device $k \in S_t'$}
	    		\State $w_{k, t} \gets $ \Call{LocalUpdate}{$k, w_t$}
	    	\EndParFor %
	   		\State $w_t \gets \textit{SecureAggregate}\left( \{(w_{k,t}, \alpha_{k} )\}_{k \in S_t'}\right)$
	    \EndFor
	    \Statex
	    \Function{LocalUpdate}{$k, w, \eta$}
	    \Comment Run on device $k$
	    
	        \For{$i = 1, \cdots, n_\mathrm{local}$}
	    		\State Update $w \gets w - \gamma_t \grad f(w;\xi_i)$ using $\xi_i \sim q_k$
	    	\EndFor
	    	\Return $w$
	    \EndFunction
\end{algorithmic}
\end{algorithm}

\myparagraph{Practical Implementation}
To obtain a practical algorithm, we
modify, without proof, Algorithm~\ref{algo:a:fed:proposed:am-raw} 
to respect system-level constraints of federated learning at scale.

Firstly, we estimate the $\eta$-step of Algorithm~\ref{algo:a:fed:proposed:am-raw}
from a sample of devices. This is because devices are unavailable when offline, and device
availability typically follows a diurnal pattern.
Difficulties %
caused by the bias of quantile estimators makes
the analysis of this scheme beyond 
this work.

Secondly, we execute the $w$-step as a single round of \fedavg.
Communication is often the bottleneck in the federated setting, while local computation is relatively cheap.
This heuristic allows us to make more progress at a
lower communication cost than strictly following 
Proposition~\ref{prop:main:am:converge:asymp}, i.e,
solving the $w$-step with decreasing suboptimality.

Lastly, we perform the $\eta$-step and 
the $w$-step using the same sample of devices.
With these modifications in place, the resulting algorithm is given in Algorithm~\ref{algo:main:fed:proposed}.
As illustrated in Figure~\ref{fig:main:schematic_of_algo},
Algorithm~\ref{algo:main:fed:proposed}
may be viewed as 
an augmentation of \fedavg{} with an additional step of filtering devices (Line~\ref{line:main:filtering}).
In particular, the aggregation of model parameters 
can be performed using secure aggregation.

As secure aggregation 
dominates the running time in the
federated setting due to its
expensive communication,
Algorithm~\ref{algo:main:fed:proposed} has the 
same per-iteration complexity as \fedavg{}.

\myparagraph{Privacy} 
Algorithm~\ref{algo:main:fed:proposed} 
reveals neither the data nor the model parameters of the client devices, the latter via the use of 
secure aggregation. However, the algorithm, as it is currently stated, requires 
each selected client devices to reveal its loss (a scalar) on the current model to the server. 
Appendix~\ref{sec:a:algo:quantile} presents a variant of  Algorithm~\ref{algo:main:fed:proposed} 
which ensures the same privacy-preservation of \fedavg{} at the cost of extra communication. This is achieved by implementing the quantile calculation using 
multiple secure aggregation calls.

\section{Numerical Simulations} \label{sec:expt}
\begin{table*}[t!]
\caption{\small{Dataset description and statistics.}}
\label{table:expt:dataset:descr}
\begin{center}
\begin{adjustbox}{max width=0.9\linewidth}
\begin{tabular}{lccccccc}
\toprule
Dataset & Task & \#Classes & 
\begin{tabular}{c} \#Train \end{tabular} & 
\begin{tabular}{c} \#Test \end{tabular} & \multicolumn{3}{c}{\#Points per train device} \\
 & & & Devices & Devices &  Median & Min & Max  \\
\midrule

EMNIST & \begin{tabular}{c} Image Classification \end{tabular} & 
    62 & 865  & 865 & 179 & 101 & 447 \\
    
Sent140 & \begin{tabular}{c} Sentiment Analysis \end{tabular} &
    2 & 438  & 439 & 69 & 51 & 549 \\
    
Shakespeare & \begin{tabular}{c} Character-level Language Modeling \end{tabular} & 
    53 & 544  & 545 & 1288 & 101 & 66963 \\

\bottomrule
\end{tabular}
\end{adjustbox}
\end{center}
\vskip -0.1in
\end{table*}

We now experimentally compare the performance of \newfl{} with \fedavg{}. 
The simulations were 
implemented in Python using automatic differentiation provided by PyTorch,
while the data was preprocessed using LEAF~\citep{caldas2018leaf}.
Full details of the simulations are given in Appendix~\ref{sec:a:expt}. A software package implementing 
the proposed algorithm and scripts to reproduce experimental results can be found in~\cite{simplicial_fl_repo}.

\myparagraph{Datasets, Tasks and Models}
We consider three tasks, whose 
datasets and numbers of train and test 
devices are described in Table~\ref{table:expt:dataset:descr}.
We weigh training device $k$ by 
the number of datapoints on the device.
All models were trained with the (multinomial, if applicable) logistic loss and evaluated with 
the misclassification error.

\begin{enumerate}[label=(\alph*),nolistsep,topsep=0em, leftmargin=1.6em]
\item{\textit{Character Recognition}:}
We use the EMNIST dataset~\citep{cohen2017emnist}, 
where the input $x$ is a $28\times 28$ grayscale image of a handwritten character
and the output $y$ is its identification (0-9, a-z, A-Z). 
Each device is a writer of the character $x$.
We use a linear model and 
a convolutional neural network (ConvNet).

\item{\textit{Sentiment Analysis}:}
We learn a binary classifier over the Sent140 dataset~\citep{go2009twitter}
where the input $x$ is a Twitter post
and the output $y=\pm1$ is its sentiment.
Each device corresponds to a distinct Twitter user.
The linear model is built on the average of the GloVe embeddings~\cite{pennington2014glove} 
of the words of the post, while the non-convex model
is a Long Short Term Memory model~\citep{hochreiter1997long} built on the GloVe embeddings. We refer to the latter as ``RNN''.

\item{\textit{Character-Level Language Modeling}:}
We learn a character-level language model over the 
Complete Works of \citet{shakespeare},
formulated as a multiclass classification problem, where 
the input $x$ is a window of 20 characters, the output $y$ is the next (i.e., 21st) character.
Each device is a role from a play (e.g., Brutus from The Tragedy of Julius Caesar).
The model is a Gated Recurrent Unit~\cite{cho2014learning},
which we refer to as ``RNN''.

\end{enumerate}

\myparagraph{Hyperparameters and Evaluation Metrics}
Hyperparameters of \fedavg{} were chosen similar 
to the defaults of~\citep{mcmahan2017communication}. 
We fixed an iteration budget for each dataset 
and tuned a learning rate schedule using grid search 
to find the best terminal loss averaged over training devices for \fedavg{}.
The {\em same} values were used on all \newfl{} runs without further tuning.
In addition, the linear models also use a
$\ell_2^2$ regularization, which was tuned 
separately for \fedavg{}
and each value of $\theta$ for \newfl{}.
The regularization parameter
was selected to minimize the 
$90$\textsuperscript{th} percentile of the misclassification error
on a held-out subset of training devices. 
Each \newfl{} experiment was repeated for
conformity levels 
$\theta = 0.8, 0.5, 0.1$.
Recall that we cannot actually measure the conformity of a test device due to privacy restrictions.

We track the loss $F(\cdot\,;q_k)$ incurred on each training device and the misclassification error on each test device. We summarize these 
distributions with their mean and the $90$\textsuperscript{th} percentile.
We use the latter to gauge 
performance on devices with low conformity. Other percentiles of these distributions,
and more simulation results are presented in  Appendix~\ref{sec:a:expt}.
We report each metric 
averaged over 5 different random seeds.

\begin{table*}[t!]
\caption{\small{
$90$\textsuperscript{th} percentile of the distribution 
of misclassification error (in $\%$) on the test devices.
}}
\label{table:expt:dataset:results_90}
\begin{center}
\begin{adjustbox}{max width=0.8\linewidth}
\begin{tabular}{lccccc}
\toprule
Dataset     & Model   & \fedavg{}       & 
\begin{tabular}{c} \newfl{}, $\theta =0.8$ \end{tabular}&
\begin{tabular}{c} \newfl{}, $\theta =0.5$ \end{tabular}&
\begin{tabular}{c} \newfl{}, $\theta =0.1$ \end{tabular}
\\ 
\midrule
 \multirow{2}{*}{EMNIST}      
            & Linear  & $ 49.66\pm0.67$ & $49.10\pm0.24$ & $\mathbf{48.44}\pm0.38$ & $50.34\pm0.95$ \\
            & ConvNet & $28.46\pm1.07$ & $26.23\pm1.15$ & $\mathbf{23.69}\pm0.94$ & $25.46\pm2.77$ \\
\midrule
\multirow{2}{*}{Sent140}     
            & Linear  & $  46.83\pm0.54$ & $\mathbf{46.44}\pm0.38$ & $46.64\pm0.41$ & $51.39\pm1.07$ \\
            & RNN    & $49.67\pm3.95$ & $\mathbf{46.46}\pm4.39$ & $50.48\pm8.24$ & $86.45\pm10.95$ \\
\midrule
Shakespeare & RNN    & 
$ \mathbf{46.45}\pm0.11$ & $\mathbf{46.33}\pm0.10$ & $\mathbf{46.32}\pm0.13$ & $47.17\pm0.14$ \\
\bottomrule
\end{tabular}
\end{adjustbox}
\end{center}
\end{table*}

\begin{table*}[t!]
\caption{\small{
Mean of the distribution of misclassification error (in $\%$) on the test devices.
}}
\label{table:expt:dataset:results_mean}
\begin{center}
\begin{adjustbox}{max width=0.8\linewidth}
\begin{tabular}{lccccc}
\toprule
Dataset     & Model   & \fedavg{}       & 
\begin{tabular}{c} \newfl{}, $\theta =0.8$ \end{tabular}&
\begin{tabular}{c} \newfl{}, $\theta =0.5$ \end{tabular}&
\begin{tabular}{c} \newfl{}, $\theta =0.1$ \end{tabular}
\\ 
\midrule
 \multirow{2}{*}{EMNIST}      
            & Linear  & $\mathbf{34.38}\pm0.38$ & $34.49\pm0.26$ & $35.02\pm0.20$ & $38.33\pm0.48$ \\
            & ConvNet & $16.64\pm0.50$ & $16.08\pm0.40$ & $\mathbf{15.49}\pm0.30$ & $16.37\pm1.03$ \\
\midrule
\multirow{2}{*}{Sent140}     
            & Linear  & $ 34.75\pm0.31$ & $\mathbf{34.41}\pm0.22$ & $35.29\pm0.25$ & $37.79\pm0.89$ \\
            & RNN    & $\mathbf{30.16}\pm0.44$ & $30.31\pm0.33$ & $33.59\pm2.44$ & $51.98\pm11.81$ \\
\midrule
Shakespeare & RNN    & 
$ \mathbf{42.90}\pm0.04$ & $\mathbf{42.93}\pm0.05$ & $43.13\pm0.05$ & $44.18\pm0.12$ \\
\bottomrule
\end{tabular}
\end{adjustbox}
\end{center}
\end{table*}

\myparagraph{Experimental Results}
Table~\ref{table:expt:dataset:results_90} lists the $90$\textsuperscript{th} 
percentile of the misclassification error 
on the test devices on the final model at the end
of our iteration budget.
We observe that for all datasets,
the $90$\textsuperscript{th} 
percentile of the test error 
is smaller for \newfl{} than for \fedavg{}
at some value of $\theta$, and often at multiple
values of $\theta$. This highlights the benefit 
of the \newfl{} framework in dealing with heterogeneous device distributions. Table~\ref{table:expt:dataset:results_mean}
records the mean of the distribution of test errors.
We see that \newfl{} is on par with \fedavg{} on the mean
of the misclassification errors on the test devices, and
sometimes even better. 

\myparagraph{Performance Across Devices}
We now visualize the misclassification error across 
all test devices in a histogram in Figure~\ref{fig:main:expt:hist}. We note that \newfl{}
exhibits thinner upper tails on the error, which shows an improved performance on devices which do not 
conform with the population.

Next, Figure~\ref{fig:main:expt:scatter} shows a scatter plot
of the loss (resp. error) and the number of datapoints on a training (resp. testing) device.
Observe, firstly, that \newfl{} reduces the variance of 
of the loss on the train devices.
Secondly, note that amongst test devices with a small number 
of data points (e.g., $<200$ for EMNIST or $<100$ for Sent140), \newfl{} reduces the variance of the misclassification error. Both effects are more pronounced on 
the neural network models.

Both these plots are indicative of an improved user experience of \newfl{} on devices which do not conform as well as devices with little data.

\myparagraph{Performance Across Iterations}
Figure~\ref{fig:main:expt:optim} compares the convergence of Algorithm~\ref{algo:main:fed:proposed} with 
 \fedavg, measured in terms of the number of secure 
aggregation calls. 
We see that
\newfl{} is competitive with \fedavg{} in convergence rate, 
despite using the same hyperparameters which were tuned for \fedavg.

     \begin{figure*}[t]
    		\centering
      \adjincludegraphics[width=0.9\linewidth,trim={0 20pt 0 28.5pt},clip=true]{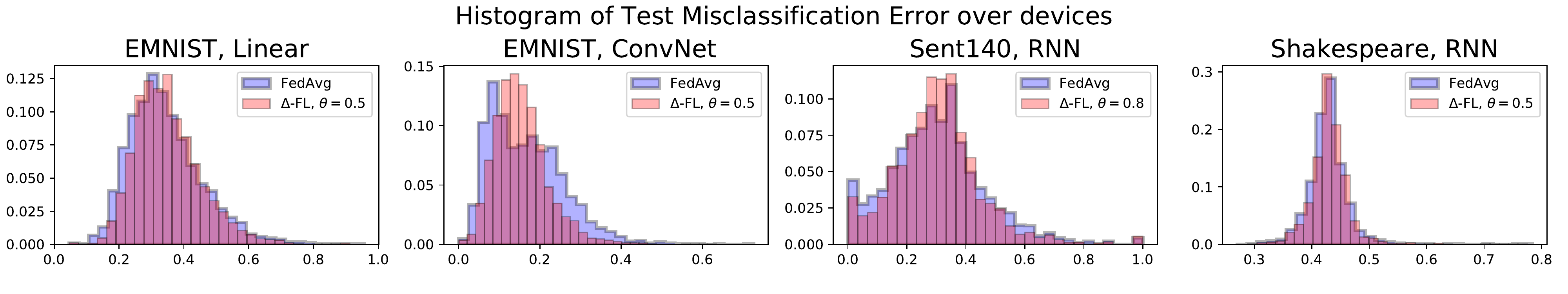}
     \caption{\small{
     Histogram of misclassification error on test devices.
     }}
     \label{fig:main:expt:hist}
     \end{figure*}
    
     \begin{figure*}[t]
		\centering
		
      \adjincludegraphics[width=0.9\linewidth]{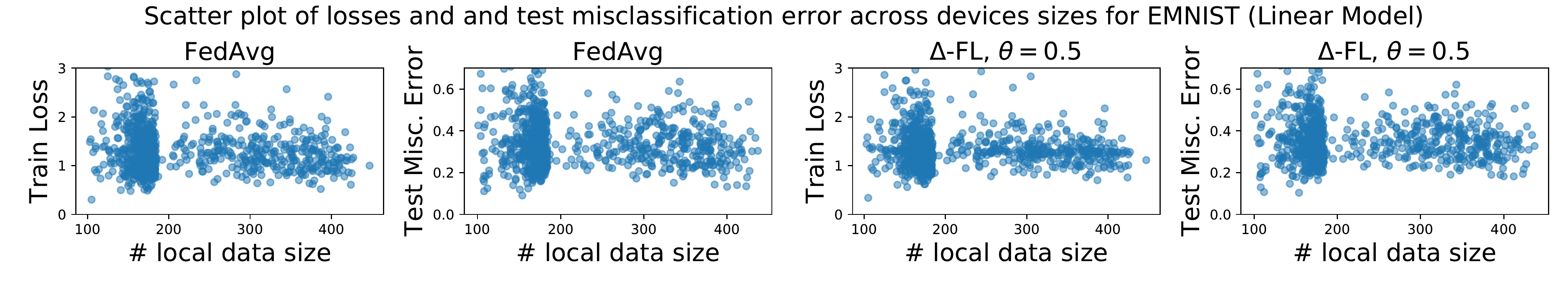}
      
      \adjincludegraphics[width=0.9\linewidth]{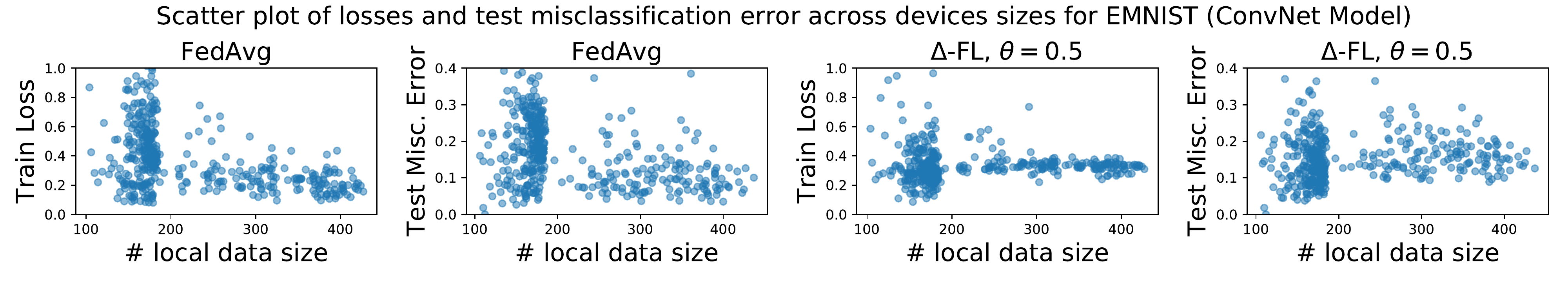}
      
      \adjincludegraphics[width=0.9\linewidth]{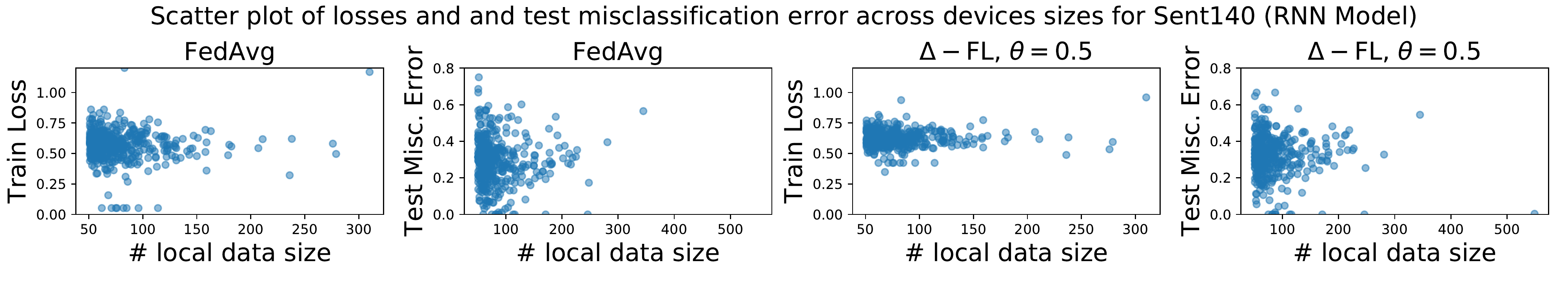}
     \caption{\small{
     Scatter plots of loss on training devices and misclassification error on test devices against the 
     number of local data points on a device.
     }}
     \label{fig:main:expt:scatter}
     \end{figure*}

    \begin{figure*}[t]
		\centering
        \adjincludegraphics[width=0.9\linewidth, trim={0 40pt 0 5pt}, clip=true]{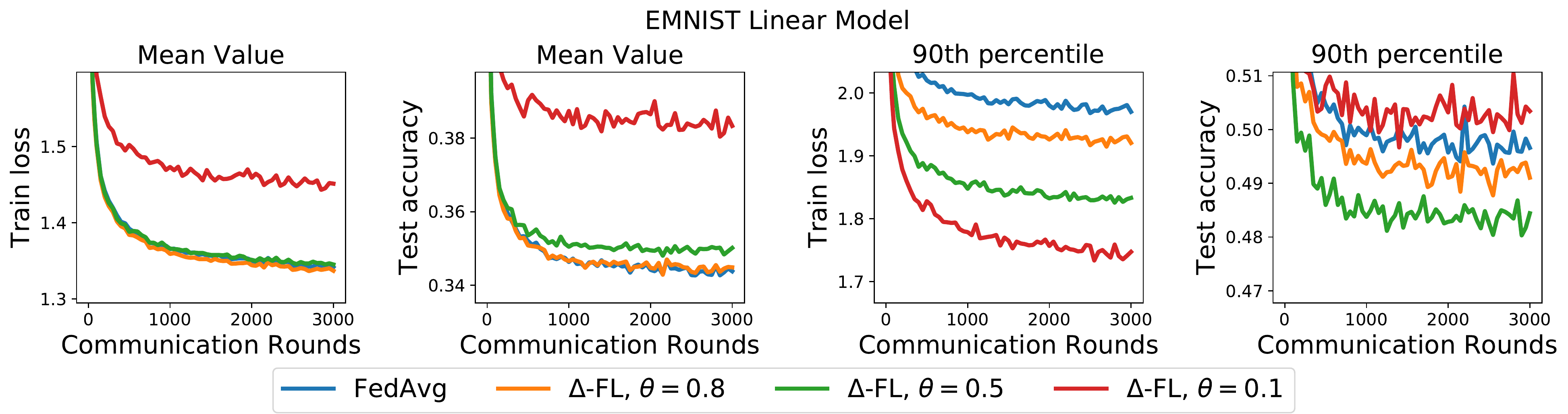}
        \adjincludegraphics[width=0.9\linewidth]{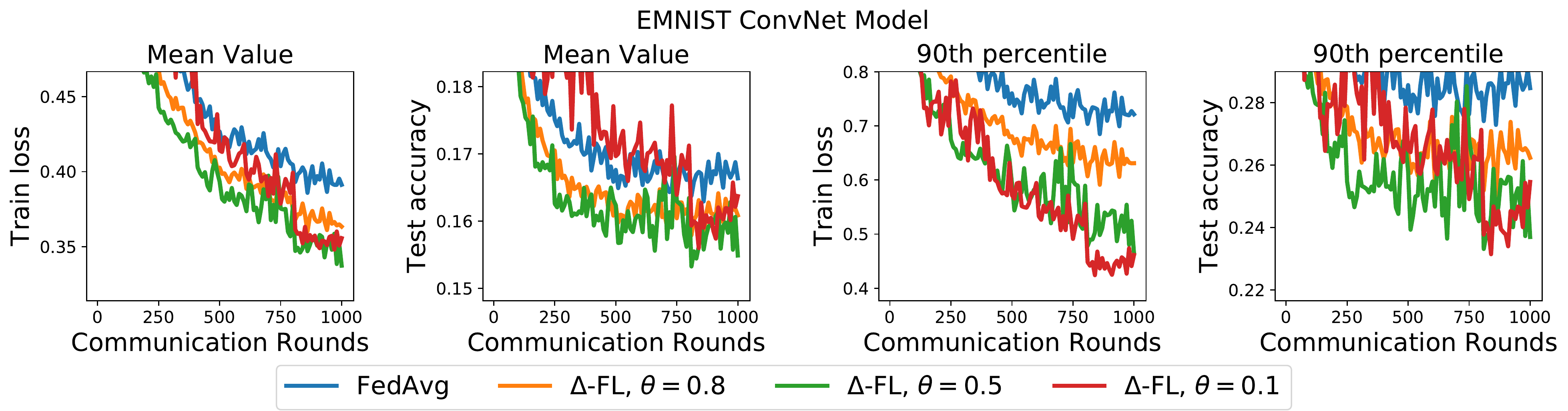}
     \caption{\small{Performance across iterations of loss on training devices and misclassification error of test devices.}}
     \label{fig:main:expt:optim}
     \end{figure*} 

\section{Related Work} \label{sec:related}
Federated learning was introduced by~\citet{mcmahan2017communication}
as a distributed learning approach to handle on-device machine learning,
with several recent extensions~\citep{konevcny2016federated,bonawitz2017practical,smith2017federated,sahu2018convergence,sattler2019robust,pillutla2019robust} -- see~\citep{kairouz2019flsurvey} for a survey.
We address non-conformity of heterogeneous client devices, which is broadly applicable in these settings. 
The classical field of distributed optimization~\citep{bertsekas1989parallel}
has seen a recent surge of interest with 
frameworks suited to
synchronous centralized~\citep{smith2018cocoa,ma2017distributed},
decentralized~\citep{he2018cola}
and asynchronous settings~\citep{leblond2018improved}.
Past works~\citep{dick2017data,eichner2019semi} have considered model plurality (i.e., an ensemble of global models) in the context of distributed learning.

Robust optimization~\citep{ben2009robust},
which espouses hedging against uncertainty by taking a worst-case approach, has become popular in machine
learning~\citep{lee2018minimax,duchi2019variance,kuhn2019wasserstein}. This approach is closely related to the risk measure approach studied in economics and finance~\citep{artzner1999coherent,rockafellar2000optimization,ben2007old} . The work we present here considers a novel use of 
the superquantile, a popular risk measure, in 
handling device heterogeneity in federated learning with a focus
on engineering plausibility.

Past works which considered the use of 
superquantiles in a centralized setting 
often used linear programming 
or convex programming approaches 
including interior point algorithms~\citep{rockafellar2014superquantile,rockafellar2018superquantile}.
In this work, we present a convergent 
inexact alternate minimization algorithm 
and show how to practically implement in 
the case of federated learning.

The paper \citep{mohri2019agnostic} gave generalization bounds
on test distributions which can be expressed as a mixture of training distributions.
The technical tools used to address fairness in 
federated learning in~\citep{mohri2019agnostic,li2020fair} bear a resemblance to ones used here, albeit to address different problems. This connection could however potentially be leveraged to obtain statistical results for our approach. We focus in this paper on the practical optimization aspects of our approach.
The orthogonal question of
personalization of federated learning models~\citep[e.g.][]{jiang2019improving}
is also an interesting avenue for future work.

\section{Conclusions} \label{sec:conclusion}
We present a federated learning framework 
to train models
better suited to heterogeneity of client data distributions in general, 
and non-conforming users in particular.
This is achieved by minimizing a 
parameterized superquantile-based
objective with the parameter 
ranging over conformity levels of the clients.
We study an optimization algorithm to minimize this
objective and present a
practical variant adapted to the engineering constraints of federated learning.
We present compelling numerical evidence in support of the
proposed framework on linear models and neural networks for various real-world tasks. 
\paragraph{Acknowledgements}
\small{
The authors would like to thank Zachary Garrett, Peter Kairouz, Jakub Kone{\v{c}}n{\'{y}}, Brendan McMahan, 
Krzysztof Ostrowski and Keith Rush
 for fruitful discussions.
This work was first presented at the Workshop on Federated Learning and Analytics in June 2019. 
This work was supported by NSF CCF-1740551, NSF DMS-1839371, 
the Washington Research Foundation for innovation in Data-intensive Discovery, 
the program ``Learning in Machines and Brains'', and faculty research awards. 
} 

\clearpage
\bibliography{bib/fl}

\begin{thebibliography}{46}
\providecommand{\natexlab}[1]{#1}
\providecommand{\url}[1]{\texttt{#1}}
\expandafter\ifx\csname urlstyle\endcsname\relax
  \providecommand{\doi}[1]{doi: #1}\else
  \providecommand{\doi}{doi: \begingroup \urlstyle{rm}\Url}\fi

\bibitem[sim(2020)]{simplicial_fl_repo}
\url{https://github.com/krishnap25/simplicial-fl}, 2020.

\bibitem[Artzner et~al.(1999)Artzner, Delbaen, Eber, and
  Heath]{artzner1999coherent}
P.~Artzner, F.~Delbaen, J.-M. Eber, and D.~Heath.
\newblock Coherent {M}easures of {R}isk.
\newblock \emph{Mathematical finance}, 9\penalty0 (3):\penalty0 203--228, 1999.

\bibitem[Beck and Teboulle(2012)]{beck2012smoothing}
A.~Beck and M.~Teboulle.
\newblock Smoothing and {F}irst {O}rder {M}ethods: {A} {U}nified {F}ramework.
\newblock \emph{SIAM Journal on Optimization}, 22\penalty0 (2):\penalty0
  557--580, 2012.

\bibitem[Ben-Tal and Teboulle(2007)]{ben2007old}
A.~Ben-Tal and M.~Teboulle.
\newblock An {O}ld-{N}ew {C}oncept of {C}onvex {R}isk {M}easures: {T}he
  {O}ptimized {C}ertainty {E}quivalent.
\newblock \emph{Mathematical Finance}, 17\penalty0 (3):\penalty0 449--476,
  2007.

\bibitem[Ben-Tal et~al.(2009)Ben-Tal, El~Ghaoui, and Nemirovski]{ben2009robust}
A.~Ben-Tal, L.~El~Ghaoui, and A.~Nemirovski.
\newblock \emph{Robust {O}ptimization}, volume~28.
\newblock Princeton University Press, 2009.

\bibitem[Bertsekas and Tsitsiklis(1989)]{bertsekas1989parallel}
D.~P. Bertsekas and J.~N. Tsitsiklis.
\newblock \emph{Parallel and {D}istributed {C}omputation: {N}umerical
  {M}ethods}, volume~23.
\newblock Prentice hall Englewood Cliffs, NJ, 1989.

\bibitem[Bonawitz et~al.(2017)Bonawitz, Ivanov, Kreuter, Marcedone, McMahan,
  Patel, Ramage, Segal, and Seth]{bonawitz2017practical}
K.~Bonawitz, V.~Ivanov, B.~Kreuter, A.~Marcedone, H.~B. McMahan, S.~Patel,
  D.~Ramage, A.~Segal, and K.~Seth.
\newblock Practical {S}ecure {A}ggregation for {P}rivacy-{P}reserving {M}achine
  {L}earning.
\newblock In \emph{{ACM} {SIGSAC} Conference on Computer and Communications
  Security}, pages 1175--1191, 2017.

\bibitem[Caldas et~al.(2018)Caldas, Wu, Li, Kone{\v{c}}n{\'{y}}, McMahan,
  Smith, and Talwalkar]{caldas2018leaf}
S.~Caldas, P.~Wu, T.~Li, J.~Kone{\v{c}}n{\'{y}}, H.~B. McMahan, V.~Smith, and
  A.~Talwalkar.
\newblock {LEAF:} {A} benchmark for federated settings.
\newblock \emph{arXiv Preprint}, 2018.

\bibitem[Cho et~al.(2014)Cho, Van~Merri{\"e}nboer, Gulcehre, Bahdanau,
  Bougares, Schwenk, and Bengio]{cho2014learning}
K.~Cho, B.~Van~Merri{\"e}nboer, C.~Gulcehre, D.~Bahdanau, F.~Bougares,
  H.~Schwenk, and Y.~Bengio.
\newblock {Learning Phrase Representations using RNN Encoder-Decoder for
  Statistical Machine Translation}.
\newblock \emph{arXiv Preprint}, 2014.

\bibitem[Clarke(1990)]{clarke1990optimization}
F.~H. Clarke.
\newblock \emph{Optimization and nonsmooth analysis}, volume~5.
\newblock Siam, 1990.

\bibitem[Cohen et~al.(2017)Cohen, Afshar, Tapson, and van
  Schaik]{cohen2017emnist}
G.~Cohen, S.~Afshar, J.~Tapson, and A.~van Schaik.
\newblock {EMNIST}: an extension of {MNIST} to handwritten letters.
\newblock \emph{arXiv Preprint}, 2017.

\bibitem[Dick et~al.(2017)Dick, Li, Pillutla, White, Balcan, and
  Smola]{dick2017data}
T.~Dick, M.~Li, V.~K. Pillutla, C.~White, N.~Balcan, and A.~Smola.
\newblock {Data Driven Resource Allocation for Distributed Learning}.
\newblock In \emph{Artificial Intelligence and Statistics}, pages 662--671,
  2017.

\bibitem[Duchi and Namkoong(2019)]{duchi2019variance}
J.~C. Duchi and H.~Namkoong.
\newblock Variance-based {R}egularization with {C}onvex {O}bjectives.
\newblock \emph{Journal of Machine Learning Research}, 20\penalty0
  (68):\penalty0 1--55, 2019.

\bibitem[Eichner et~al.(2019)Eichner, Koren, Mcmahan, Srebro, and
  Talwar]{eichner2019semi}
H.~Eichner, T.~Koren, B.~Mcmahan, N.~Srebro, and K.~Talwar.
\newblock {Semi-Cyclic Stochastic Gradient Descent}.
\newblock In \emph{International Conference on Machine Learning}, pages
  1764--1773, 2019.

\bibitem[Ferguson(1967)]{ferguson1967mathematical}
T.~S. Ferguson.
\newblock \emph{Mathematical {S}tatistics: A {D}ecision {T}heoretic
  {A}pproach}.
\newblock Academic press, 1967.

\bibitem[Go et~al.(2009)Go, Bhayani, and Huang]{go2009twitter}
A.~Go, R.~Bhayani, and L.~Huang.
\newblock Twitter {S}entiment {C}lassification using {D}istant {S}upervision.
\newblock \emph{CS224N Project Report, Stanford}, page 2009, 2009.

\bibitem[He et~al.(2018)He, Bian, and Jaggi]{he2018cola}
L.~He, A.~Bian, and M.~Jaggi.
\newblock {COLA}: {D}ecentralized {L}inear {L}earning.
\newblock In \emph{Advances in Neural Information Processing Systems 31}, pages
  4541--4551, 2018.

\bibitem[Hiriart-Urruty and Lemar{\'e}chal(1996)]{hiriart1996convex}
J.-B. Hiriart-Urruty and C.~Lemar{\'e}chal.
\newblock \emph{Convex {A}nalysis and {M}inimization {A}lgorithms {I}:
  {F}undamentals}.
\newblock Grundlehren der mathematischen Wissenschaften. Springer Berlin
  Heidelberg, 1996.
\newblock ISBN 9783540568506.

\bibitem[Hochreiter and Schmidhuber(1997)]{hochreiter1997long}
S.~Hochreiter and J.~Schmidhuber.
\newblock Long {S}hort-{T}erm {M}emory.
\newblock \emph{Neural computation}, 9\penalty0 (8):\penalty0 1735--1780, 1997.

\bibitem[Huang et~al.(2019)Huang, Shea, Qian, Masurkar, Deng, and
  Liu]{huang2019patient}
L.~Huang, A.~L. Shea, H.~Qian, A.~Masurkar, H.~Deng, and D.~Liu.
\newblock Patient {C}lustering {I}mproves {E}fficiency of {F}ederated {M}achine
  {L}earning to {P}redict {M}ortality and {H}ospital stay time using
  {D}istributed {E}lectronic {M}edical {R}ecords.
\newblock \emph{Journal of Biomedical Informatics}, 99:\penalty0 103291, 2019.

\bibitem[Hunter and Lange(2000)]{hunter2000quantile}
D.~R. Hunter and K.~Lange.
\newblock Quantile {R}egression via an {MM} {A}lgorithm.
\newblock \emph{Journal of Computational and Graphical Statistics}, 9\penalty0
  (1):\penalty0 60--77, 2000.

\bibitem[Jiang et~al.(2019)Jiang, Kone{\v{c}}n{\'{y}}, Rush, and
  Kannan]{jiang2019improving}
Y.~Jiang, J.~Kone{\v{c}}n{\'{y}}, K.~Rush, and S.~Kannan.
\newblock Improving {F}ederated {L}earning {P}ersonalization via {M}odel
  {A}gnostic {M}eta {L}earning.
\newblock \emph{arXiv preprint}, 2019.

\bibitem[Kairouz et~al.(2019)Kairouz, McMahan, Avent, Bellet, Bennis, Bhagoji,
  Bonawitz, Charles, Cormode, Cummings, D'Oliveira, Rouayheb, Evans, Gardner,
  Garrett, Gasc{/'{o}}n, Ghazi, Gibbons, Gruteser, Harchaoui, He, He, Huo,
  Hutchinson, Hsu, Jaggi, Javidi, Joshi, Khodak, Kone{\v{c}}n{\'{y}}, Korolova,
  Koushanfar, Koyejo, Lepoint, Liu, Mittal, Mohri, Nock, {\"{O}}zg{\"{u}}r,
  Pagh, Raykova, Qi, Ramage, Raskar, Song, Song, Stich, Sun, Suresh,
  Tram{\`{e}}r, Vepakomma, Wang, Xiong, Xu, Yang, Yu, Yu, and
  Zhao]{kairouz2019flsurvey}
P.~Kairouz, H.~B. McMahan, B.~Avent, A.~Bellet, M.~Bennis, A.~N. Bhagoji,
  K.~Bonawitz, Z.~Charles, G.~Cormode, R.~Cummings, R.~G. D'Oliveira, S.~E.
  Rouayheb, D.~Evans, J.~Gardner, Z.~Garrett, A.~Gasc{/'{o}}n, B.~Ghazi, P.~B.
  Gibbons, M.~Gruteser, Z.~Harchaoui, C.~He, L.~He, Z.~Huo, B.~Hutchinson,
  J.~Hsu, M.~Jaggi, T.~Javidi, G.~Joshi, M.~Khodak, J.~Kone{\v{c}}n{\'{y}},
  A.~Korolova, F.~Koushanfar, S.~Koyejo, T.~Lepoint, Y.~Liu, P.~Mittal,
  M.~Mohri, R.~Nock, A.~{\"{O}}zg{\"{u}}r, R.~Pagh, M.~Raykova, H.~Qi,
  D.~Ramage, R.~Raskar, D.~Song, W.~Song, S.~U. Stich, Z.~Sun, A.~T. Suresh,
  F.~Tram{\`{e}}r, P.~Vepakomma, J.~Wang, L.~Xiong, Z.~Xu, Q.~Yang, F.~X. Yu,
  H.~Yu, and S.~Zhao.
\newblock Advances and open problems in federated learning.
\newblock \emph{arXiv Preprint}, 2019.

\bibitem[Kone{\v{c}}n{\'{y}} et~al.(2016)Kone{\v{c}}n{\'{y}}, McMahan, Yu,
  Richt{\'a}rik, Suresh, and Bacon]{konevcny2016federated}
J.~Kone{\v{c}}n{\'{y}}, H.~B. McMahan, F.~X. Yu, P.~Richt{\'a}rik, A.~T.
  Suresh, and D.~Bacon.
\newblock Federated {L}earning: {S}trategies for {I}mproving {C}ommunication
  {E}fficiency.
\newblock \emph{arXiv Preprint}, 2016.

\bibitem[Kuhn et~al.(2019)Kuhn, Esfahani, Nguyen, and
  Shafieezadeh-Abadeh]{kuhn2019wasserstein}
D.~Kuhn, P.~M. Esfahani, V.~A. Nguyen, and S.~Shafieezadeh-Abadeh.
\newblock {W}asserstein {D}istributionally {R}obust {O}ptimization: {T}heory
  and {A}pplications in {M}achine {L}earning.
\newblock In \emph{Operations Research \& Management Science in the Age of
  Analytics}, pages 130--166. 2019.

\bibitem[Leblond et~al.(2018)Leblond, Pedregosa, and
  Lacoste{-}Julien]{leblond2018improved}
R.~Leblond, F.~Pedregosa, and S.~Lacoste{-}Julien.
\newblock Improved {A}synchronous {P}arallel {O}ptimization {A}nalysis for
  {S}tochastic {I}ncremental {M}ethods.
\newblock \emph{Journal of Machine Learning Research}, 19, 2018.

\bibitem[Lee and Raginsky(2018)]{lee2018minimax}
J.~Lee and M.~Raginsky.
\newblock Minimax statistical learning with {W}asserstein distances.
\newblock In \emph{Advances in Neural Information Processing Systems}, pages
  2687--2696, 2018.

\bibitem[Li et~al.(2020)Li, Sanjabi, and Smith]{li2020fair}
T.~Li, M.~Sanjabi, and V.~Smith.
\newblock Fair {R}esource {A}llocation in {F}ederated {L}earning.
\newblock In \emph{International Conference on Learning Representations}, 2020.

\bibitem[Ma et~al.(2017)Ma, Kone{\v{c}}n{\'{y}}, Jaggi, Smith, Jordan,
  Richt{\'{a}}rik, and Tak{\'{a}}c]{ma2017distributed}
C.~Ma, J.~Kone{\v{c}}n{\'{y}}, M.~Jaggi, V.~Smith, M.~I. Jordan,
  P.~Richt{\'{a}}rik, and M.~Tak{\'{a}}c.
\newblock Distributed optimization with arbitrary local solvers.
\newblock \emph{Optimization Methods and Software}, 32\penalty0 (4):\penalty0
  813--848, 2017.

\bibitem[McMahan et~al.(2017)McMahan, Moore, Ramage, Hampson, and
  y~Arcas]{mcmahan2017communication}
B.~McMahan, E.~Moore, D.~Ramage, S.~Hampson, and B.~A. y~Arcas.
\newblock Communication-{E}fficient {L}earning of {D}eep {N}etworks from
  {D}ecentralized {D}ata.
\newblock In \emph{Artificial Intelligence and Statistics}, pages 1273--1282,
  2017.

\bibitem[Mohri et~al.(2019)Mohri, Sivek, and Suresh]{mohri2019agnostic}
M.~Mohri, G.~Sivek, and A.~T. Suresh.
\newblock Agnostic {F}ederated {L}earning.
\newblock In \emph{International Conference on Machine Learning}, 2019.

\bibitem[Nesterov(2005)]{nesterov2005smooth}
Y.~Nesterov.
\newblock Smooth minimization of non-smooth functions.
\newblock \emph{Mathematical programming}, 103\penalty0 (1):\penalty0 127--152,
  2005.

\bibitem[Nesterov(2013)]{nesterov2013introductory}
Y.~Nesterov.
\newblock \emph{Introductory {L}ectures on {C}onvex {O}ptimization {V}ol. {I}:
  {B}asic course}, volume~87.
\newblock Springer Science \& Business Media, 2013.

\bibitem[Pennington et~al.(2014)Pennington, Socher, and
  Manning]{pennington2014glove}
J.~Pennington, R.~Socher, and C.~D. Manning.
\newblock Glo{V}e: {G}lobal {V}ectors for {W}ord {R}epresentation.
\newblock In \emph{Empirical Methods in Natural Language Processing}, pages
  1532--1543, 2014.

\bibitem[Pillutla et~al.(2018)Pillutla, Roulet, Kakade, and
  Harchaoui]{pillutla2018smoother}
K.~Pillutla, V.~Roulet, S.~M. Kakade, and Z.~Harchaoui.
\newblock {A Smoother Way to Train Structured Prediction Models}.
\newblock In \emph{Advances in Neural Information Processing Systems}, pages
  4766--4778, 2018.
\newblock URL \url{https://arxiv.org/pdf/1902.03228.pdf}.

\bibitem[Pillutla et~al.(2019)Pillutla, Kakade, and
  Harchaoui]{pillutla2019robust}
K.~Pillutla, S.~M. Kakade, and Z.~Harchaoui.
\newblock {R}obust {A}ggregation for {F}ederated {L}earning.
\newblock \emph{arXiv preprint}, 2019.

\bibitem[Rockafellar and Royset(2018)]{rockafellar2018superquantile}
R.~T. Rockafellar and J.~O. Royset.
\newblock Superquantile/{CVaR} {R}isk {M}easures: {S}econd-{O}rder {T}heory.
\newblock \emph{Annals of Operations Research}, 262\penalty0 (1):\penalty0
  3--28, 2018.

\bibitem[Rockafellar and Uryasev(2000)]{rockafellar2000optimization}
R.~T. Rockafellar and S.~Uryasev.
\newblock Optimization of {C}onditional {V}alue-at-{R}isk.
\newblock \emph{Journal of Risk}, 2:\penalty0 21--42, 2000.

\bibitem[Rockafellar and Wets(2009)]{rockafellar2009variational}
R.~T. Rockafellar and R.~J.-B. Wets.
\newblock \emph{Variational analysis}, volume 317.
\newblock Springer Science \& Business Media, 2009.

\bibitem[Rockafellar et~al.(2014)Rockafellar, Royset, and
  Miranda]{rockafellar2014superquantile}
R.~T. Rockafellar, J.~O. Royset, and S.~I. Miranda.
\newblock Superquantile regression with applications to buffered reliability,
  uncertainty quantification, and conditional value-at-risk.
\newblock \emph{European Journal of Operational Research}, 234\penalty0
  (1):\penalty0 140--154, 2014.

\bibitem[Sahu et~al.(2018)Sahu, Li, Sanjabi, Zaheer, Talwalkar, and
  Smith]{sahu2018convergence}
A.~K. Sahu, T.~Li, M.~Sanjabi, M.~Zaheer, A.~Talwalkar, and V.~Smith.
\newblock On the {C}onvergence of {F}ederated {O}ptimization in {H}eterogeneous
  {N}etworks.
\newblock \emph{arXiv Preprint}, 2018.

\bibitem[Sattler et~al.(2019)Sattler, Wiedemann, M{\"u}ller, and
  Samek]{sattler2019robust}
F.~Sattler, S.~Wiedemann, K.-R. M{\"u}ller, and W.~Samek.
\newblock Robust and {C}ommunication-{E}fficient {F}ederated {L}earning from
  {N}on-{IID} {D}ata.
\newblock \emph{IEEE Transactions on Neural Networks and Learning Systems},
  2019.

\bibitem[Shakespeare()]{shakespeare}
W.~Shakespeare.
\newblock The {C}omplete {W}orks of {W}illiam {S}hakespeare.
\newblock URL \url{https://www.gutenberg.org/ebooks/100.}

\bibitem[Smith et~al.(2017)Smith, Chiang, Sanjabi, and
  Talwalkar]{smith2017federated}
V.~Smith, C.-K. Chiang, M.~Sanjabi, and A.~S. Talwalkar.
\newblock Federated multi-task learning.
\newblock In \emph{Advances in Neural Information Processing Systems 30}, pages
  4424--4434, 2017.

\bibitem[Smith et~al.(2018)Smith, Forte, Chenxin, Tak{\'a}{\v{c}}, Jordan, and
  Jaggi]{smith2018cocoa}
V.~Smith, S.~Forte, M.~Chenxin, M.~Tak{\'a}{\v{c}}, M.~I. Jordan, and M.~Jaggi.
\newblock {COCOA}: {A} {G}eneral {F}ramework for {C}ommunication-{E}fficient
  {D}istributed {O}ptimization.
\newblock \emph{Journal of Machine Learning Research}, 18:\penalty0 230, 2018.

\bibitem[Yang et~al.(2018)Yang, Andrew, Eichner, Sun, Li, Kong, Ramage, and
  Beaufays]{yang2018applied}
T.~Yang, G.~Andrew, H.~Eichner, H.~Sun, W.~Li, N.~Kong, D.~Ramage, and
  F.~Beaufays.
\newblock Applied {F}ederated {L}earning: {I}mproving {G}oogle {K}eyboard
  {Q}uery {S}uggestions.
\newblock \emph{arXiv preprint arXiv:1812.02903}, 2018.

\end{thebibliography}
\bibliographystyle{abbrvnat}

\begin{titlepage}
   \vspace*{\stretch{1.0}}
   \begin{center}
      \Large\textbf{Device Heterogeneity in Federated Learning: A Superquantile Approach} 
      \\
      \vspace{5em}
      \textit{Appendices}
   \end{center}
   \vspace*{\stretch{2.0}}

   \section*{Table of Contents}
   \startcontents[sections]
   \printcontents[sections]{l}{1}{\setcounter{tocdepth}{2}}

\end{titlepage}
\appendix

\section{Problem Setup and Framework} \label{sec:a:setup}
Suppose we have $N$ client devices 
which aim to collaboratively train a machine learning model.
We assume that each device $k$ is equipped with a probability distribution $q_k$
over some measurable space $\mathcal{Z}$ (called ``data space'') 
such that 
the data on device $k$ are distributed i.i.d. according to $q_k$.
Since devices are heterogeneous, we impose no restriction on 
the similarity of $q_k$ and $q_{k'}$ for $k \neq k'$.
Further, we assume that each training device $k$ is assigned a weight $\alpha_k > 0$,
where $\sum_{k=1}^N \alpha_k = 1$ without loss of generality.

We measure the loss of a model $w \in \reals^d$ 
on a device with data distribution $q$ by 
\[
    F(w; q) = \expect_{\xi \sim q}
        [f(w; \xi)] \,,
\]
where $f : \reals^d \times \mathcal{Z} \to \reals$ is given. 
The expectation above is assumed to be well-defined and finite.
For a given distribution $q$, smaller values of $F(\cdot; q)$ denote a better fit of the model
to the data.  We use shorthand $F_k(w) := F(w; q_k)$.
We assume throughout that each $F_k$ is bounded from below.

\myparagraph{Example}
In the supervised machine learning setting, each $\xi \in \mathcal{Z}$ is an input-output pair $\xi = (x, y)$.
The function $f$ is of the form $\ell(y, \varphi(x ; w))$, where $\varphi(x; w)$ 
makes a prediction on input $x$
under model $w$ using, for example, a neural network, and $\ell$ is a loss function such as the
square loss $\ell(y, y') = (y - y')^2/2$.

\myparagraph{Test Devices}
At evaluation time, we get a novel device whose data 
distribution $p'$
is, owing to heterogeneity of devices, 
distinct from the training distribution $p_\alpha$, which we can write as
\[
   p_\alpha = \sum_{k=1}^N \alpha_k \, q_k \,.
\]
In this work, we investigate methods 
which perform well not only on average, 
but also on {\em all} devices whose data distribution
$p'$ is close to the training distribution $p_\alpha$.
We shall make this precise in the sequel.

It is known that the standard machine learning technique of minimizing 
$F(\cdot\, ; p_\alpha)$ can lead to bad performance on the test device, 
even when $p'$ is a small perturbation of the training distribution $p_\alpha$.

\myparagraph{Approach}
In this work, we focus on test devices 
whose distribution can be written as a convex combination of 
the distributions from training devices with weights 
close to the true training weights $\alpha_k$. 
Concretely, given a conformity level $0 < \theta < 1$, 
we define the set $\mathcal{P}_\theta$ of permissible weights as
\[
   \mathcal{P}_\theta = \left\{
      \pi \in \Delta^{N-1} \, : \, \pi_k \le \frac{ \alpha_k}{\theta}
      \, \forall \, k \in [N]
   \right\} \,.
\]
We now consider distributions of the form 
\[
    p_\pi = \sum_{k=1}^N \pi_k q_k\,,
\]
where $\pi \in \mathcal{P}_\theta$. Note that 
$\alpha = (\alpha_1, \cdots, \alpha_N) \in \mathcal{P}_\theta$
for every $\theta \in (0, 1)$.

The training approach pursued here consists in minimizing
\begin{align} \label{eq:setup:objective}
    F_\theta(w) := \max_{\pi \in \mathcal{P}_\theta} F(w; p_\pi) \,.
\end{align}
There is a trade-off between the size of $\mathcal{P}_\theta$
and the performance $\min_w F(w; p_\alpha)$ on the training distribution $p_\alpha$.
A small conformity $\theta$ implies that 
we take a more conservative approach 
where we would like to be able to make guarantees on 
test distributions $p_\pi$ 
which do not conform much with 
the training distribution $p_\alpha$. 
However, this may come at the cost of fitting the training distribution $p_\alpha$.

\myparagraph{Duality}
The objective $F_\theta$ defined above admits the following 
dual representation. 

\begin{property} \label{property:a:duality}
   For any $\theta \in (0, 1)$,
    we have that
   $F_\theta(w) = \min_{\eta \in \reals} \overline F_\theta(w, \eta)$,
   where $\overline F_\theta: \reals^d \times \reals \to \reals$ is given by
   \[
        \overline F_\theta(w, \eta) := 
            \eta + \frac{1}{\theta} 
            \sum_{k=1}^N \alpha_k 
            \big(F_k(w) - \eta \big)_+  \,,
   \]
\end{property}
\begin{proof} %
    We reproduce the elementary proof for completeness.
    Consider the linear program
    \[
        \max_{\pi \in \mathcal{P}_\theta}  \,
        \sum_{k=1}^N \pi_k x_k \,,
    \]
    where $x \in \reals^N$  and $0 < \theta < 1$ are fixed.
    Below, we use $\pi \ge 0$ to denote the the element-wise inequality $\pi_k \ge 0$ for each $k \in [N]$.
    Since the constraint set $\mathcal{P}_\theta$ is compact,
    the objective is bounded, and 
    strong duality holds. The maximum of the linear program above thus equals
    $\min_{\eta \in \reals, \mu \in \reals^N_+} D(\eta, \mu)$,
    where,
    \begin{align*}
        D(\eta, \mu) &= \sup_{\pi \ge 0} \left\{
            \sum_{k=1}^N \pi_k x_k + \eta\left(1- \sum_{k=1}^N \pi_k \right) + \sum_{k=1}^N \mu_k\left( \frac{\alpha_k}{\theta} - \pi_k \right)
        \right\} \\
        &= \sup_{\pi \ge 0} \left\{
            \sum_{k=1}^N \pi_k(x_k - \eta - \mu_k) \right\}
            + \eta + \frac{1}{\theta} \sum_{k=1}^N \mu_k \alpha_k 
    \end{align*}
    We must have $x_k - \eta - \mu_k \le 0$ for each $k \in [N]$ in which case the supremum is zero, 
    else the supremum over $\pi_k$ is $+\infty$.
    Therefore, the dual problem can be equivalently written as
    \[
        \min\left\{ \eta + \frac{1}{\theta} \sum_{k=1}^N \mu_k \alpha_k \, : \, \eta \in \reals, \mu \in \reals^N_+,
        \mu_k \ge x_k - \eta \text{ for } k \in [N]    
        \right\}\,.
    \]
    To complete the proof,
    note that we can eliminate $\mu$ using
    $\mu_k = \max\{x_k - \eta, 0\}$.
\end{proof}

Note that $\overline F_{\theta, \nu}$ is bounded from below
since each $F_k$ is bounded from below.
This alternate representation is useful because 
(a) $\overline F_\theta$ is jointly convex in its arguments whenever 
$f$ is convex in $w$, and
(b) $F_\theta$ can be recovered from $\overline F_\theta$ by finding a (weighted) quantile of $\{F_k(w)\}_{k=1}^N$.
\begin{property} \label{property:a:overline-F-cvx}
    The function $\overline F_\theta$ 
    is jointly convex over $\reals^d \times \reals$
    for each $\theta \in (0, 1)$ whenever 
    each $F_k$ is convex (this being true when $f(\cdot, \xi)$ is convex for all $\xi \in \mathcal{Z}$).
\end{property}
\begin{proof}
    The proof follows from the fact that 
    $(w, \eta) \mapsto \max\{0, F_k(w) - \eta\}$ is, 
    as the maximum of convex functions, jointly convex in 
    $(w, \eta)$ for each $k\in[N]$.
\end{proof}

\begin{property} \label{property:a:partial-min-eta}
    Denote by $\sigma_w$ a permutation of 
    $[N]$ satisfying 
    $F_{\sigma_w(1)}(w) \le \cdots \le F_{\sigma_w(N)}(w)$. 
    Define $\eta^\star(w) =  F_{j^*_w}(w)$, where
    \[
        j^\star_w = \sigma_w^{-1} \bigg( \min \big\{j \, : \, 
            \sum_{k=1}^j \alpha_{\sigma_w(j)} \ge 1-\theta \big\} \bigg) \,.
    \]
    Then, we have that 
    $F_\theta(w) = \overline F_\theta\big(w, \eta^\star(w) \big)$.
\end{property}
\begin{proof}
    Consider some $x\in \reals^N$ such 
    that $x_1 < \cdots < x_N$ (we will handle ties later). 
    We start by noting that the function
    \[
        h(\eta) := \eta + \frac{1}{\theta} \sum_{k=1}^N \alpha_k (x_k - \eta)_+
    \]
    is minimized at $\eta^\star = x_{j^\star}$ where 
    $j^\star = \min\{j \,:\, \sum_{k=1}^j \alpha_k \ge 1-\theta \}$. Indeed, we can write $h(\eta)$ as 
    \[
        h(\eta) = 
        \begin{cases}
        -\frac{1-\theta}{\theta} \eta + 
        \frac{1}{\theta} \sum_{k=1}^N \alpha_k x_k
        \,, & \text{ if } \eta < x_1 \\
        \frac{\eta}{\theta}
        \left(\sum_{k=1}^j \alpha_k - (1 - \theta) \right)
         \sum_{k=j+1}^N \frac{\alpha_k x_k}{\theta}  
        \,, & \text{ if } x_j  \le \eta < x_{j+1} \, ; \,
        j \in [N-1]  \\
        \eta
        \,, & \text{ if } \eta \ge x_N \,.
    \end{cases}
    \]
    Observe that $h$ is strictly decreasing on $(-\infty, x_{j^\star})$ and non-decreasing on $[x_{j^\star}, \infty)$. Therefore, $x_{j^\star}$ is a minimizer. 
    Finally, ties can be handled by recursively reducing 
    the instance $\alpha, x$ with $x_k = x_{k+1}$ to 
    $(\alpha_1, \cdots, \alpha_{k-1},
    \alpha_{k}+\alpha_{k+1}, \alpha_{k+2}, \cdots, \alpha_N), (x_1, \cdots, x_k, x_{k+2}, \cdots, x_N)$,
    an instance with no ties. Then, $h(\eta)$ and $\eta^\star$ as defined above are identical on both instances, and the result continues to hold.
\end{proof}

Note that $\eta^\star(w)$ above is simply the weighted quantile of the collection of $F_k(w)$ %
weighted by $\alpha_k$.
Throughout this work, we assume that $\overline F_{\theta, \nu}$ attains
its minimum at some $(w^\star, \eta^\star)$.

\subsection{Smoothing}
The function $\overline F_\theta$ is not smooth
 owing to the non-smoothness of 
$(\rho)_+$. To show convergence, we consider the smoothing
$\overline F_{\theta, \nu}: \reals^d \times \reals \to \reals$ of $\overline F_\theta$ defined for $\nu > 0$ as
\begin{align}
    \label{eq:a:setup:smoothing}
    \overline F_{\theta, \nu}(w, \eta) :=
    \eta + \frac{1}{\theta} \sum_{k=1}^N \alpha_k g_\nu\big(F_k(w) - \eta \big) \,, 
\end{align}
where $g_\nu : \reals \to \reals$ is a smoothing 
of $(\rho)_+ = \max_{\gamma \in [0, 1]} \{\rho\gamma\}$ 
defined as 
\begin{align}
    \label{eq:a:setup:smoothing:relu}
    g_\nu(\rho) := \max_{\gamma \in [0,1]} \left\{ \rho \gamma - \frac{\nu \gamma^2}{2}\right\} + \frac{\nu}{2} 
    = 
    \begin{cases}
        \nu / 2 \,, & \text{ if } \rho \le 0 \,, \\
        \rho^2/(2\nu) + \nu/2 \,, & \text{ if } 0 < \rho \le \nu \,, \\
        \rho\,, & \text{ if } \rho > \nu   \,.
    \end{cases}
\end{align}
It is known (see, e.g., Section~\ref{sec:a:techn:smoothing})
that $g_\nu$ is $1$-Lipschitz,
$(1/\nu)$-smooth and that is 
uniformly approximates $g$ to $\nu/2$, i.e., 
\[
    0 \le g_\nu(\rho) - (\rho)_+ \le \nu / 2 
    \quad \forall \rho \in \reals \,.
\]

Analogously to $F_\theta$, we define 
$F_{\theta, \nu} : \reals^d \to \reals$ as 
\begin{align} \label{eq:a:smoothing:w-only}
    F_{\theta, \nu}(w) := \min_{\eta \in \reals}
        \overline F_{\theta, \nu}(w, \eta) \,.
\end{align}

We have the following smoothness and convexity properties of $\overline F_{\theta, \nu}$.

\begin{property}
\label{property:a:setup:smoothing:modulus}
    Fix $0 < \theta < 1$ and $\nu > 0$.
    We have that $0 \le \overline F_{\theta, \nu}(w, \eta)
    - \overline F_\theta(w, \eta) \le \nu/(2\theta)$
    for all $(w, \eta) \in \reals^d \times \reals$.
    Suppose each $F_k$ is $B$-Lipschitz and $L$-smooth, this being true if 
    $f(\cdot, \xi)$ is $B$-Lipschitz and $L$-smooth for each $\xi \in \mathcal{Z}$.
    Then, we have that
    \begin{itemize}
        \item $w \mapsto \grad_w \overline F_{\theta, \eta}(w, \eta)$ is $L_w$-Lipschitz 
            for all $\eta \in \reals$ where $L_w := (L + B^2/\nu)/\theta$, and,
        \item $\eta \mapsto \frac{\partial}{\partial \eta} \overline F_{\theta, \nu}(w, \eta)$
            is $L_\eta$-Lipschitz for all $w \in \reals^d$ where $L_\eta := (\nu\theta)^{-1}$.
    \end{itemize}
    On the other hand, if each $F_k$ is convex (this being true if $f(\cdot, \xi)$ is convex for each $\xi \in \mathcal{Z}$),
    then $\overline F_{\theta, \nu}$ is jointly convex in $(w, \eta)$ over $\reals^d \times \reals$.
\end{property}
\begin{proof}
    Note under the hypotheses that $\norm{\grad F_k(w}\le B$.
    Fix a $k \in [N]$ and $\eta \in \reals$ and define $h(w) := g_\nu(F_k(w) - \eta)$. Starting with the chain rule, we get, 
    \begin{align*}
        \norm{\grad h(w) -  \grad h(w')}
        &= \norm{g_\nu'(F_k(w) - \eta) \grad F_k(w) - g_\nu'(F_k(w') - \eta) \grad F_k(w')} \\
        &= \norm{g_\nu'(F_k(w) - \eta)(\grad F_k(w) - \grad F_k(w')) 
            + \grad F_k(w')\big(g_\nu'(F_k(w) - \eta) - g_\nu'(F_k(w') - \eta)\big) } \\
        &\le |g_\nu'(F_k(w) - \eta)| \norm{\grad F_k(w') - \grad F_k(w')}
            + \norm{\grad F_k(w)} \big| g_\nu'(F_k(w) - \eta) - g_\nu'(F_k(w') - \eta) \big| \\
        &\le L\norm{w-w'} + B \cdot \frac{1}{\nu}|F_k(w) - F_k(w')| \\
        &\le \left( L + \frac{B^2}{\nu}\right) \norm{w-z} \,,
    \end{align*}
    where we used that $|g_\nu'|\le 1$.
    To show the smoothness of $\overline F_{\theta, \nu}$
    w.r.t. the first argument, it remains to use the
    triangle inequality and $\sum_k \alpha_k = 1$. 
    The proof of the second
    argument follows directly from the smoothness of $g_\nu$.
    
    When each $F_k$ is convex, note that 
    $(w, \eta) \mapsto F_k(w) - \eta$ is convex. 
    It follows that 
    $(w, \eta) \mapsto g_\nu(F_k(w) - \eta)$ is,
    as the maximum of a family of convex functions,
    also convex. Therefore, $\overline F_{\theta, \nu}$
    is convex since it is the sum of convex functions.
\end{proof}

Next, we note that minimization over $\eta \in \reals$
in \eqref{eq:a:smoothing:w-only} can be 
performed exactly in closed form.

\begin{property}
\label{property:a:smoothing:partial-min}
    Let $\{\alpha_k\}_{k \in [N]}$ be strictly positive and sum to $1$, $\nu>0$,and $x \in \reals^N$  be given. 
    Then, the minimizers of the function 
    \[
        h(\eta) := \eta + \frac{1}{\theta} \sum_{k=1}^N \alpha_k g_\nu(x_k - \eta) \,,
    \]
    constitute a closed interval $[\eta_{-}^{\star}, \eta_{+}^{\star}]$, which is computable by evaluating $h'$ at the points $x_k$ and $x_k - \nu$ for $k \in [N]$.  
    In particular, if
    $x_1 < \cdots < x_N$ and
    $0 < \nu < \min_{k \in [N-1]} \{x_{k+1} - x_k\}$,
    then $h$ is minimized at $\eta^\star$ defined using 
    $j^\star = \min\big\{ j \, : \, \sum_{k=1}^j \alpha_k \ge  1- \theta \big\}$ as
    \[
        \eta^\star = x_{j^\star} - 
            \frac{\nu}{\alpha_{j^\star}} 
            \left( \sum_{k=1}^{j^\star} \alpha_k - (1-\theta) \right)  \,.
    \] 
\end{property}
\begin{proof}
    Given the definition of $g_\nu$, the function $h$ is differentiable with:
    \begin{equation*}
        h'(\eta) = 1 - \frac{1}{\theta} \sum_{k=1}^N \alpha_k \left(\mathbbm{1}_{\eta \leq x_k - \nu} + \frac{x_k - \eta}{\nu} \mathbbm{1}_{\eta \in (x_k - \nu, x_k)} \right)
        \,.
    \end{equation*}
    The function $h'$ is thus non-decreasing, piecewise linear and continuous. Since the $\alpha_k$'s sum to $1$,  it satisfies $h'(\eta) \xrightarrow{\eta \rightarrow - \infty} 1 - {1}/{\theta} < 0$ and $h'(\eta) \xrightarrow{\eta \rightarrow + \infty} 1$. Hence by the intermediate value theorem, the solution of the equation $h'(\eta) = 0$ is a closed interval that we will denote $[\eta_{-}^{\star}, \eta_{+}^{\star}]$.
    
    Now, we define sets $S, A, B$ as 
    \begin{align*}
        S &= \{x_k \,:\, k \in [N]\} \cup \{x_k - \nu \,:\, k \in [N]\}\,, \\
        A &= \{\eta \in S \,:\, h'(\eta) \geq 0\} \,, 
        \quad \text{and} \,, \\
        B &= \{\eta \in S \, :\, h'(\eta) \leq 0\} \,.
    \end{align*}
    We note that $A$ and $B$ are not empty since $\max S \in A$ and $\min S \in B$. 
    
    Further, define $a, b$ as $a := \min A$, $b := \max B$. 
    We note by continuity and piecewise linearity of $h'$ that the left derivative of $h'$ at $a$ is non-negative. 
    It follows now for any $\eta < a$ that $h'(\eta) < h'(a)$. By symmetry, we also have $h'(\eta) > h'(b)$ for any $\eta > b$. We have then two possible cases:
    \begin{itemize}
        \item If $h'(a) > 0$, then 
            it necessarily holds that $a \neq b$ and $h'(b) < 0$ and, 
            \begin{equation*}
                \eta_-^\star = \eta_+^\star = b + \frac{-h'(b) (a-b)}{h'(a) - h'(b)}
            \end{equation*}
            \item If $h'(a) = 0$, then $h'(b) = 0$ and by the fact that $h'$ is increasing in the left neighborhood of $a$, we necessarily have $a = \eta_-^{\star}$. By symmetry, we get $b = \eta_+^\star $.
    \end{itemize}{}
    By definition of $a$, it is clear that $a \geq \eta_-^\star$.
    Thus,
    the set of minimizers of $h$ can be computed by evaluating $h'$ on the set $S$.

    For the second part, suppose that $x_{k+1} > x_k + \nu$ for each $k \in [N-1]$.
    Note that the term $g_\nu(x_k - \eta)$ is a quadratic for any $\eta \in \reals$ for at most one $k \in [N]$.
    A direct calculation shows that (letting $\overline \theta := 1 - \theta)$
    \begin{align*}
        h(\eta) = 
    \begin{cases}
        -\frac{1-\theta}{\theta} \eta + 
        \frac{1}{\theta} \sum_{k=1}^N \alpha_k x_k
        \,, & \text{ if } \eta < x_1 - \nu \\
        \frac{\eta}{\theta}
        \left(\sum_{k=1}^j \alpha_k - \overline\theta \right)
        + \frac{\alpha_j}{2\nu}(x_j - \eta)^2 + 
         \sum_{k=1}^j \frac{\alpha_k \nu}{2\theta} + \sum_{k=j+1}^N \frac{\alpha_k x_k}{\theta}  
        \,, & \text{ if } x_j - \nu \le \eta < x_j \, ; \,
        j \in [N]  \\
        \frac{\eta}{\theta}
        \left(\sum_{k=1}^j \alpha_k - \overline\theta \right) +
        \sum_{k=1}^j \frac{\alpha_k \nu}{2\theta} + \sum_{k=j+1}^N \frac{\alpha_k x_k}{\theta}  
        \,, & \text{ if } x_j \le \eta < x_{j+1}-\nu \, ; \,
        j \in [N-1]  \\
        \eta + \frac{\nu}{2\theta}
        \,, & \text{ if } \eta \ge x_N \,.
    \end{cases}
    \end{align*}
    Let $j^\star$ be as defined above. We separate two cases.
    \begin{itemize}
    \item  Suppose that $\sum_{k=1}^{j^\star} \alpha_k > 1-\theta$. In this case, 
    $h$ is strictly decreasing on $(-\infty, x_{j^\star} - \nu)$ and strictly increasing on $[x_{j^\star}, \infty)$. In the interval $[x_{j^\star} - \nu, x_{j^\star})$, $h$ is a quadratic
    which is minimized uniquely at $\eta^\star \in (x_{j^\star} - \nu, x_{j^\star})$ as defined in the statement above. 
    Thus, $\eta^\star$ is the unique minimizer of $h$ in this case.
    \item 
    Instead, suppose that $\sum_{k=1}^{j^\star} \alpha_k = 1-\theta$.
    Then, $j^\star \le N-1$. Notice that 
    $h$ is a strictly
    decreasing function on $(-\infty, x_{j^\star})$, and 
    $h$ is non-decreasing on $[x_{j^\star}, \infty)$
    (in particular, it is constant on $[x_{j^\star}, x_{j^\star + 1} - \eta)$). Therefore, 
    $\eta^\star = x_{j^\star}$ is a minimizer of $h$.
    \end{itemize}
    \vspace*{-5ex}
\end{proof}

Finally, we state the following
technical lemma, which 
establishes the property of
uniform level-boundedness (defined in the statement of the lemma; see also~\citep[Definition~1.16]{rockafellar2009variational}).
This will be needed for the proof of Corollary~\ref{cor:dist}.

\begin{lemma}\label{lem:levelset}
Fix $\theta \in (0, 1)$ and $\nu > 0$. 
Consider $\overline F_{\theta, \nu}$ 
defined in \eqref{eq:a:setup:smoothing},
where each $F_k$ is continuous.
Then, the function $\overline F_{\theta,\nu}$ is level-bounded in $\eta$ locally uniformly in $w$.
That is, 
for every $\widehat w \in \reals^d$ and $\lambda \in \reals$, there exists some $\rho > 0$ such that the set
\[
    S_{\widehat w, \lambda} := 
    \left\{
        (w, \eta) \in \reals^d \times \reals 
        \, : \, \norm{w - \widehat w} \le \rho \,, \, \overline F_{\theta, \nu}(w, \eta) \le \lambda 
    \right\}
\]
is bounded.
\end{lemma}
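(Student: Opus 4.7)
The plan is to show that on the slab $\{w : \|w-\widehat w\|\le \rho\}$, the variable $\eta$ is automatically confined to a bounded interval once $\overline F_{\theta,\nu}(w,\eta)\le \lambda$. Since the $w$-component is bounded by hypothesis, boundedness of $\eta$ gives boundedness of $S_{\widehat w, \lambda}$. The key mechanism is that, with $w$ restricted to a compact set, the three pieces of $g_\nu$ degenerate to simple linear/constant expressions for $\eta$ sufficiently extreme, and in both extremes $\overline F_{\theta,\nu}(w,\eta)\to +\infty$.

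First I would fix an arbitrary $\rho>0$ and use continuity of each $F_k$ together with compactness of the closed ball $\overline B(\widehat w, \rho)$ to define the finite quantities $M := \max_{k\in[N]} \sup_{\|w-\widehat w\|\le \rho} F_k(w)$ and $m := \min_{k\in[N]} \inf_{\|w-\widehat w\|\le \rho} F_k(w)$. These are the values across which the pieces of $g_\nu(F_k(\cdot)-\eta)$ switch.

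Next, for the upper bound on $\eta$: whenever $\eta \ge M+\nu$, we have $F_k(w)-\eta \le -\nu < 0$ for every $k$ and every $w\in \overline B(\widehat w,\rho)$, so $g_\nu(F_k(w)-\eta)=\nu/2$ and hence $\overline F_{\theta,\nu}(w,\eta)=\eta + \nu/(2\theta)$. Imposing $\overline F_{\theta,\nu}(w,\eta)\le \lambda$ forces $\eta \le \lambda - \nu/(2\theta)$. Combining with the alternative $\eta < M+\nu$ yields a uniform upper bound on $\eta$ on $S_{\widehat w,\lambda}$.

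For the lower bound on $\eta$: whenever $\eta \le m-\nu$, we have $F_k(w)-\eta \ge \nu$ for every $k$ and every $w\in \overline B(\widehat w, \rho)$, hence $g_\nu(F_k(w)-\eta) = F_k(w)-\eta$. Writing $\bar F(w):=\sum_k \alpha_k F_k(w)$, this gives $\overline F_{\theta,\nu}(w,\eta) = \eta(1-1/\theta) + \bar F(w)/\theta$. Since $1-1/\theta<0$, requiring this quantity to be $\le \lambda$ rearranges into $\eta \ge (\bar F(w)/\theta - \lambda)/(1/\theta - 1) \ge (m/\theta-\lambda)/(1/\theta-1)$, a finite lower bound. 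Combined with the alternative $\eta > m-\nu$, this yields a uniform lower bound on $\eta$ on $S_{\widehat w,\lambda}$. Together with $\|w-\widehat w\|\le\rho$, the boundedness of $S_{\widehat w,\lambda}$ follows. The argument is essentially a case analysis on where $\eta$ sits relative to the breakpoints of $g_\nu$; the only minor subtlety is ensuring that the ``tail'' pieces of $g_\nu$ apply uniformly in $w$ over the compact ball, which is precisely why we need $M$ and $m$ to be finite (hence the need for continuity of the $F_k$).
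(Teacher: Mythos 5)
Your proof is correct and follows essentially the same route as the paper's: both exploit the piecewise form of $g_\nu$ to show that $\overline F_{\theta,\nu}(w,\cdot)$ is affine with nonzero slope in each tail, uniformly over $w$ in a ball around $\widehat w$, which confines the sublevel set in $\eta$ to a bounded interval. The only cosmetic difference is that you bound the $F_k$ over a compact ball directly via $M$ and $m$, whereas the paper first localizes the partial minimizers $\widehat\eta_w$ using its closed-form characterization and then examines the tails outside that interval.
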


\begin{proof}
\begin{enumerate}[label={(\alph*)}]
\item
Fix a $\widehat w \in \reals^d, \lambda > 0$.
Also, fix a $\delta > 0$ and let $\rho > 0$ be such that 
\[
    \max_{k \in [N]} |F_k(w) - F_k(\widehat w)| \le \delta
\]
for all $w \in B_\rho := \{ w \, :\, \norm{w - \widehat w} \le \rho\}$, 
the ball of radius $\rho$ around $\widehat w$.
This follows from the continuity of $F_k$'s.

\item \label{part:proof:level-bound:2}
Let $\widehat \eta_w \in \argmin_\eta \overline F_{\theta, \nu}(w, \eta)$. 
We now show that there exist
$-\infty < \eta^\star_- < \eta^\star_+ < \infty$
such that $\widehat \eta_w \in [\eta^\star_- , \eta^\star_+]$
for all $w\in B_\rho$.
It follows from Property~\ref{property:a:smoothing:partial-min} that 
\[
    \min_k F_k(w) - \nu \le \widehat \eta_w \le \max_k F_k(w)\,, 
\]
and therefore, using that $|F_k(w) - F_k(\widehat w)| \le \delta$,
we get, 
\[
    \min_k F_k(\widehat w) - \delta - \nu \le \widehat \eta_w \le \max_k F_k(\widehat w) + \delta\,.
\]
Let $\eta^\star_- := \min_k F_k(\widehat w) - \delta - \nu$ and 
$\eta^\star_+ :=  \max_k F_k(\widehat w) + \delta$.
Clearly,
$-\infty < \eta^\star_- < \eta^\star_+ < \infty$.

\item 
Next, for any fixed $w \in B_\rho$, we show that 
$\{\eta \, : \, \overline F_{\theta, \nu}(w, \eta) \le \lambda\}$, is uniformly bounded, by looking at the behaviour of 
$\overline F_{\theta, \nu}(w, \eta)$ outside of the segment $[\eta^\star_- , \eta^\star_+]$.
We have from the proof of Property~\ref{property:a:smoothing:partial-min}
that 
for $\eta < \eta^\star_-$,
\[
    \overline F_{\theta, \nu}(w, \eta) 
    = -\frac{1-\theta}{\theta}\eta + 
    \frac{1}{\theta} \sum_{k=1}^N \alpha_k F_k(w)
    \ge -\frac{1-\theta}{\theta}\eta + 
    \frac{1}{\theta} \sum_{k=1}^N \alpha_k F_k(\widehat w) - \frac{\delta}{\theta} \,,
\]
and for $\eta  > \eta^\star_+$
\[
    \overline F_{\theta, \nu}(w, \eta) = \eta + \frac{\nu}{2\theta} \,.
\]
The preceding two expressions tell us that
$\overline F_{\theta, \nu}(w, \eta)$ grows 
linearly outside $[\eta^\star_-, \eta^\star_+]$
with a slope which is independent of $w$.
If follows that $\{\eta \, : \, \overline F_{\theta, \nu}(w, \eta) \le \lambda\}$ is uniformly bounded for all 
$w \in B_\rho$. We can conclude that $S_{\widehat w, \lambda}$ is bounded.
\end{enumerate}
\vspace*{-3ex}
\end{proof}

\section{Algorithm: Convergence Proofs and Full Details} \label{sec:a:algo}
Here, we give the convergence proofs of
the results stated in the main text.
The proof of Proposition~\ref{prop:main:am:converge:asymp} is given as follows:
Part~\ref{prop:main:am:part:ncvx} in Proposition~\ref{prop:a:am-approx-stoc:convergence},
Part~\ref{prop:main:am:part:subdiff} in Corollary~\ref{cor:dist}
and, Parts~\ref{prop:main:am:part:cvx1}-\ref{prop:main:am:part:cvx2} in Corollary~\ref{cor:a:am:stoc:convex}.
The proofs of Proposition~\ref{prop:a:am-approx-stoc:convergence}
and Corollary~\ref{cor:a:am:stoc:convex} are elementary
while the proof of Corollary~\ref{cor:dist} requires 
some technical lemmas. 

\myparagraph{Setup}
We first recall the setup.
Following Properties~\ref{property:a:duality}
and \ref{property:a:overline-F-cvx}, 
consider 
the following minimization problem 
in place of \eqref{eq:setup:objective}:
\begin{align}
\label{eq:a:algo:main_goal_nonsmooth}
    \min_{(w, \eta) \in \reals^d \times \reals} 
    \overline F_{\theta}(w, \eta) \,.
\end{align}

Since this problem is nonsmooth, we 
fix some $\nu > 0$, and
consider the following
smooth surrogate
\begin{align}
\label{eq:a:algo:main_goal}
    \min_{(w, \eta) \in \reals^d \times \reals} 
    \overline F_{\theta, \nu}(w, \eta) \,.
\end{align}

\myparagraph{Algorithm}
Recall that the template alternating minimization procedure
(from Algorithm~\ref{algo:a:fed:proposed:am-raw}) 
to solve 
Problem~\eqref{eq:a:algo:main_goal} is to start with 
some $w_0 \in \reals^d$ and iterate as
\begin{align}
\label{eq:a:algo:am:exact}
\begin{aligned}
    \eta_t &\in \argmin_{\eta \in \reals} \overline F_{\theta, \nu}(w_t, \eta) \\
    w_{t+1} &\approx \argmin_{w \in \reals^d} \overline F_{\theta, \nu}(w, \eta_t) \,.
\end{aligned}
\end{align}
Due to %
Property~\ref{property:a:smoothing:partial-min},
the $\eta$-step can be computed exactly in closed form 
by simply sorting 
$\{F_k(w_t)\}_{k\in[N]}$ obtained from each of the devices.
For the inexact $w$-step, 
we assume that the random variable $w_{t+1}$ satisfies 
\begin{align}
\label{eq:a:algo:am:inexactness_bound:expectation}
    \expect\left[
        \overline F_{\theta, \nu}(w_{t+1}, \eta_t) 
        | \mathcal{F}_t \right]
    - \min_w \overline F_{\theta, \nu}(w, \eta_t) \le \epsilon_t \,,
\end{align}
where $\mathcal{F}_t := \sigma(w_t)$ is the sigma field generated by $w_t$ and 
$(\epsilon_t)_{t=0}^\infty$ is a given positive sequence. 

\myparagraph{Convergence Results}
Then, we can show almost sure convergence
of \eqref{eq:a:algo:am:exact} to a 
stationary point of $\overline F_{\theta, \nu}$ provided that the 
inexactness $\epsilon_t = o(t^{-1})$, e.g., $\epsilon_t = t^{-(1+\delta)}$
for some $\delta > 0$.
Note that this proof does not require convexity.

\begin{proposition} \label{prop:a:am-approx-stoc:convergence}
    Fix $0 < \theta < 1$ and $\nu > 0$ and suppose that 
    $F_k$ is $B$-Lipschitz and $L$-smooth for each $k \in [N]$.
    Consider the sequence $\big((w_t, \eta_t)\big)_{t=0}^\infty$ 
    produced by the iteration \eqref{eq:a:algo:am:exact} 
    using the inexactness criterion in \eqref{eq:a:algo:am:inexactness_bound:expectation} with 
    a positive sequence $(\epsilon_t)_{t=0}^\infty$ which satisfies 
    $\sum_{t=0}^{\infty} \epsilon_t < \infty$.
    Then, we have that 
    $\norm{\grad_{w, \eta} \overline F_{\theta, \nu}(w_t, \eta_t)} \to 0$ almost surely.
\end{proposition}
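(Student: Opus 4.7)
The plan is to set up a stochastic descent inequality on the objective values $\overline F_{\theta,\nu}(w_t,\eta_t)$, apply a Robbins--Siegmund-type argument to obtain almost sure summability of the squared $w$-gradients, and then combine this with the first-order optimality of the exact $\eta$-step to conclude that the full gradient vanishes.

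First, I would observe that because $\eta_t$ is an exact minimizer of $\eta \mapsto \overline F_{\theta,\nu}(w_t,\eta)$ (which is smooth by Property~\ref{property:a:setup:smoothing:modulus}), the one-dimensional first-order optimality condition gives
\[
    \tfrac{\partial}{\partial \eta} \overline F_{\theta,\nu}(w_t,\eta_t) = 0 \qquad \text{for all } t\ge 0.
\]
Hence $\norm{\grad_{w,\eta}\overline F_{\theta,\nu}(w_t,\eta_t)} = \norm{\grad_w \overline F_{\theta,\nu}(w_t,\eta_t)}$, and it suffices to control the $w$-component.

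Next, I would use the $L_w$-smoothness of $w \mapsto \overline F_{\theta,\nu}(w,\eta_t)$ guaranteed by Property~\ref{property:a:setup:smoothing:modulus}. The standard descent lemma applied at the gradient step $w_t - L_w^{-1}\grad_w \overline F_{\theta,\nu}(w_t,\eta_t)$ yields
\[
    \min_{w} \overline F_{\theta,\nu}(w,\eta_t)
    \;\le\; \overline F_{\theta,\nu}(w_t,\eta_t) - \frac{1}{2L_w} \normsq{\grad_w \overline F_{\theta,\nu}(w_t,\eta_t)}.
\]
Combining this with the inexactness bound \eqref{eq:a:algo:am:inexactness_bound:expectation}, and then using that $\eta_{t+1}$ is a minimizer in $\eta$ so that $\overline F_{\theta,\nu}(w_{t+1},\eta_{t+1}) \le \overline F_{\theta,\nu}(w_{t+1},\eta_t)$, I obtain after taking conditional expectation given $\mathcal{F}_t = \sigma(w_t)$:
\[
    \expect\!\left[\overline F_{\theta,\nu}(w_{t+1},\eta_{t+1}) \mid \mathcal{F}_t\right]
    \;\le\; \overline F_{\theta,\nu}(w_t,\eta_t) - \frac{1}{2L_w} \normsq{\grad_w \overline F_{\theta,\nu}(w_t,\eta_t)} + \epsilon_t.
\]
Subtracting $\inf \overline F_{\theta,\nu} > -\infty$ (finite since each $F_k$ is bounded below) gives a nonnegative process $\phi_t := \overline F_{\theta,\nu}(w_t,\eta_t) - \inf \overline F_{\theta,\nu}$ satisfying the Robbins--Siegmund inequality with summable perturbation $(\epsilon_t)$ and nonnegative ``loss'' $\tfrac{1}{2L_w}\normsq{\grad_w \overline F_{\theta,\nu}(w_t,\eta_t)}$.

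Finally, I would invoke the Robbins--Siegmund lemma to conclude that $\phi_t$ converges almost surely and that
\[
    \sum_{t=0}^\infty \normsq{\grad_w \overline F_{\theta,\nu}(w_t,\eta_t)} < \infty \quad \text{a.s.},
\]
which immediately implies $\norm{\grad_w \overline F_{\theta,\nu}(w_t,\eta_t)} \to 0$ almost surely and, together with the vanishing $\eta$-partial derivative, proves the claim. The argument is essentially routine; the only subtlety I anticipate is verifying the applicability of the descent lemma and Robbins--Siegmund in a form that accommodates the conditional-expectation inexactness criterion \eqref{eq:a:algo:am:inexactness_bound:expectation}, but no convexity is needed and no further structure is invoked beyond smoothness and boundedness from below.
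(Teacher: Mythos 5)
Your proposal is correct and follows essentially the same route as the paper: the identical descent inequality obtained from a virtual gradient step on the $L_w$-smooth function $w\mapsto \overline F_{\theta,\nu}(w,\eta_t)$, combined with the inexactness criterion \eqref{eq:a:algo:am:inexactness_bound:expectation} and the exactness of the $\eta$-step to kill the $\eta$-partial derivative. The only difference is the final probabilistic step---you invoke Robbins--Siegmund where the paper simply takes full expectations, sums the telescoping inequality using boundedness from below, and applies Tonelli's theorem to deduce that $\sum_{t}\normsq{\grad_w \overline F_{\theta,\nu}(w_t,\eta_t)}$ is almost surely finite; both are valid, and yours additionally yields almost sure convergence of the objective values.
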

\begin{proof}
    First note that 
    \begin{align} \label{eq:a:algo:convergence-stoc:proof:1}
        F_{\theta, \nu}(w_{t+1}) = 
        \overline F_{\theta, \nu}(w_{t+1}, \eta_{t+1})
        \le \overline F_{\theta, \nu}(w_{t+1}, \eta_t) \,.
    \end{align}
    Fix an iteration $t$ and denote $h(w) := \overline F_{\theta, \nu}(w, \eta_t)$. From Property~\ref{property:a:setup:smoothing:modulus},
    we have that $h$ is $L_w$-smooth.
    Let $\widetilde w_t = w_t - \grad h(w_t)/L_w$ 
    denote the point obtained by one step of 
    gradient descent on $h$ from $w_t$ with step-size $1/L_w$.
    Then, we have that~\citep[see e.g.,][Thm. 2.1.5]{nesterov2013introductory}
    \[
        h(w_t) - h(\widetilde w_t) \ge \frac{1}{2L_w} \normsq{\grad h(w_t)} \,.
    \]
    Therefore, we deduce that 
    \begin{align*}
        \expect[h(w_{t+1}) | \mathcal{F}_t] 
            &\le \min h + \epsilon_t 
            ~\le~ h(\widetilde w_t) + \epsilon_t \\
            &\le h(w_t) - \frac{1}{2L_w} \normsq{\grad h(w_t)} + \epsilon_t \,.
    \end{align*}
    Combining this with \eqref{eq:a:algo:convergence-stoc:proof:1}
    and using $h(w_t) = F_{\theta, \nu}(w_t)$, we get,
    \[
        \expect[F_{\theta, \nu}(w_{t+1}) | \mathcal{F}_t] 
            - F_{\theta, \nu}(w_t) \le 
            - \frac{1}{2L_w} \normsq{\grad_w \overline F_{\theta, \nu}(w_t, \eta_t)} + \epsilon_t \,.
    \]
    Next, we take another expectation over $\mathcal{F}_t$ to get 
    \[
        \expect[F_{\theta, \nu}(w_{t+1})] 
            - \expect [F_{\theta, \nu}(w_t)] \le 
            - \frac{1}{2L_w} \expect \normsq{\grad_w \overline F_{\theta, \nu}(w_t, \eta_t)} + \epsilon_t \,.
    \]
    Summing this over $\tau = 0$ to $t-1$, and using 
    $F_{\theta, \nu}(w_t) \ge \min_w F_{\theta, \nu}(w) 
    =: F_{\theta, \nu}^\star$,
    we get, 
    \[
        \sum_{\tau=0}^{t-1} \expect \normsq{\grad_w \overline F_{\theta, \nu}(w_\tau, \eta_\tau)} 
        \le 2L_w(F_{\theta, \nu}(w_0) - F_{\theta, \nu}^\star) + 2L_w \sum_{\tau=0}^{t-1} \epsilon_\tau. 
    \]
    Since $(\epsilon_t)$ is summable, there exists a constant $C < \infty$ such that, letting $t \to \infty$, we get
     \[
        \expect \left[\sum_{\tau=0}^{\infty} \normsq{\grad_w \overline F_{\theta, \nu}(w_\tau, \eta_\tau)}\right]
        =
        \sum_{\tau=0}^{\infty} \expect \normsq{\grad_w \overline F_{\theta, \nu}(w_\tau, \eta_\tau)} 
        \le C \,.
    \]
    This yields that the probability of having a finite sum is total,
    i.e., 
    \[
    1 = \prob\left(\sum_{\tau=0}^{\infty} \normsq{\grad_w \overline F_{\theta, \nu}(w_\tau, \eta_\tau)} < \infty \right) 
    \leq \prob\Big(\normsq{\grad_w \overline F_{\theta, \nu}(w_\tau, \eta_\tau)}\to 0\Big) \,, 
    \]
    which exactly means that $\normsq{\grad_w \overline F_{\theta, \nu}(w_t, \eta_t)} \to 0$ almost surely.

    To complete the proof, note from the first-order optimality conditions of the $\eta$-step,
    we have for for each $t$ that
    \[
        \frac{\partial}{\partial \eta} \overline F_{\theta, \nu}(w_t, \eta_t) = 0 \,.
    \]
\end{proof}

Next, we sharpen the convergence result in the presence of convexity.
\begin{corollary}
\label{cor:a:am:stoc:convex}
    Consider the setting of Proposition~\ref{prop:a:am-approx-stoc:convergence}.
    Suppose, in addition, that each $F_k$ is convex (which is true if 
    $f(\cdot, \xi)$ is convex for each $\xi \in \mathcal{Z}$).
    Then, we have almost surely that 
    $\overline F_{\theta, \nu}(w_t, \eta_t) \to \min \overline F_{\theta, \nu}$, or equivalently, that 
    $F_{\theta, \nu}(w_t) \to \min F_{\theta, \nu}$.
    Furthermore, we have almost surely that 
    \[
         \limsup_{t \to \infty} 
         F_\theta(w_t) 
         \le 
         \min F_\theta + \frac{\nu}{2\theta} \,.
    \]
\end{corollary}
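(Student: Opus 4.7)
The plan is to derive the first claim from the almost-sure vanishing of $\grad_{w,\eta} \overline F_{\theta,\nu}(w_t, \eta_t)$ established in Proposition~\ref{prop:a:am-approx-stoc:convergence} combined with joint convexity of $\overline F_{\theta,\nu}$ (Property~\ref{property:a:setup:smoothing:modulus}), and then to obtain the second claim as a short consequence of the uniform approximation $0 \le \overline F_{\theta,\nu} - \overline F_\theta \le \nu/(2\theta)$. Note that the two formulations in the first claim are interchangeable because the exact $\eta$-step makes $\overline F_{\theta,\nu}(w_t, \eta_t) = F_{\theta,\nu}(w_t)$ and the envelope representation gives $\min \overline F_{\theta,\nu} = \min F_{\theta,\nu}$.

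For the first claim, I would first note that the exact $\eta$-step yields $F_{\theta,\nu}(w_t) = \overline F_{\theta,\nu}(w_t, \eta_t)$ together with $F_{\theta,\nu}(w_{t+1}) \le \overline F_{\theta,\nu}(w_{t+1}, \eta_t)$. Combining these with the inexactness bound \eqref{eq:a:algo:am:inexactness_bound:expectation} and the trivial inequality $\min_w \overline F_{\theta,\nu}(w, \eta_t) \le \overline F_{\theta,\nu}(w_t, \eta_t)$ yields the approximate-supermartingale relation
\[
\expect[F_{\theta,\nu}(w_{t+1}) | \mathcal{F}_t] \le F_{\theta,\nu}(w_t) + \epsilon_t,
\]
so by the Robbins--Siegmund lemma $F_{\theta,\nu}(w_t)$ converges almost surely to some $F_\infty \ge \min F_{\theta,\nu}$. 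To identify $F_\infty$ with $\min F_{\theta,\nu}$, I would fix any joint minimizer $(w^\star, \eta^\star)$ of $\overline F_{\theta,\nu}$ and apply the convex tangent-plane inequality at $(w_t, \eta_t)$:
\[
F_{\theta,\nu}(w_t) - \min F_{\theta,\nu} \le \bigl\|\grad \overline F_{\theta,\nu}(w_t, \eta_t)\bigr\| \cdot \bigl\|(w_t, \eta_t) - (w^\star, \eta^\star)\bigr\|.
\]
The gradient factor vanishes almost surely by Proposition~\ref{prop:a:am-approx-stoc:convergence}, so the remaining task is to control the distance factor along the trajectory. I would argue this by observing that $F_{\theta,\nu}(w_t)$ is almost-surely bounded since it converges, confining $w_t$ to a random sublevel set of the convex $F_{\theta,\nu}$, and that Lemma~\ref{lem:levelset} transfers boundedness in $w$ to boundedness in $\eta$. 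A subsequential-limit argument then shows that any accumulation point of $(w_t, \eta_t)$ is a joint critical point of the smooth convex $\overline F_{\theta,\nu}$, hence a joint minimizer, and the convergence of $F_{\theta,\nu}(w_t)$ forces $F_\infty = \min F_{\theta,\nu}$.

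The second claim is immediate. Taking $\min_\eta$ on both sides of the pointwise inequality from Property~\ref{property:a:setup:smoothing:modulus} yields $F_\theta(w) \le F_{\theta,\nu}(w) \le F_\theta(w) + \nu/(2\theta)$ for every $w$, and in particular $\min F_{\theta,\nu} \le \min F_\theta + \nu/(2\theta)$. Substituting into the first claim gives
\[
\limsup_{t \to \infty} F_\theta(w_t) \le \lim_{t \to \infty} F_{\theta,\nu}(w_t) = \min F_{\theta,\nu} \le \min F_\theta + \frac{\nu}{2\theta},
\]
which is the stated bound. The main obstacle is the distance-control step in the first claim: since $F_{\theta,\nu}$ need not be coercive and the iterates are not confined to a compact set a priori, extracting boundedness of $(w_t, \eta_t)$ requires combining the almost-sure boundedness of $F_{\theta,\nu}(w_t)$ with Lemma~\ref{lem:levelset} and a convexity-based subsequential argument, rather than a direct appeal to sublevel-set compactness.
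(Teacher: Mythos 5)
Your proof follows essentially the same route as the paper's for both claims: the first-order convexity inequality at a joint minimizer $(w^\star,\eta^\star)$ combined with $\grad_{w,\eta}\overline F_{\theta,\nu}(w_t,\eta_t)\to 0$ from Proposition~\ref{prop:a:am-approx-stoc:convergence} gives the first claim, and the sandwich $0\le \overline F_{\theta,\nu}-\overline F_\theta\le \nu/(2\theta)$ gives the second via exactly the chain of inequalities the paper uses. The one substantive difference is that you explicitly confront the factor $\norm{(w_t,\eta_t)-(w^\star,\eta^\star)}$ in the tangent-plane bound, which the paper's own proof leaves implicit (it writes the product with the gradient norm and passes to the limit without justifying boundedness of the iterates). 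You are right that this needs an argument: for a convex differentiable function, $\grad f(x_t)\to 0$ alone does \emph{not} force $f(x_t)\to\inf f$ when $x_t$ is unbounded (the perspective function $u^2/v$ along $u_t=t$, $v_t=t^2$ is a counterexample). However, your proposed patch does not close the gap: a.s.\ convergence of $F_{\theta,\nu}(w_t)$ via Robbins--Siegmund only confines $w_t$ to a sublevel set of $F_{\theta,\nu}$, and sublevel sets of a non-coercive convex function need not be bounded; Lemma~\ref{lem:levelset} then only bounds $\eta$ \emph{locally uniformly in $w$}, so it cannot supply boundedness of $w_t$, and without boundedness the subsequential-limit step has nothing to act on. So your write-up correctly diagnoses a weakness that the paper's proof shares, but the additional machinery you invoke does not actually repair it; an honest fix would require either a coercivity/bounded-iterates assumption or a different argument controlling $\norm{\grad_w \overline F_{\theta,\nu}(w_t,\eta_t)}\cdot\norm{w_t-w^\star}$ directly.
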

\begin{proof}
    Let $\overline w_t \in \reals^{d+1}$ denote the pair $(w_t, \eta_t)$.
    With abuse of notation, we write $\overline F_{\theta, \nu}(\overline w_t)$ to denote $\overline F_{\theta, \nu}(w_t, \eta_t)$.
    Let $\overline w^\star$ denote a global minimizer of $\overline F_{\theta, \nu}$.
    Since $\overline F_{\theta, \nu}$ is convex (Property~\ref{property:a:setup:smoothing:modulus}) and differentiable, 
    we get that
    \begin{align*}
        0 \le \overline F_{\theta, \nu}(\overline w_t) - 
        \overline F_{\theta, \nu}(\overline w^\star)
        &\le \grad_{\overline w}\overline F_{\theta, \nu}(\overline w_t)\T (\overline w - \overline w^\star) \\
        &\le 
        \norm{\grad_{\overline w}\overline F_{\theta, \nu}(\overline w_t)} \norm{\overline w - \overline w^\star} 
        \stackrel{\mathrm{a.s.}}{\to} 0 \,,
    \end{align*}
from Proposition~\ref{prop:a:am-approx-stoc:convergence}.
The claim about convergence on $F_{\theta, \nu}$ follows because 
$F_{\theta, \nu}(w_t) = \overline F_{\theta, \nu}(w_t, \eta_t)$
from \eqref{eq:a:algo:am:exact}, and $\min F_{\theta, \nu} = \min \overline F_{\theta, \nu}$ due to convexity.

The claim about convergence on $ F_\theta$ follows
because (a) $0 \le \overline F_{\theta, \nu}(\overline w) - \overline F_\theta(\overline w) \le \nu/(2\theta)$ 
for all $\overline w \in \reals^{d+1}$ (Property~\ref{property:a:setup:smoothing:modulus}), 
(b) $\min \overline F_{\theta, \nu} \le 
    \min \overline F_{\theta}
    + {\nu}/({2\theta})$ (consequence of (a)),
    and,
(c) $\min \overline F_\theta = \min F_\theta$ (due to convexity) as
\begin{align*}
    F_\theta(w_t)
    &\le \overline F_\theta(w_t, \eta_t)
    \stackrel{(a)}{\le} \overline F_{\theta, \nu}(w_t, \eta_t) 
    \to \min \overline F_{\theta, \nu} 
    \stackrel{(b)}{\le} \min \overline F_{\theta} + \frac{\nu}{2\theta} 
    \stackrel{(c)}{=} \min F_\theta + \frac{\nu}{2\theta} \,.
\end{align*}
\end{proof}

In order to show the result on the subdifferential, 
we first need the following technical lemma. 
The notation $\partial$ refers to the Clarke 
subdifferential --- see Appendix~\ref{sec:a:techn:subdiff} 
for details.

\begin{lemma}\label{lem:diff}
Consider the setting of 
Proposition\;\ref{prop:a:am-approx-stoc:convergence}.
The function $F_{\theta,\nu}$ is locally Lipschitz and differentiable almost everywhere. 
Furthermore, $F_{\theta, \nu}$ is 
differentiable at $w$ precisely 
when 
\[
    Y_{\theta, \nu}(w):= \left\{\nabla_w \overline F_{\theta, \nu}(w,\eta) \text{ for all } \eta \in \argmin_{\eta' \in \reals} \overline F(w, \eta') \right\}
\]
is reduced to a singleton. 
In this case, we have $ Y_{\theta, \nu}(w) = \{\grad F_{\theta, \nu}(w)\}$. In general,
we have ${\partial} F_{\theta,\nu}(w) = \conv Y_{\theta, \nu}(w)$.
\end{lemma}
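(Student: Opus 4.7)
The plan is to establish the three claims---local Lipschitz continuity of $F_{\theta,\nu}$, the Clarke subdifferential formula $\partial F_{\theta,\nu}(w) = \conv Y_{\theta,\nu}(w)$, and the resulting equivalence between differentiability and $Y_{\theta,\nu}(w)$ being a singleton---by combining three ingredients already in hand: the joint smoothness of $\overline F_{\theta,\nu}$ from Property~\ref{property:a:setup:smoothing:modulus}, the explicit description of the minimizer set in $\eta$ from Property~\ref{property:a:smoothing:partial-min}, and the locally uniform level-boundedness from Lemma~\ref{lem:levelset}. These three inputs are exactly what a standard marginal-function theorem of Clarke calculus requires.

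For the Lipschitz bound, I first note the uniform estimate
\[
    \norm{\grad_w \overline F_{\theta,\nu}(w, \eta)}
    \leq \frac{1}{\theta} \sum_{k=1}^N \alpha_k \, |g_\nu'(F_k(w) - \eta)| \, \norm{\grad F_k(w)}
    \leq \frac{B}{\theta},
\]
since $|g_\nu'| \leq 1$ and each $F_k$ is $B$-Lipschitz. So $w \mapsto \overline F_{\theta,\nu}(w, \eta)$ is $(B/\theta)$-Lipschitz uniformly in $\eta$. Taking any $\widehat\eta \in M(w') := \argmin_{\eta'}\overline F_{\theta,\nu}(w', \eta')$ (nonempty by Property~\ref{property:a:smoothing:partial-min}) gives
\[
    F_{\theta,\nu}(w) - F_{\theta,\nu}(w')
    \leq \overline F_{\theta,\nu}(w, \widehat\eta) - \overline F_{\theta,\nu}(w', \widehat\eta)
    \leq \frac{B}{\theta} \norm{w - w'},
\]
and by symmetry $F_{\theta,\nu}$ is in fact $(B/\theta)$-Lipschitz on all of $\reals^d$. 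Differentiability almost everywhere then follows from Rademacher's theorem.

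For the subdifferential formula, I would invoke a marginal-function theorem (e.g., Theorem~2.8.2 of Clarke's book, or Corollary~4.5.3 of Bonnans-Shapiro). The hypotheses to verify are: (i) $\overline F_{\theta,\nu}$ is $C^1$ jointly in $(w, \eta)$, which is Property~\ref{property:a:setup:smoothing:modulus}; (ii) $M(w)$ is nonempty and compact, which follows from Property~\ref{property:a:smoothing:partial-min} (it is a closed interval) together with Lemma~\ref{lem:levelset}; and (iii) the set-valued map $w \mapsto M(w)$ is locally bounded and outer semicontinuous, which is again delivered by the locally uniform level-boundedness in Lemma~\ref{lem:levelset} combined with continuity of $\overline F_{\theta,\nu}$. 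Under these hypotheses, the marginal-function formula yields the inclusion $\partial F_{\theta,\nu}(w) \subseteq \conv Y_{\theta,\nu}(w)$. Equality, rather than mere inclusion, exploits the additional structural property that $\eta \mapsto \overline F_{\theta,\nu}(w, \eta)$ is convex (each term $g_\nu(F_k(w) - \eta)$ is convex in $\eta$ since $g_\nu$ is convex and $F_k(w) - \eta$ is affine in $\eta$), which supplies Clarke regularity and hence the reverse inclusion.

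The last claim follows from a standard fact: a locally Lipschitz function $h$ is differentiable at a point $w$ if and only if $\partial h(w)$ is a singleton, in which case this singleton is $\{\nabla h(w)\}$. Since $\partial F_{\theta,\nu}(w) = \conv Y_{\theta,\nu}(w)$ by the previous step, and the convex hull of a set is a singleton if and only if the set itself is, the lemma follows with $Y_{\theta,\nu}(w) = \{\nabla F_{\theta,\nu}(w)\}$ at every point of differentiability. The main technical obstacle in this plan is the \emph{equality} in the marginal-function formula; this is precisely why the convexity of $\overline F_{\theta,\nu}$ in $\eta$, together with $C^1$ regularity in $w$, is needed beyond the ingredients used for the Lipschitz bound.
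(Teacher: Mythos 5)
Your overall architecture matches the paper's: the paper also routes through a parametric-minimization (marginal-function) theorem --- specifically \citep[Theorem 10.58]{rockafellar2009variational} together with Theorems 8.49 and 9.13(b) there --- using Lemma~\ref{lem:levelset} to supply the required level-boundedness; your direct computation of the global Lipschitz constant $B/\theta$ followed by Rademacher is a pleasant elementary replacement for the paper's citation of \citep[Proposition 9.10]{rockafellar2009variational}. However, two of your key steps rest on incorrect justifications. First, the claim that convexity of $\eta \mapsto \overline F_{\theta,\nu}(w,\eta)$ ``supplies Clarke regularity'' of $F_{\theta,\nu}$ and hence the reverse inclusion does not hold: a pointwise \emph{minimum} of smooth functions is generically \emph{not} Clarke regular (it is upper-$C^1$, so $-F_{\theta,\nu}$ is the regular one), and convexity in the inner variable $\eta$ has no bearing on regularity of the marginal function in $w$. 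The equality $\partial F_{\theta,\nu}(w) = \conv Y_{\theta,\nu}(w)$ instead comes from applying the max-function formula to $-F_{\theta,\nu}(w) = \max_{\eta}\bigl(-\overline F_{\theta,\nu}(w,\eta)\bigr)$ over a compact interval of $\eta$ (compactness from Lemma~\ref{lem:levelset}) and then using the symmetry $\partial(-f) = -\partial f$ of the Clarke subdifferential --- or directly from the parametric-minimization theorem, as the paper does.

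Second, the ``standard fact'' you invoke --- that a locally Lipschitz $h$ is differentiable at $w$ if and only if $\partial h(w)$ is a singleton --- is false in the direction you need: ordinary differentiability at a point does \emph{not} imply the Clarke subdifferential is a singleton (consider $h(x) = x^2\sin(1/x)$ at $0$, where $\partial h(0) = [-1,1]$). The true general statement is that $\partial h(w)$ is a singleton if and only if $h$ is \emph{strictly} differentiable at $w$. So your argument covers ``$Y_{\theta,\nu}(w)$ singleton $\Rightarrow$ differentiable'' but leaves a gap in ``differentiable $\Rightarrow$ $Y_{\theta,\nu}(w)$ singleton.'' Closing that direction requires the structured differentiability characterization for marginal functions of smooth families (precisely the content of \citep[Theorem 10.58]{rockafellar2009variational} that the paper leans on), for which ordinary differentiability does force the active-gradient set to collapse; a generic Clarke-subdifferential fact will not do.
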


\begin{proof}
Recall from the setting of
Proposition\;\ref{prop:a:am-approx-stoc:convergence} that
each $F_k$ is continuously differentiable. 
The result is essentially a consequence of\;\citep[Theorem 10.58]{rockafellar2009variational}, 
but we need to invoke several other results of\;\cite{rockafellar2009variational}, as follows. Proposition 9.10  gives that $F_{\theta, \nu}$ is locally Lipschitz (or strictly continuous in the terminology of the book). Theorems 8.49 and 9.13(b) then give that the Clarke subdifferential is the convex hull of the limiting subdifferential (which is closed and bounded). Finally, with the help of Lemma\;\ref{lem:levelset}, we can apply Theorem 10.58, to get the expressions of the limiting subdifferential with $Y_{\theta, \nu}(w)$, and the characterization of differentiability. Thus, we get the expressions of the statement and the proof is complete. Note 
that in the convex case, we retrieve the result of  
\cite[Corollary 4.5.3]{hiriart1996convex}.
\end{proof}

This property gives the following subdifferential result as a corollary of Proposition\;\ref{prop:a:am-approx-stoc:convergence}.

\begin{corollary}
\label{cor:dist}
    Consider the setting of Proposition\;\ref{prop:a:am-approx-stoc:convergence}.
    We have convergence to stationarity: the distance of the subdifferential to $0$, denoted
    $\dist(0,\partial F_{\theta, \nu}(w_t))$, 
    vanishes almost surely.
\end{corollary}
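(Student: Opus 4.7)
The plan is to exploit the fact that the $\eta$-step in the meta-algorithm is exact, so the gradient quantity controlled in Proposition~\ref{prop:a:am-approx-stoc:convergence} lies in the set characterizing $\partial F_{\theta,\nu}(w_t)$ given by Lemma~\ref{lem:diff}. The distance to zero is then bounded by that gradient norm, and the conclusion follows immediately.

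More precisely, I would first observe that by the $\eta$-step of \eqref{eq:a:algo:am:exact}, the iterate $\eta_t$ is an exact minimizer of $\overline F_{\theta,\nu}(w_t,\,\cdot\,)$, so $\eta_t \in \argmin_{\eta \in \reals} \overline F_{\theta,\nu}(w_t,\eta)$. Applying Lemma~\ref{lem:diff} at $w=w_t$, this yields
\[
    \grad_w \overline F_{\theta,\nu}(w_t,\eta_t) \in Y_{\theta,\nu}(w_t) \subseteq \conv Y_{\theta,\nu}(w_t) = \partial F_{\theta,\nu}(w_t).
\]
Taking the distance to zero inside the subdifferential and using the definition of $\dist$, I would then write
\[
    \dist\big(0,\partial F_{\theta,\nu}(w_t)\big) \;\le\; \norm{\grad_w \overline F_{\theta,\nu}(w_t,\eta_t)}.
\]

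Finally, Proposition~\ref{prop:a:am-approx-stoc:convergence} gives $\grad_{w,\eta}\overline F_{\theta,\nu}(w_t,\eta_t) \to 0$ almost surely, and in particular $\norm{\grad_w\overline F_{\theta,\nu}(w_t,\eta_t)} \to 0$ almost surely. Combined with the inequality above, this yields $\dist(0,\partial F_{\theta,\nu}(w_t)) \to 0$ almost surely, as desired.

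There is essentially no main obstacle here: the only subtlety worth double-checking is that Lemma~\ref{lem:diff} legitimately allows us to plug \emph{any} $\eta$ in the argmin set (recall this set can be a nontrivial interval by Property~\ref{property:a:smoothing:partial-min}), and to treat the resulting $\grad_w\overline F_{\theta,\nu}(w,\eta)$ as an element of $\partial F_{\theta,\nu}(w)$. That is precisely the content of Lemma~\ref{lem:diff}, which was set up to invoke the envelope-type formula from~\citep[Theorem~10.58]{rockafellar2009variational} via the local uniform level-boundedness established in Lemma~\ref{lem:levelset}. Once this is granted, the corollary is a one-line consequence of Proposition~\ref{prop:a:am-approx-stoc:convergence}.
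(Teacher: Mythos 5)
Your proposal is correct and follows essentially the same route as the paper: the paper's proof is the same chain $\dist(0,\partial F_{\theta,\nu}(w_t)) = \dist(0,\conv Y_{\theta,\nu}(w_t)) \le \dist(0, Y_{\theta,\nu}(w_t)) \le \norm{\grad_w \overline F_{\theta,\nu}(w_t,\eta_t)}$ via Lemma~\ref{lem:diff}, followed by an appeal to Proposition~\ref{prop:a:am-approx-stoc:convergence}. Your observation that the exactness of the $\eta$-step places $\grad_w \overline F_{\theta,\nu}(w_t,\eta_t)$ inside $Y_{\theta,\nu}(w_t)$ is precisely the step the paper uses.
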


\begin{proof}
We can bound the distance of the subdifferential to $0$ from
the expressions of of Lemma\;\ref{lem:diff}, as follows:
\begin{align*}
\dist(0,\partial F_{\theta, \nu}(w_t)) 
    &= \dist(0, \conv Y_{\theta, \nu}(w_t))
    \leq \dist(0, Y_{\theta, \nu}(w_t)) \\
    &= \min_{\eta \text{ optimal}} \norm{\nabla_w \overline F_{\theta, \nu}(w_t,\eta)}
    \leq \norm{\nabla_w \overline F_{\theta, \nu}(w_t,\eta_t)}
    \,.
\end{align*}
The asymptotic stationarity is thus a direct consequence of Proposition\;\ref{prop:a:am-approx-stoc:convergence}.
\end{proof}

\subsection{Quantile Computation with Secure Aggregation}
\label{sec:a:algo:quantile}

Recall from the Section~\ref{sec:algos}
that Algorithm~\ref{algo:main:fed:proposed},
as stated, requires each selected client device
to send its loss to the server for the client filtering
step. We now present a way to perform this step 
without any reduction in privacy from directly 
sending client losses to the server. 
This can be achieved by implementing the quantile 
computation using secure aggregation. 
This section is based on \cite{pillutla2019robust},
who show how to compute the geometric median using secure aggregation.

\myparagraph{Setup}
Suppose we wish to find the $\tau$-quantile of 
$x_1, \cdots, x_m \in \reals$ with respective weights 
$\alpha_1, \cdots, \alpha_m > 0$.
Recall that $\mu_\tau$ is the $\tau$ quantile if 
\[
    \sum_{k\,:\, x_k \le \mu_\tau} \alpha_k \ge \tau \,,
    \quad \text{and,} \quad
    \sum_{k \,: \, x_k \ge \mu_\tau} \alpha_k \ge 1-\tau \,.
\]

It is known~\citep[e.g.,][Chap. 1, Exercise 3]{ferguson1967mathematical} that 
$\mu_\tau$ is a $\tau$-quantile iff it it 
minimizes $H_\tau : \reals\to \reals$ defined as 
\[
    H_\tau(\mu) := \sum_{k=1}^m \alpha_k h_\tau(x_k - \mu) \,, \quad \text{where}, \quad
    h_\tau(\rho) := 
    \begin{cases}
        \tau \rho\,, & \text{ if } \rho \ge 0 \,,\\
        -(1-\tau)\rho\,, & \text{ if } \rho < 0 \,.
    \end{cases}
\]

\myparagraph{Algorithm}
Recall that secure aggregation can find a weighted 
mean of vectors (and hence, scalars) 
distributed across $m$ devices 
without revealing each device's vector to other 
devices or the server.
We now show how to compute 
a quantile as an iterative weighted mean, making it amenable
to implementation via secure aggregation. 
We note that this might not be the most
efficient way of implementing quantile computation in a
privacy-preserving manner. 
This is because secure aggregation was designed 
for high-dimensional vectors and it might be possible to do much better in the case of scalars. 
Nevertheless, we present the algorithm
as a proof-of-concept.
The underlying algorithm, based on the principle
of majorization-minimization was used, e.g.,in~\citep{hunter2000quantile}.

For any $\widetilde \mu \notin \{x_1, \cdots, x_m\}$ , 
define 
\[
    \widetilde H_\tau(\mu; \widetilde \mu)
    := \frac{1}{4} \sum_{k=1}^m \alpha_k
    \left[ 
        \frac{\normsq{x_k - \mu}}{\norm{x_k - \widetilde \mu}} + (4\tau - 2)(x_k - \mu) + \norm{x_k - \widetilde \mu}
    \right] \,,
\]
as a majorizing surrogate for $H_\tau$ at $\widetilde \mu$, i.e., 
$\widetilde H_\tau(\cdot \, ; \widetilde \mu) \ge H_\tau$
and 
$\widetilde H_\tau(\widetilde \mu ; \widetilde \mu) = H_\tau(\widetilde \mu)$.
Note that $\widetilde H_\tau(\mu ; \widetilde \mu)$ 
is an isotropic quadratic in $\mu$.

A majorization-minimization algorithm to minimize $H$ 
and hence find the $\tau$-quantile can thus be given as
\begin{align}
\nonumber
    \mu_{t+1} &=
    \begin{cases}
    \argmin_\mu \widetilde H_\tau(\mu; \mu_t)\,,
    & \text{ if } \mu_t \notin \{x_1, \cdots, x_m\} \\
    x_k\,, &\text{ if } \mu_t = x_k \text{ for some } k \in [m] 
    \end{cases}
    \\ &=
    \begin{cases}
    \big(\sum_{k=1}^m \beta_{k, t} x_k + (2q-1)\big)/{\sum_{k=1}^m \beta_{k, t}}\,,
    & \text{ if } \mu_t \notin \{x_1, \cdots, x_m\} \\
    x_k\,, &\text{ if } \mu_t = x_k \text{ for some } k \in [m]\,, 
    \end{cases}
\label{eq:a:algo:quantile_mm}
\end{align}
where 
\[
    \beta_{k, t} = \frac{\alpha_k}{\norm{x_k - \mu_t}} \,.
\]

\myparagraph{Modifying \newfl}
We now modify Algorithm~\ref{algo:main:fed:proposed}
to perform the quantile computation in a privacy-preserving
manner. However, the server can no longer perform 
the client filtering step. For this, we pass $\eta$ to 
the clients and let them filter themselves in the run 
of \textit{LocalUpdate}.
The overall algorithm in given in Algorithm~\ref{algo:a:fed:proposed:modified}.

\begin{algorithm}[tb]
	\caption{The \newfl{} algorithm: Modified Version}
	\label{algo:a:fed:proposed:modified}
\begin{algorithmic}[1]
		\Require Function $F$ distributed over $N$ devices, 
	   		number of local updates $n_\mathrm{local}$, 
	   		learning rate sequence $(\gamma_t)$,
	   		devices per round $m$,
	   		initial iterate $w_0$, conformity level $0 < \theta < 1$
	   	\Statex \textbf{Server executes:}
	    \For{$t=1,2, \cdots$}
	    	\State Sample devices $S_t \sim \mathrm{Unif}([N])^m$
	    	\State Broadcast $w_t$ to each device $k \in S_t$
	    	\State Device $k$ computes $F_k(w_t)$ 
	    	\State $\eta_t \gets \text{Quantile}\left(1-\theta,  \big(F_k(w_t), \alpha_k)\big)_{k\in S_t}\right)$
	    	using Secure Aggregation via \eqref{eq:a:algo:quantile_mm} 
	    	\ParFor{each device $k \in S_t$}
	    		\State $(w_{k, t}, \alpha_{k,t}) \gets $ \Call{LocalUpdate}{$k, w_t, \eta_t$}
	    	\EndParFor %
	   		\State $w_t \gets \textit{SecureAggregate}\left( \{(w_{k,t}, \alpha_{k,t} )\}_{k \in S_t'}\right)$
	    \EndFor
	    \Statex
	    \Function{LocalUpdate}{$k, w, \eta$}
	    \Comment Run on device $k$
	    \If{$F_k(w) \ge \eta$}
	        \Comment{Device $k$ has passed the filter}
	        \For{$i = 1, \cdots, n_\mathrm{local}$}
	    		\State Update $w \gets w - \gamma_t \grad f(w;\xi_i)$ using $\xi_i \sim q_k$
	    	\EndFor
	    	\State \Return $(w, \alpha_k)$
	    \Else \, 
	        \Comment{Device $k$ has failed the filter}
	    	\State \Return $(w, 0)$
	    \EndIf
	    \EndFunction
\end{algorithmic}
\end{algorithm}

\section{Experimental Results: Complete Details} \label{sec:a:expt}

We conduct our experiments on three datasets
from computer vision and natural language processing.
These datasets contain a natural, non-iid split of 
data which is reflective of data heterogeneity encountered in federated learning.
In this section, we describe in details the experimental setup and the results. Section~\ref{sec:datasets} described the datasets and tasks. Section~\ref{sec:hyperparameters} gives a detailed description of the hyperparameters used and the evaluation methodology. Lastly, Section~\ref{sec:exp_results} gives the experimental results.

Since each device has a finite number of datapoints
in the examples below, we let its probability distribution $q_k$ to be the empirical distribution 
over the available examples, and the weight
$\alpha_k$ to be proportional to the number of 
datapoints available on the device. 

\subsection{Datasets and Tasks}\label{sec:datasets}
We use the three following datasets, described in detail below. 
The data was preprocessed using
the LEAF framework~\cite{caldas2018leaf}.

\subsubsection{EMNIST for handwritten-letter recognition}

\myparagraph{Dataset}
EMNIST~\cite{cohen2017emnist} is a character recognition dataset. This dataset contains images of handwritten digits or letters, labeled with their identification (a-z,A-Z, 0-9).
The images are grey-scaled pictures of $28 \times 28 = 784$ pixels. 

\myparagraph{Train and Test Devices}
Each image is also annotated with the ``writer'' of the image, i.e.,
the human subject who hand-wrote the digit/letter during the data collection process. From this set of devices, we discard all devices containing less than 100 images and randomly subsampled half of the remaining devices resulting in $1730$ total devices. We performed the subsampling for computational tractability. 
We finally split these $1730$ devices into a training set of devices and a testing set of devices of equal sizes. 

\myparagraph{Model}
We consider the following models for this task.

\begin{itemize}
    \item \textbf{Linear Model}:
    We use a linear softmax regression model.
    In this case each $F_k$ is convex.
    We train parameters $w \in \mathbb{R}^{62 \times 784}$. Given an input image $x\in \mathbb{R}^{784}$, the score of each class $c\in [62]$ is the dot product $\langle w_c,  x \rangle$. The probability $p_c$ assigned to each class is then computed as a softmax: $p_c = \exp{\langle w_c,  x \rangle}/ \sum_{c'} \exp{\langle w_{c'},  x \rangle}$. The prediction for a given image is then the class with the highest probability.
    
    \item \textbf{ConvNet}: We also consider a convolutional neural network. Its architecture satisfies the following scheme:
    \begin{align*}
        \parbox{30pt}{\centering  Input \\ \small{784}} \longrightarrow & \parbox{70pt}{\centering Conv 2D \\ \small{filter = 32} \\ \small{kernel = $5 \times 5$}} \longrightarrow \;\;\;\; \parbox{30pt}{ReLU} \;\; \longrightarrow \parbox{76pt}{\centering Max Pool \\ \small{kernel = $2 \times 2$} \\ \small{stride = $2$}} \longrightarrow  \parbox{76pt}{\centering Conv 2D \\ \small{filter = 64} \\ \small{kernel = $5 \times 5$}} \\
       &\longrightarrow   \;\; \parbox{30pt}{ReLU} \!\!\! \longrightarrow \!\!\! \parbox{76pt}{\centering Max Pool \\ \small{kernel = $2 \times 2$} \\ \small{stride = $2$}} \longrightarrow \!\!\!\!\!\!\!\!\! 
       \parbox{76pt}{\centering F.C. \\ \small{units = $62$}} \longrightarrow  \text{score} \\
    \end{align*}
    In other words, it contains two convolutional layers with max-pooling and one fully connected layer (F.C) of which outputs a vector in
    $\reals^{62}$. The outputs of the ConvNet are scores with respect to each class. They are also used with a softmax operation to compute probabilities.
\end{itemize}

The loss used to train both models is the multinomial logistic loss $L(p,y) = - \log p_y$ where $p$ denotes the vector of probabilities computed by the model and $p_y$ denotes its $y$\textsuperscript{th} component. 
In the convex case we add a quadratic regularization term of the form $({\lambda}/{2}) \|w\|_2^2$. 

\subsubsection{Sent140 for Sentiment Analysis}

\myparagraph{Dataset}
Sent140~\citep{go2009twitter} is a text dataset of 1,600,498 tweets produced by 660,120 Twitter accounts. Each tweet is represented by a character string with emojis redacted. Each tweet is labeled with a binary sentiment reaction (i.e., positive or negative), which is inferred based on the emojis in the original tweet. 

\myparagraph{Train and Test Devices}
Each device represents a twitter account and contains only tweets published by this account. From this set of devices we discarded all devices containing less that 50 tweets, and split the 877 remaining devices rest of devices into a train set and a test set of sizes $438$ and $439$ respectively.
This split was held fixed for all experiments.
Each word in the tweet is encoded by 
its $50$-dimensional GloVe embedding~\citep{pennington2014glove}.

\myparagraph{Model}
We consider the following models.

\begin{itemize}
\item \textbf{Linear Model}: We consider a 
    $l_2$-regularized linear logistic 
    regression model where the parameter vector $w$  is of dimension $50$. In this case, each $F_k$ is convex. We summarize each tweet by the average of the GloVe embeddings of the words of the tweet. 
    
\item \textbf{RNN}: The nonconvex model is a Long Short Term Memory (LSTM) model~\citep{hochreiter1997long} 
    built on the GloVe embeddings of the words of the tweet. 
    The hidden dimension of the LSTM is same as the embedding dimension, i.e., $50$. We refer to it as ``RNN''.
\end{itemize}

The loss function is the binary logistic loss.

\subsubsection{Shakespeare for Language Modeling}

\myparagraph{Dataset}
The dataset consists of text from the Complete Works of William Shakespeare as raw text.
We formulate the task as a multi-class classification problem with 
53 classes (a-z, A-Z, other) as follows.
At each point, we consider the previous $H=20$ characters, 
and build $x \in \{0, 1\}^{H\times 53}$ as a one-hot encoding of these $H$ characters. 
The goal is then to predict the next character, which can belong to 53 classes.

\myparagraph{Train and Test Devices}
Each device corresponds to a role in a given play (e.g., Brutus from The Tragedy of Julius Caesar). 
All devices with less than 100 total examples are discarded, and the remaining devices are split into $544$ training and $545$ testing devices.

\myparagraph{Models}
We use a Gated Recurrent Unit (GRU) model~\cite{cho2014learning}
with $128$ hidden units for this purpose. 
We refer to it as ``RNN'' in the plots.
This is followed by a fully connected layer with 53 outputs,
the output of which is used as the score for each character. 
As in the case of image recognition, probabilities are obtained using the softmax operation.
We use the multinomial logistic loss.

\subsection{Algorithms, Hyperparameters and Evaluation strategy}\label{sec:hyperparameters}

\subsubsection{Algorithms}
We compare \fedavg~\citep{mcmahan2017communication}
with the proposed algorithm, given in Algorithm \ref{algo:main:fed:proposed} on the datasets presented in Section~\ref{sec:datasets}.
We run Algorithm \ref{algo:main:fed:proposed} with different conformity levels $\theta$ for $\theta \in \{0.8, 0.5, 0.1\}$. 

\subsubsection{Hyperparameters}
    
\myparagraph{Rounds}    
We measure the progress of each algorithm by the 
number of calls to secure aggregation, i.e., the 
number of communication rounds.
    Both \fedavg{} and Algorithm~\ref{algo:main:fed:proposed} require one call to secure 
    aggregation per iteration, hence the number of communication rounds is also equivalently the number of iterations of each algorithm.
    
    For the experiments, we choose the number of 
    communication rounds depending on the convergence of the optimization and a budget on wall-clock time.
    For the EMNIST dataset, we run the algorithm for $3000$ communication rounds with the linear model and $1000$ for the ConvNet. For the Sent140 dataset, we run the $1000$ communication rounds for the linear model and $600$ for the RNN. For the Shakespeare dataset, we run $600$ communication rounds. 
 
\myparagraph{Devices per Round}   
    We chose the number of devices per round similar to the 
    baselines of \cite{mcmahan2017communication}. All devices are assumed to be available and selections are made uniformly at random. 
    In particular, we select $100$ devices per round
    for all experiments with the exception of Sent140 RNN
    for which we used $50$ devices per round.
    
\myparagraph{\textit{LocalUpdate} and Minibatch Size}
    Each selected device (which is not un-selected, in case of Algorithm~\ref{algo:main:fed:proposed}) 
    locally runs $1$ epoch of mini-batch stochastic gradient descent
    in the \textit{LocalUpdate} method. 
    We used the default mini-batch of $10$ for all experiments~\cite{mcmahan2017communication}, except for $16$ for EMNIST ConvNet and Shakespeare. This is because the latter experiments
    were run using on a GPU, as we describe in the section on the hardware.

\myparagraph{Learning rate scheme} 
    We now describe the learning rate $\gamma_t$ used during \textit{LocalUpdate}.
    For the linear model we used a constant fixed learning rate $\gamma_t \equiv \gamma_0$, while 
    for the neural network models, 
    we using a step decay scheme of the learning rate $\gamma_t = \gamma_0 c^{-\lfloor t/t_0 \rfloor}$ for some where $\gamma_0$ and $0 < c \le 1$ are tuned. 
    We tuned 
    the learning rates only for the baseline \fedavg{}
    and used the same learning rate for \newfl{} at all values of $\theta$. 
    
    For the neural network models,
    we fixed $t_0$ so that the learning rate was decayed once or twice during the fixed time horizon $T$. 
    In particular, we used $t_0 = 400$ for EMNIST ConvNet (where $T=1000$), 
    $t_0 = 200$ for Sent140 RNN (where $T=600$) and 
    $t_0 = 200$ for Shakespeare (where $T=600$).
    We tuned $c$ from the set $\{2^{-3}, 2^{-2}, 2^{-1}, 1\}$,
    while the choice of the range of $\gamma_0$ 
    depended on the dataset-model pair. 
    The tuning criterion we used was the mean of
    the loss distribution over the training devices
    (with device $k$ weighted by $\alpha_k$)
    at the end of the time horizon.
    That is, we chose the $\gamma_0, c$ 
    which gave the best terminal training loss. 

\myparagraph{Tuning of the regularization parameter}
    The regularization parameter $\lambda$ for linear models was tuned with cross validation from the set $\{10^{-k}\,:\, k \in \{3,\cdots, 8\} \}$. This was performed as described below.
    
    For each dataset, we held out half the training devices as validation devices. Then, for different values of the regularization parameter, 
    we trained a model with the (smaller subset of) training devices and evaluate its performance on the validation devices. 
    We selected the value of the regularization parameter
    as the one which gave the smallest $10$\textsuperscript{th} percentile of the misclassification error on the validation devices.

\subsubsection{Evaluation Strategy and Other Details}
\label{sec:evaluation_strategy}

\myparagraph{Evaluation metrics}
    We record the loss of each training device and the misclassification error of each testing device, as measured on its local data. 
    
    The evaluation metrics noted in Section~\ref{sec:exp_results} are the following : the weighted mean of the loss distribution over the training devices, the (unweighted) mean misclassification error over the testing devices, the weighted $\tau$-percentile of the loss over the training device and the (unweighted) $\tau$-percentile of the misclassification error over the testing devices for values of $\tau$ among $\{20, 50, 60, 80, 90, 95 \}$. The weight used for training
    device $k$ is $\alpha_k$, which was set proportional to the number of datapoints on the device.
     
\myparagraph{Evaluation times}    
    We evaluate the model during training process for once every $l$ rounds. The value of $l$ used was $l=50$ for EMNIST linear model, 
    $l=10$ for EMNIST ConvNet and Shakespeare, $l=20$ for Sent140 linear model and $l=25$ for Sent140 RNN.

\myparagraph{Hardware}
    We run each experiment as a simulation as a single process. 
    The linear models were trained on 
    m5.8xlarge AWS instances, each with an Intel Xeon Platinum 8000 series processor with $128$ GB of memory running at most $3.1$ GHz. 
    The neural network experiments were trained on workstation with 
    an Intel i9 processor with $128$ GB of memory at $1.2$ GHz, and two Nvidia Titan Xp GPUs. The Sent140 RNN experiments were run on a CPU
    while the other neural network experiments were run using GPUs.

\myparagraph{Software Packages}
    Our implementation is based on NumPy using the Python language. 
    In the neural network experiments, we use PyTorch to implement the 
    \textit{LocalUpdate} procedure, i.e., the model itself and the 
    automatic differentiation routines provided by PyTorch to make
    SGD updates.

\myparagraph{Randomness}
    Since several sampling routines appear in the procedures such as the selection of devices or the local stochastic gradient, we carry our experiments with five different seeds and plot the average metric value over these seeds. Each simulation is run on a single process. 
    Where appropriate, we report one standard deviation from the mean.
    
\subsection{Experimental Results}\label{sec:exp_results}

We now present the experimental results of the paper. 
\begin{itemize}[topsep=0.2pt]
    \setlength\itemsep{0.2 pt}
    \item We study the performance of each algorithm over the course of the optimization.
    \item We plot the histograms the distribution of train losses and test misclassification error over the devices at the end of the training process.
    \item We present in the form of scatter plots the training loss and test misclassification error across devices achieved at the end of training, versus the number of local data points on the device.
    \item We present the number of devices selected at each communication round for \newfl{} (after device filtering).
    \item We study the effect of implementing the $w$-step as multiple
    iteration of \fedavg{} (recall that Algorithm~\ref{algo:main:fed:proposed} uses one round of \fedavg{}
    for the $w$-step of the alternating minimization meta-algorithm
    of Algorithm~\ref{algo:a:fed:proposed:am-raw}).
\end{itemize}

\myparagraph{Performance Across Iterations}
    We group plots by models and datasets.
    The $x$ axis of the plots below represents the number of communication rounds along the simulation. The $y$-axis represents either the training loss or the testing accuracy (either the mean or some percentile).
    Table~\ref{tab:a:expt:iter-summary}
    lists the figure numbers with the corresponding plots.
    \begin{table}[h!]
    \caption{Figure numbers for performance across iterations for 
        different datasets and different models.}
    \label{tab:a:expt:iter-summary}
    \begin{center}
    \begin{tabular}{ccc}
        \toprule
        \textbf{Dataset} & \textbf{Model} & \textbf{Figure} \\
        \midrule
        EMNIST & Linear Model & Figure~\ref{fig:a:expt:emnist-linear} \\
        EMNIST & ConvNet & Figure~\ref{fig:a:expt:emnist-conv} \\
        Sent140 & Linear Model & Figure~\ref{fig:a:expt:sent140-linear} \\
        Sent140 & RNN & Figure~\ref{fig:a:expt:sent140-lstm} \\
        Shakespeare & RNN & 
            Figure~\ref{fig:a:expt:shake-lstm} \\
        \bottomrule
    \end{tabular}
    \end{center}
    \end{table}

\begin{figure}[t!]
\centering
\begin{subfigure}[b]{\textwidth}
    \centering
   \includegraphics[width=\linewidth]{./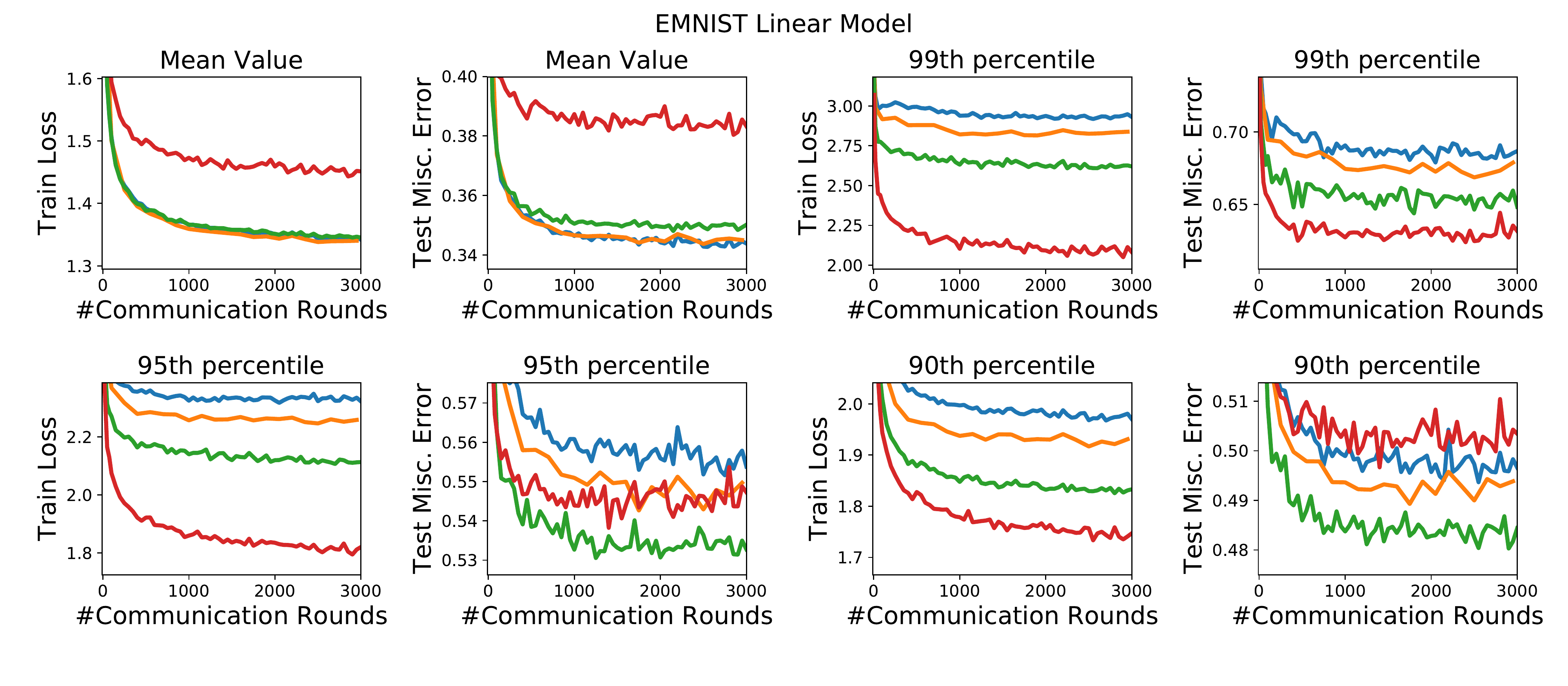}
   \label{fig:Ng1} 
\end{subfigure}

\vspace{-20pt}
\begin{subfigure}[b]{\textwidth}
    \centering
   \includegraphics[width=\linewidth]{./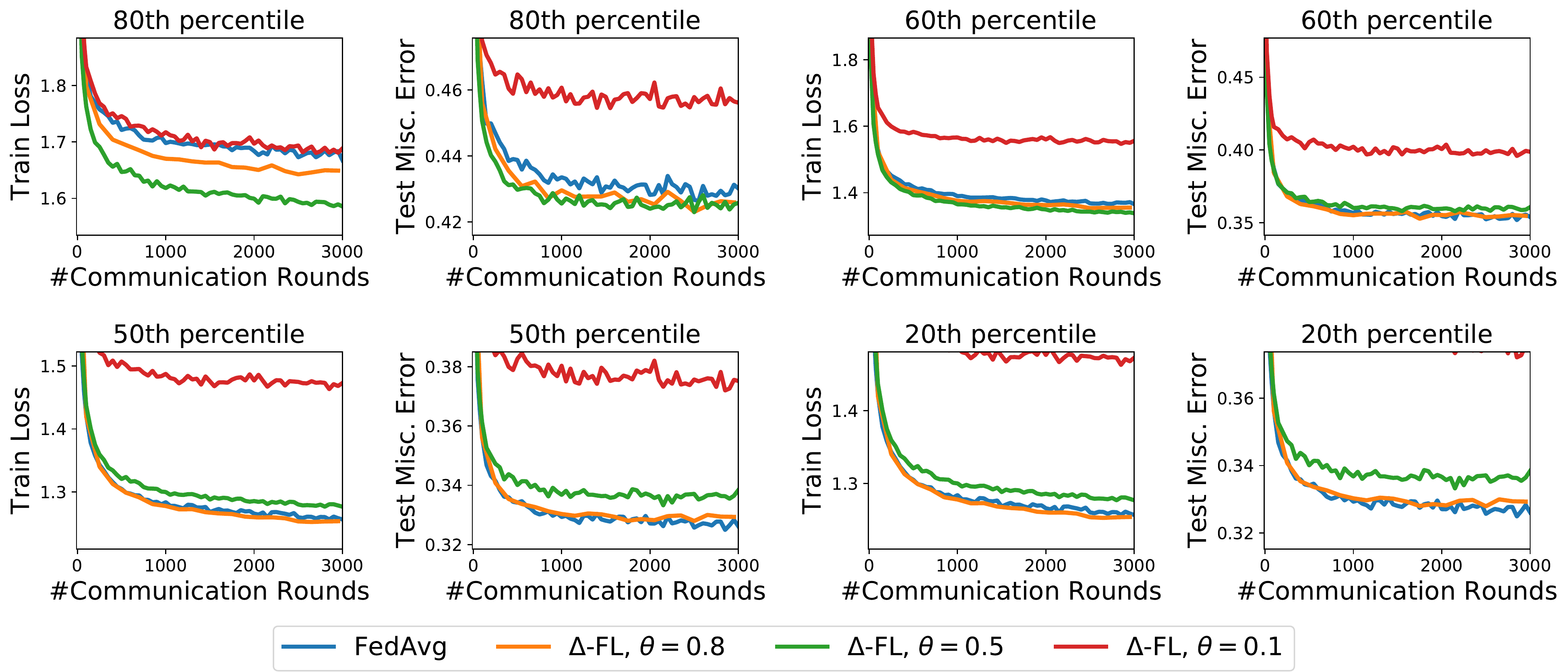}
   \label{fig:Ng2}
\end{subfigure}

\caption{Performance across iterations of EMNIST linear model.}
\label{fig:a:expt:emnist-linear}
\end{figure}

\begin{figure}[t!]
\centering

\begin{subfigure}[b]{\textwidth}
    \centering
   \includegraphics[width=\linewidth]{./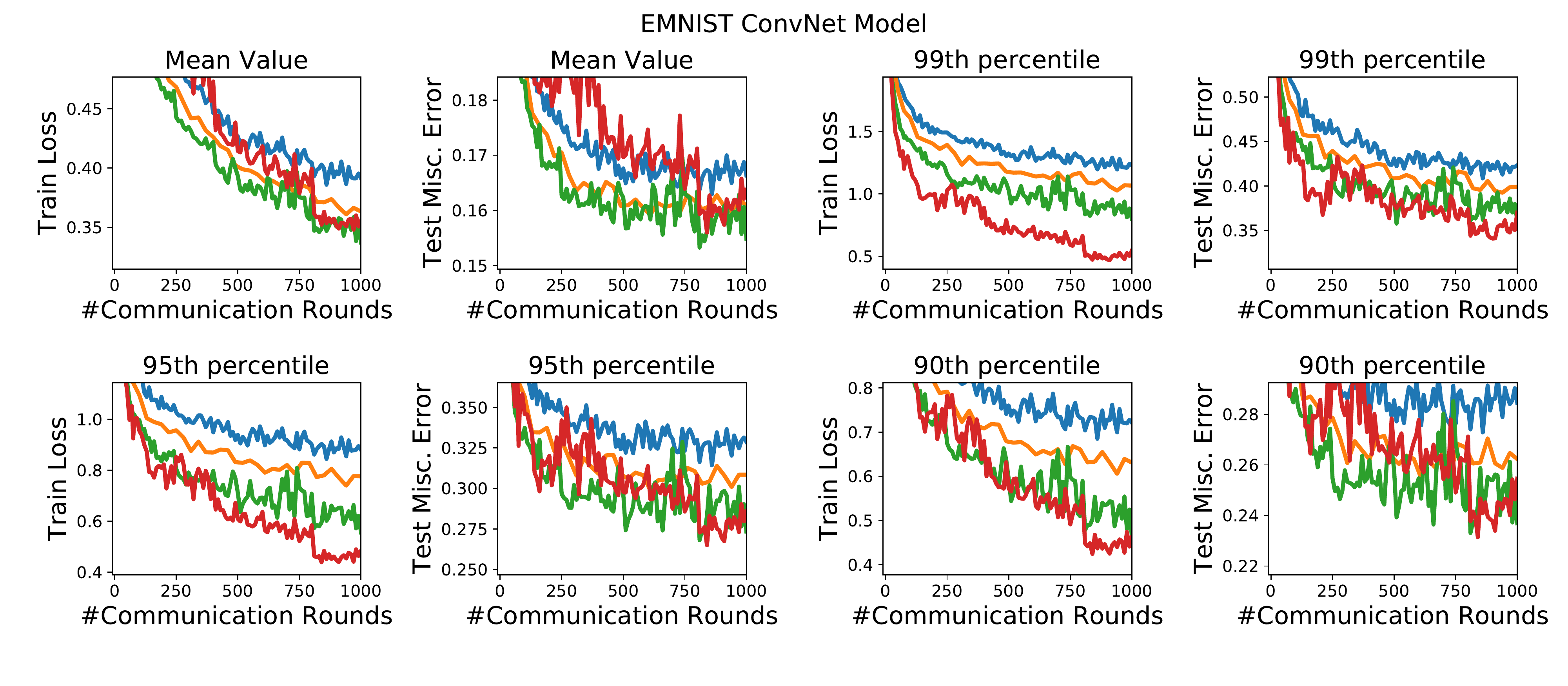}
   \label{fig:Ng1} 
\end{subfigure}

\vspace{-20pt}
\begin{subfigure}[b]{\textwidth}
    \centering
   \includegraphics[width=\linewidth]{./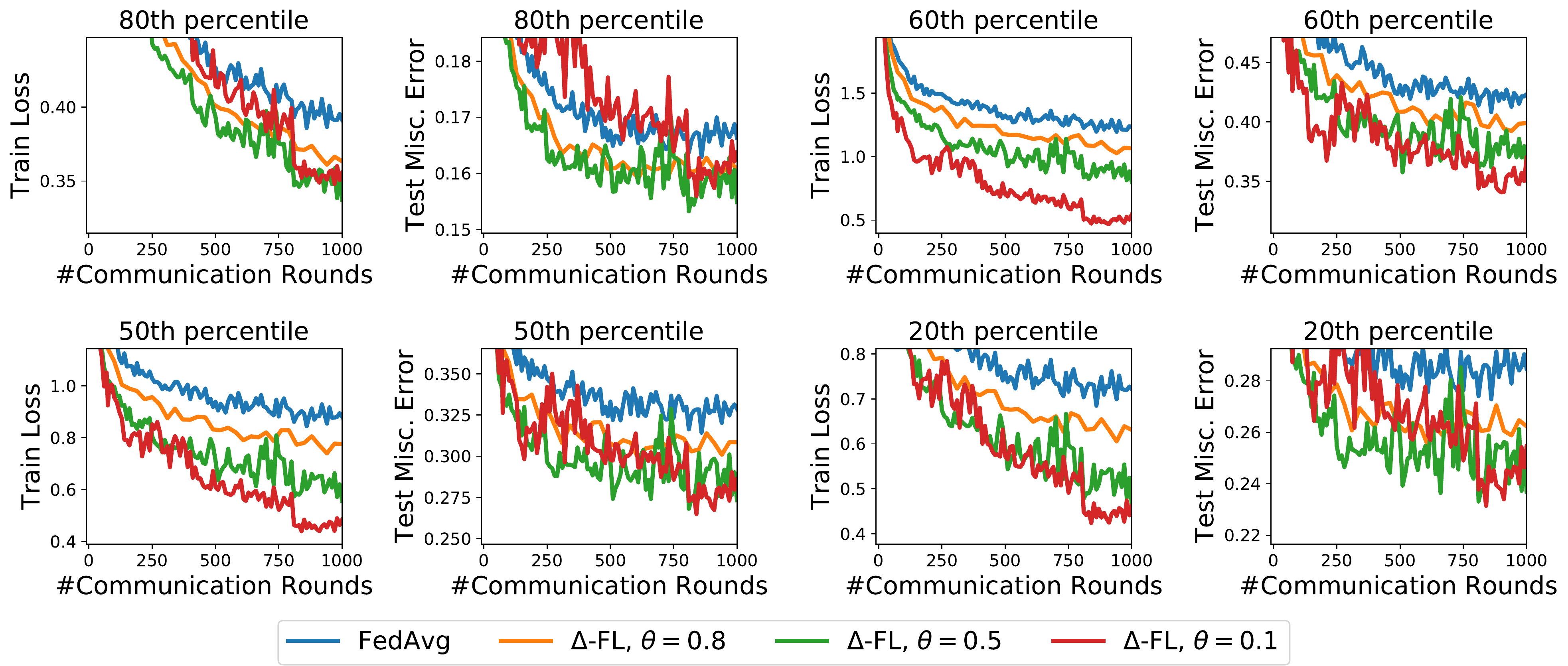}
   \label{fig:Ng2}
\end{subfigure}
\caption{Performance across iterations of EMNIST ConvNet model.}
\label{fig:a:expt:emnist-conv}
\end{figure}

\begin{figure}[t!]
\hspace{-2pt}\begin{subfigure}[b]{1.0\textwidth}
    \centering
   \includegraphics[width=\linewidth]{./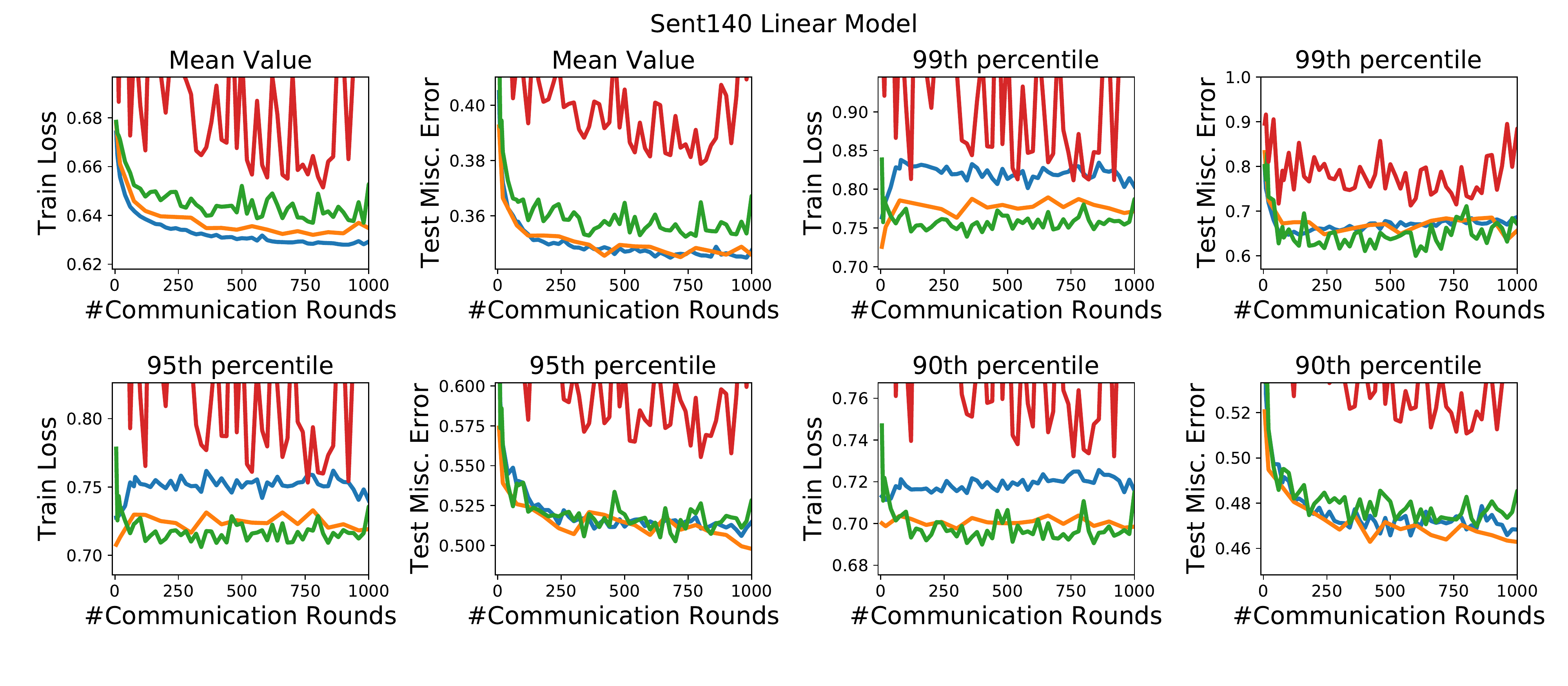}
   \label{fig:Ng1} 
\end{subfigure}

\vspace{-20pt} 
\begin{subfigure}[b]{1.0\textwidth}
    \centering
   \includegraphics[width=\linewidth]{./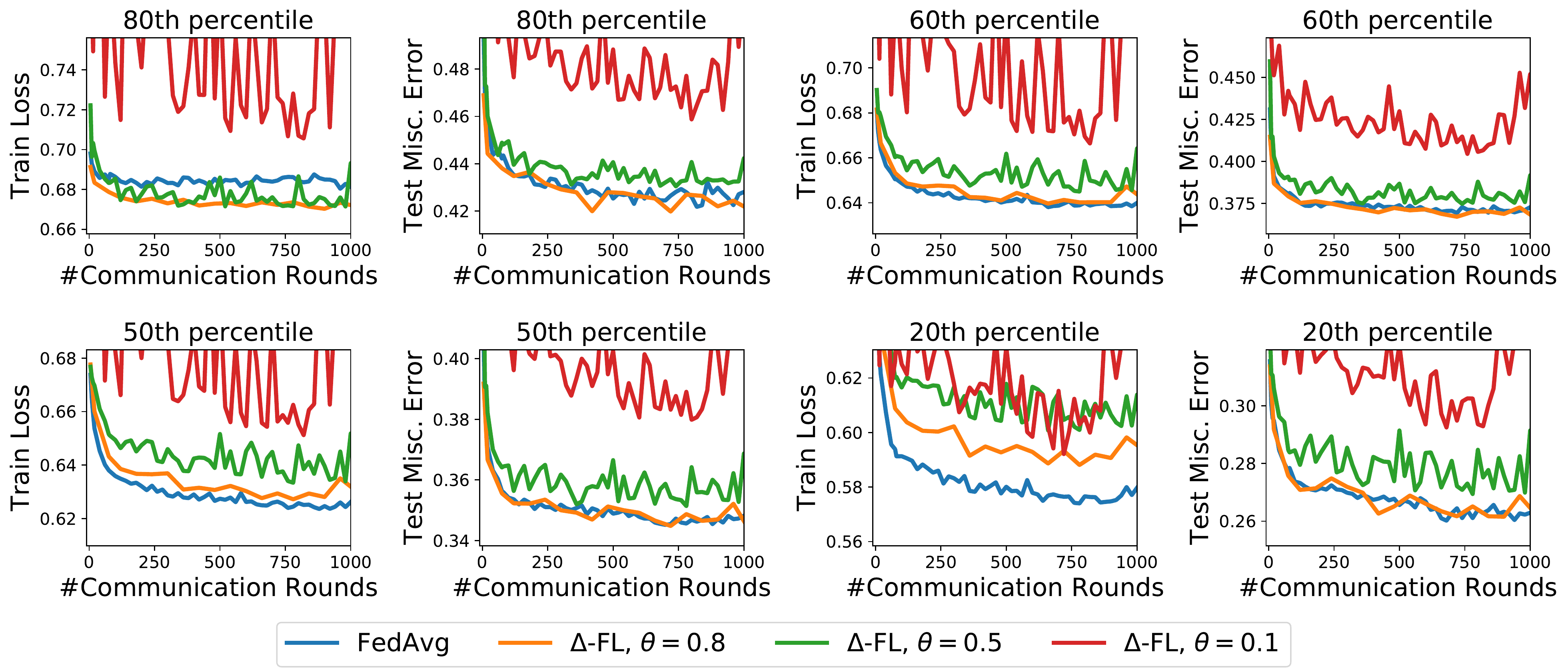}
   \label{fig:Ng2}
\end{subfigure}
\caption{Performance across iterations of Sent140 linear model.}
\label{fig:a:expt:sent140-linear}

\end{figure}

\begin{figure}[t!]
\centering
\begin{subfigure}[b]{1.0\textwidth}
    \centering
   \includegraphics[width=\linewidth]{./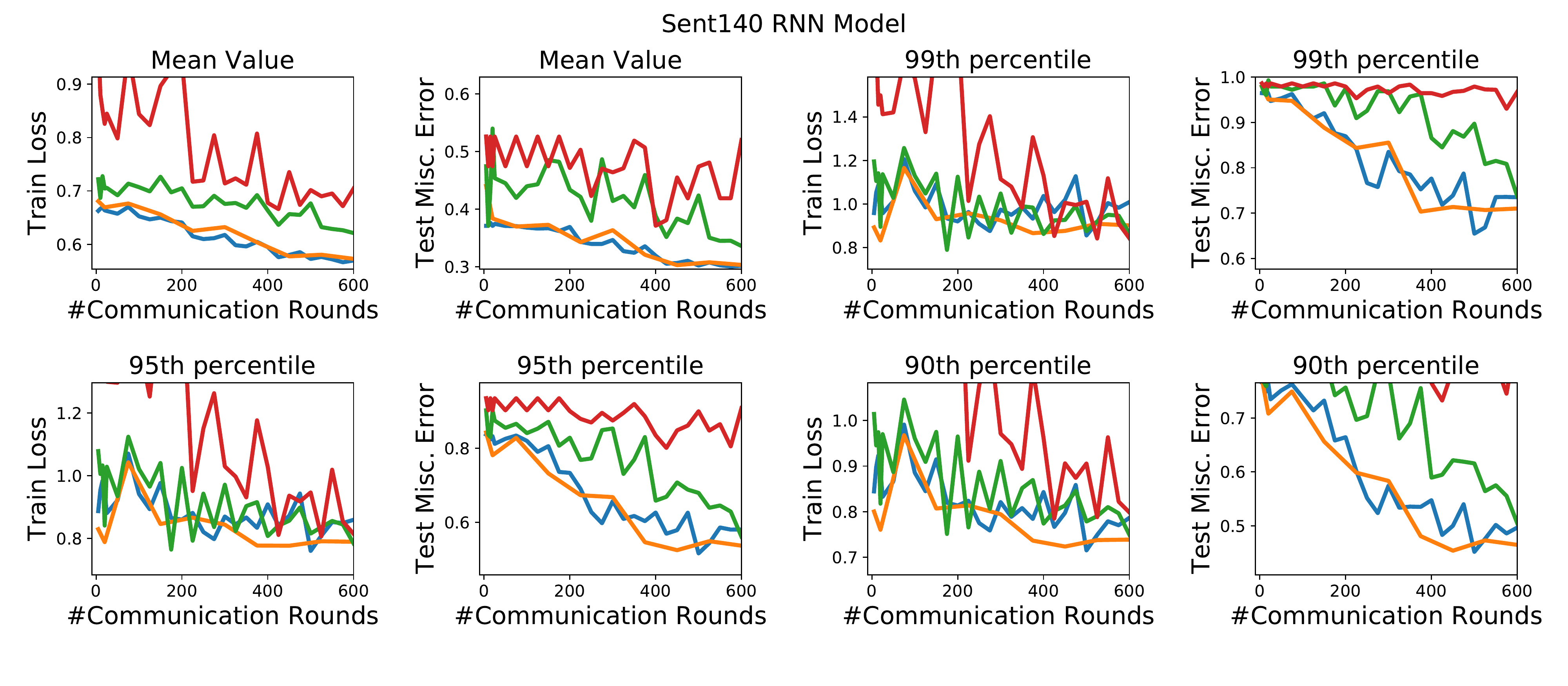}
   \label{fig:Ng1} 
\end{subfigure}

\vspace{-20pt} 
\begin{subfigure}[b]{1.0\textwidth}
    \centering
   \includegraphics[width=\linewidth]{./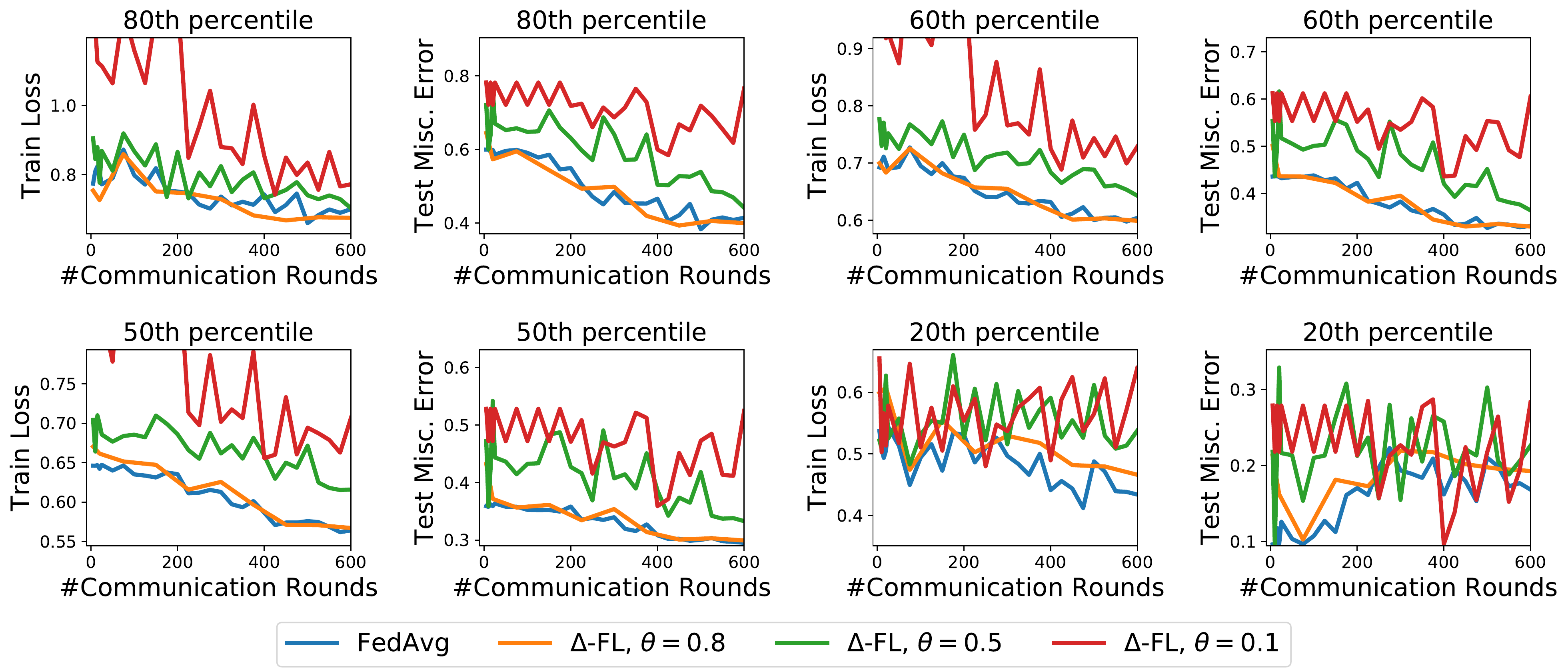}
   \label{fig:Ng2}
\end{subfigure}
\caption{Performance across iterations of Sent140 RNN model.}
\label{fig:a:expt:sent140-lstm}
\end{figure}

\begin{figure}[t!]
\hspace{-2pt}\begin{subfigure}[b]{1.0\textwidth}
    \centering
   \includegraphics[width=\linewidth]{./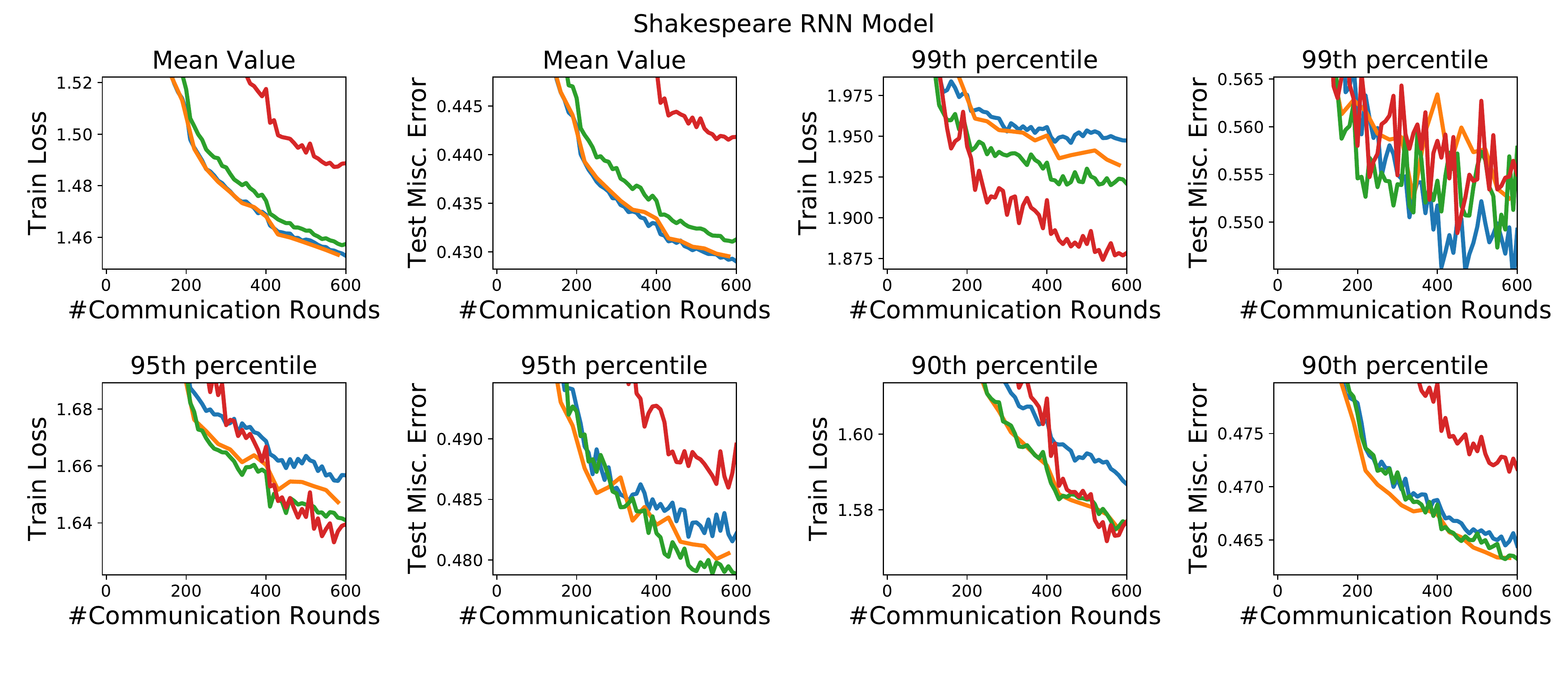}
   \label{fig:Ng1} 
\end{subfigure}

\vspace{-10pt}\begin{subfigure}[b]{1.0\textwidth}
    \centering
   \includegraphics[width=\linewidth]{./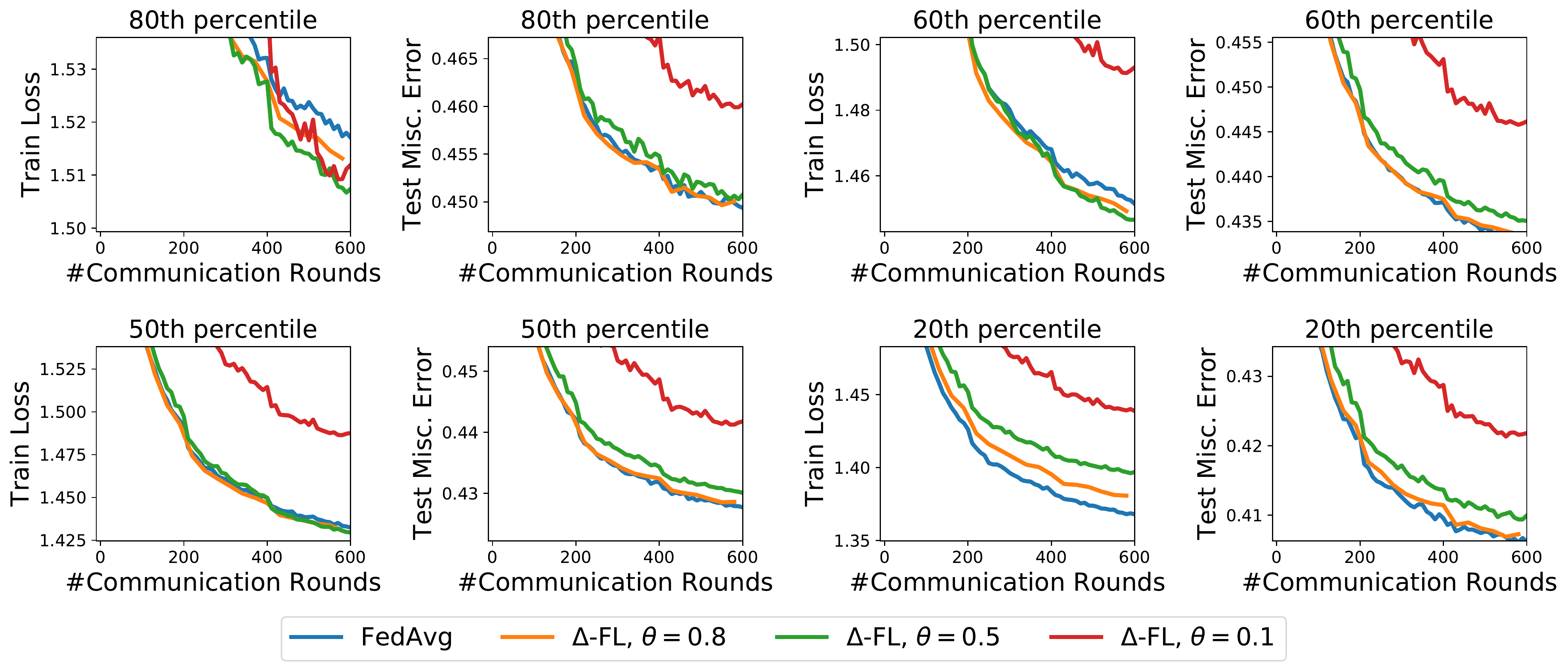}
   \label{fig:Ng2}
\end{subfigure}
\caption{Performance across iterations on the Shakespeare dataset.}
\label{fig:a:expt:shake-lstm}
\end{figure}

\myparagraph{Histograms of Loss and Test Misc. Error over Devices}
Here, we plot the histograms of the loss distribution over training devices and the misclassification error distribution over testing devices.
We report the losses and errors obtained at the end of the training 
process. Each metric is averaged per device over 5 runs of the random seed. 

Figure~\ref{fig:a:expt:hist-emnist} shows the histograms for EMNIST,
while Figure~\ref{fig:a:expt:hist-sent-shake} shows the histograms 
for Sent140 and Shakespeare.

We note that \newfl{} tends to exhibit thinner upper tails at some 
values of $\theta$ and often at multiple values of $\theta$. This shows 
the benefit of using \newfl{} over vanilla \fedavg.

\begin{figure}[t!]
   \includegraphics[width=\linewidth]{./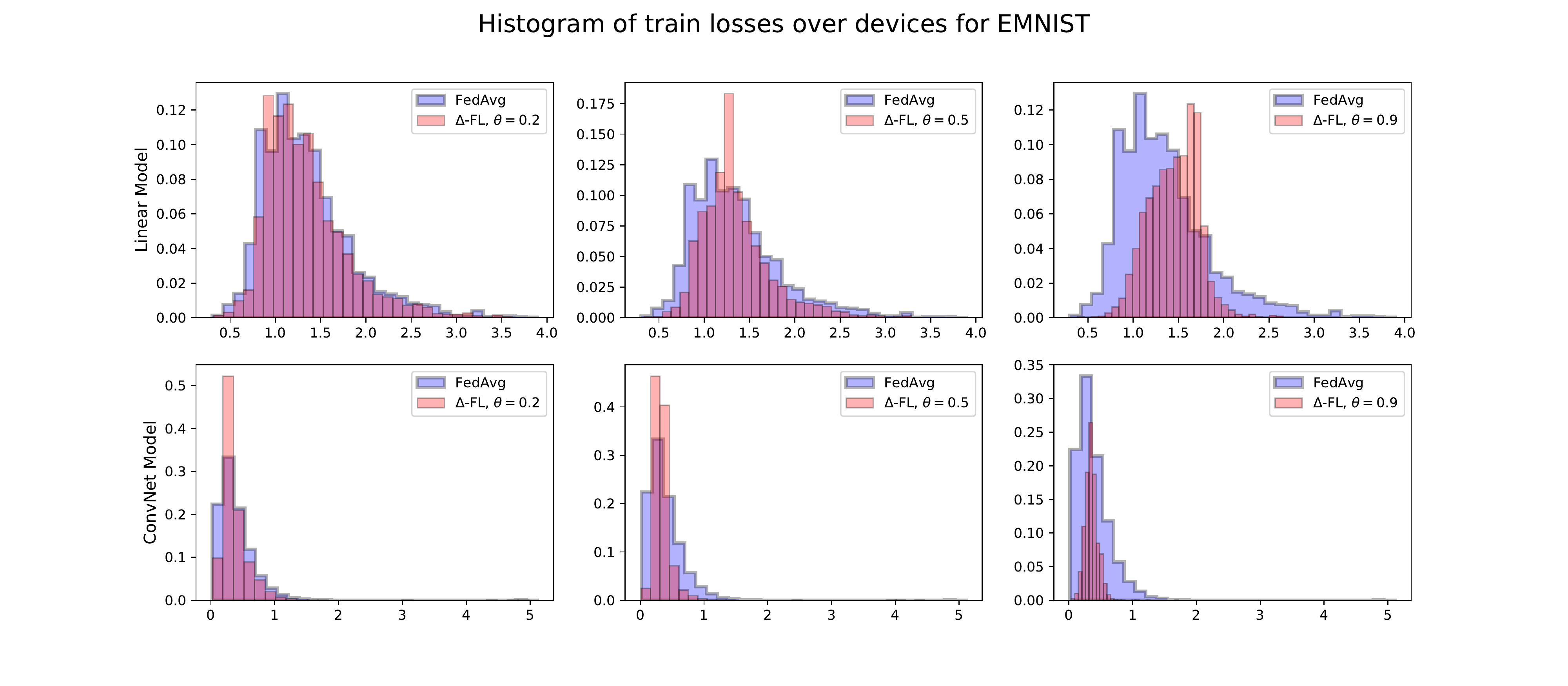}
   \includegraphics[width=16cm, height=8cm]{./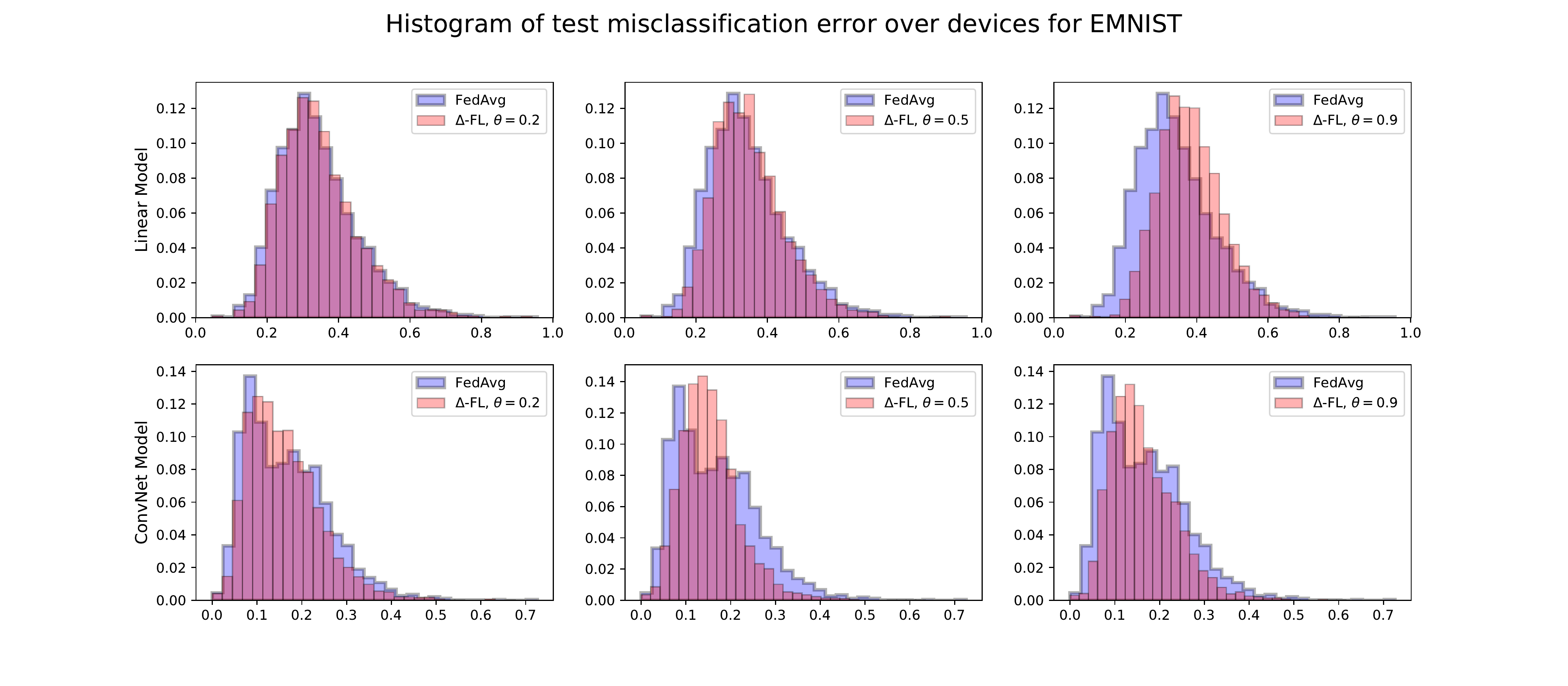}
   \caption{Histogram of loss distribution over training devices and misclassification error distribution over testing devices for EMNIST. The identification of the model (linear or ConvNet) is given 
   on the $y$-axis of the histograms.}
   \label{fig:a:expt:hist-emnist}
\end{figure}

\begin{figure}[t!]
    \centering
   \includegraphics[width=0.97\linewidth]{./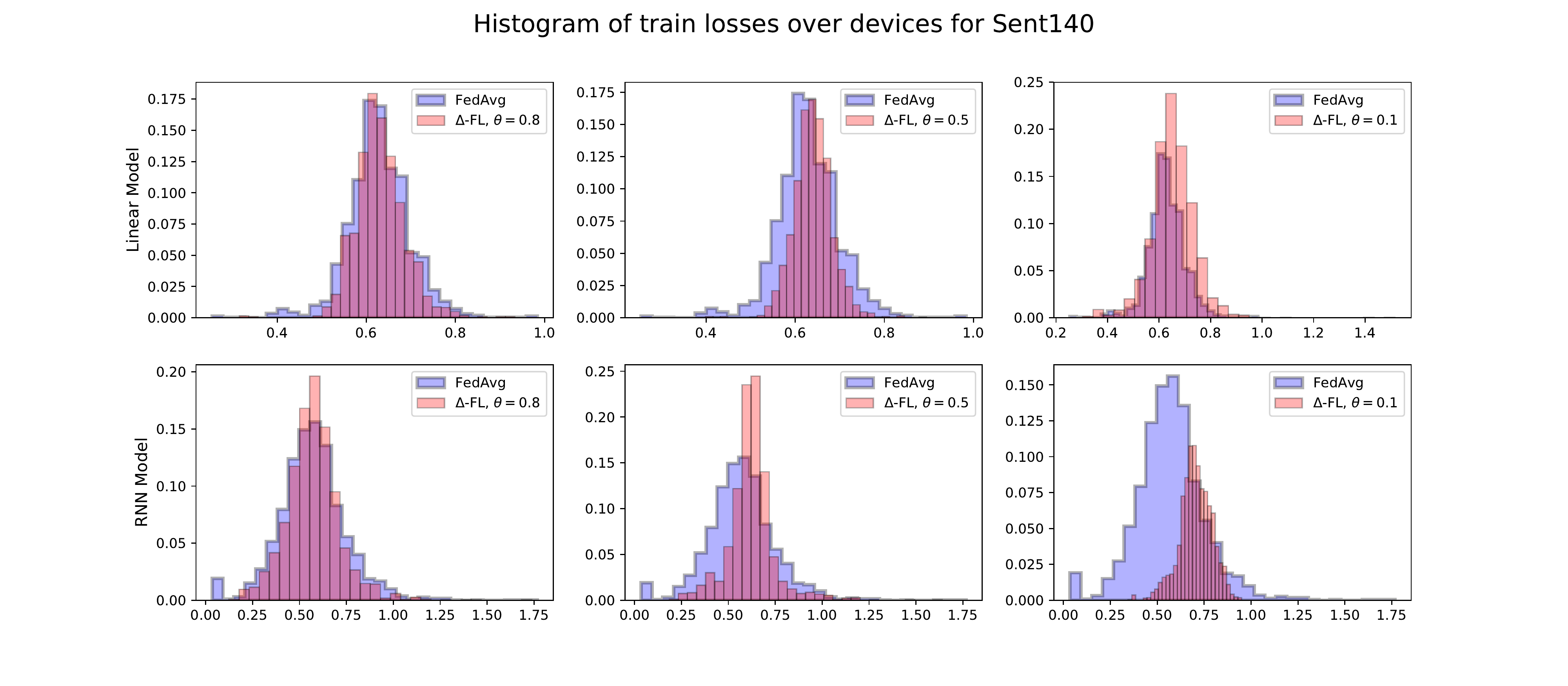}

   \includegraphics[width=0.97\linewidth]{./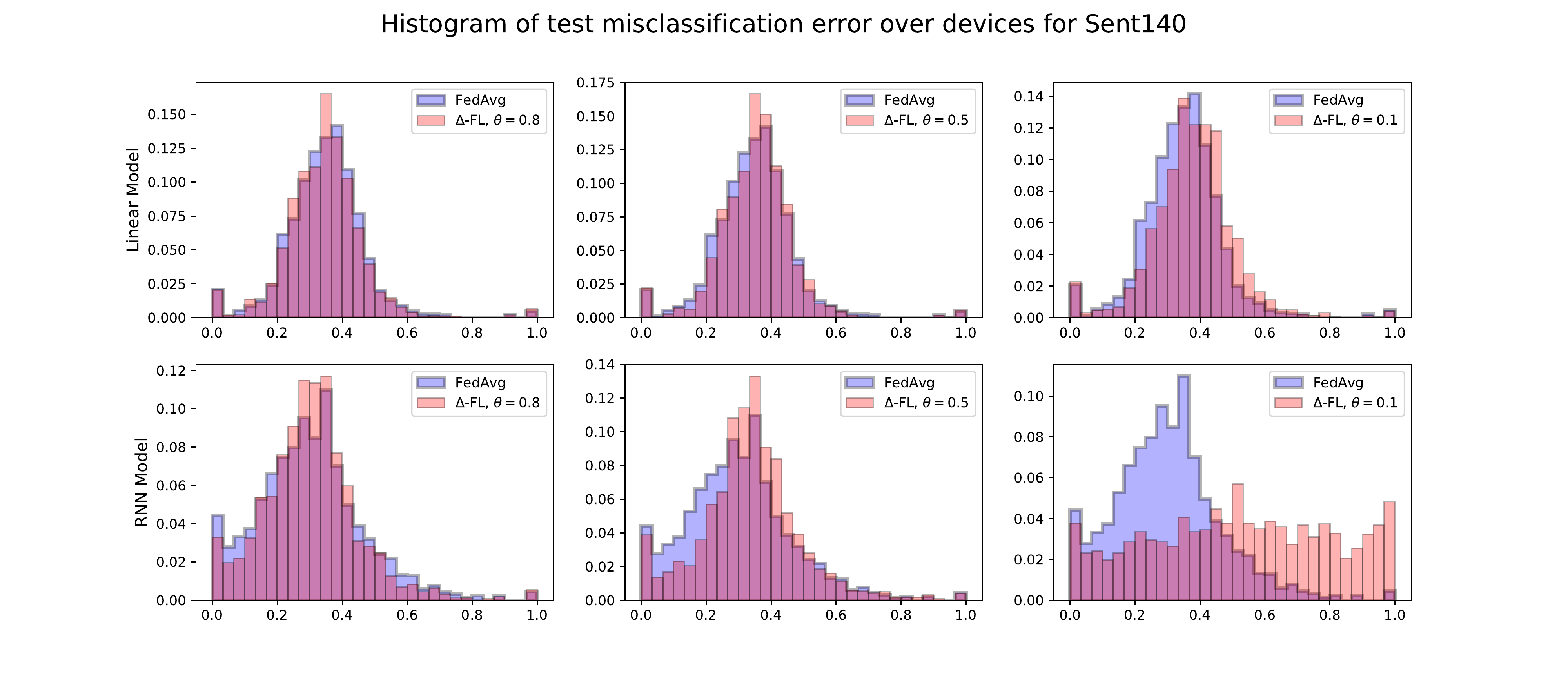}

   \includegraphics[width=0.97\linewidth]{./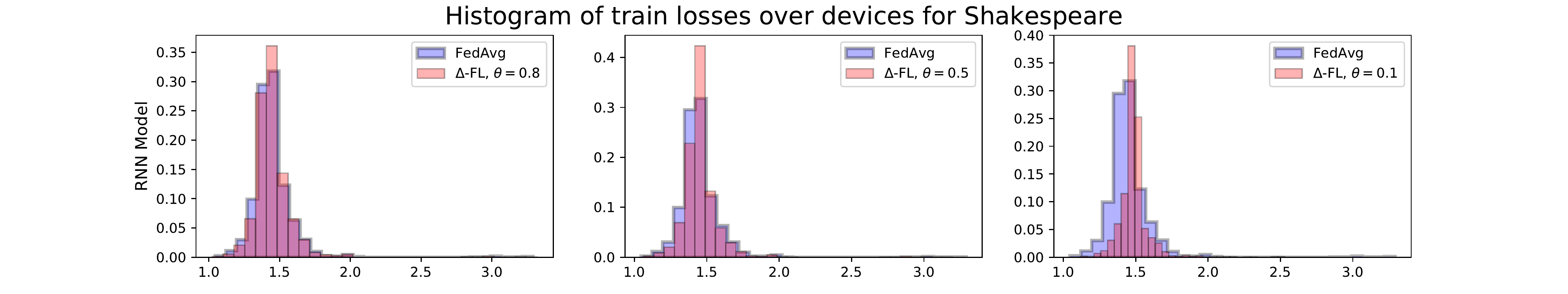}
   
   \includegraphics[width=0.97\linewidth]{./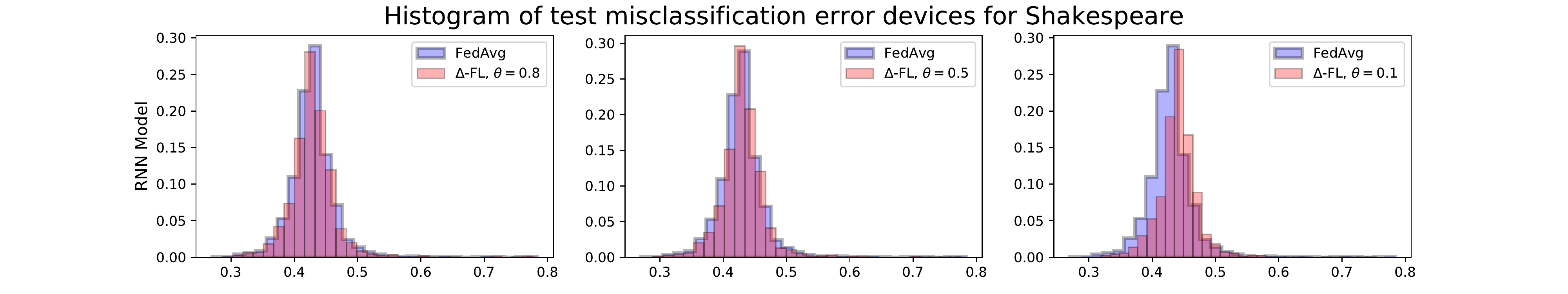}
   \caption{Histogram of loss distribution over training devices and misclassification error distribution over testing devices for Sent140 and Shakespeare. The identification of the model (linear or RNN) is given on the $y$-axis of the histograms.}
   \label{fig:a:expt:hist-sent-shake}
\end{figure}

\myparagraph{Performance compared to local data size}
Next, we plot the loss on training devices versus the amount of local data on the device
and the misclassification error on the test devices versus the 
amount of local data on the device.
See Figure~\ref{fig:a:expt:scatter-emnist} for EMNIST and 
Figure~\ref{fig:a:expt:scatter-sent-shake} for Sent140 and Shakespeare.

Observe, firstly, that \newfl{} reduces the variance of 
of the loss on the train devices.
Secondly, note that amongst test devices with a small number 
of data points (e.g., $<200$ for EMNIST or $<100$ for Sent140), \newfl{} reduces the variance of the misclassification error. Both effects are more pronounced on 
the neural network models.

\begin{figure}[t!]
\centering
   \includegraphics[width=\linewidth]{./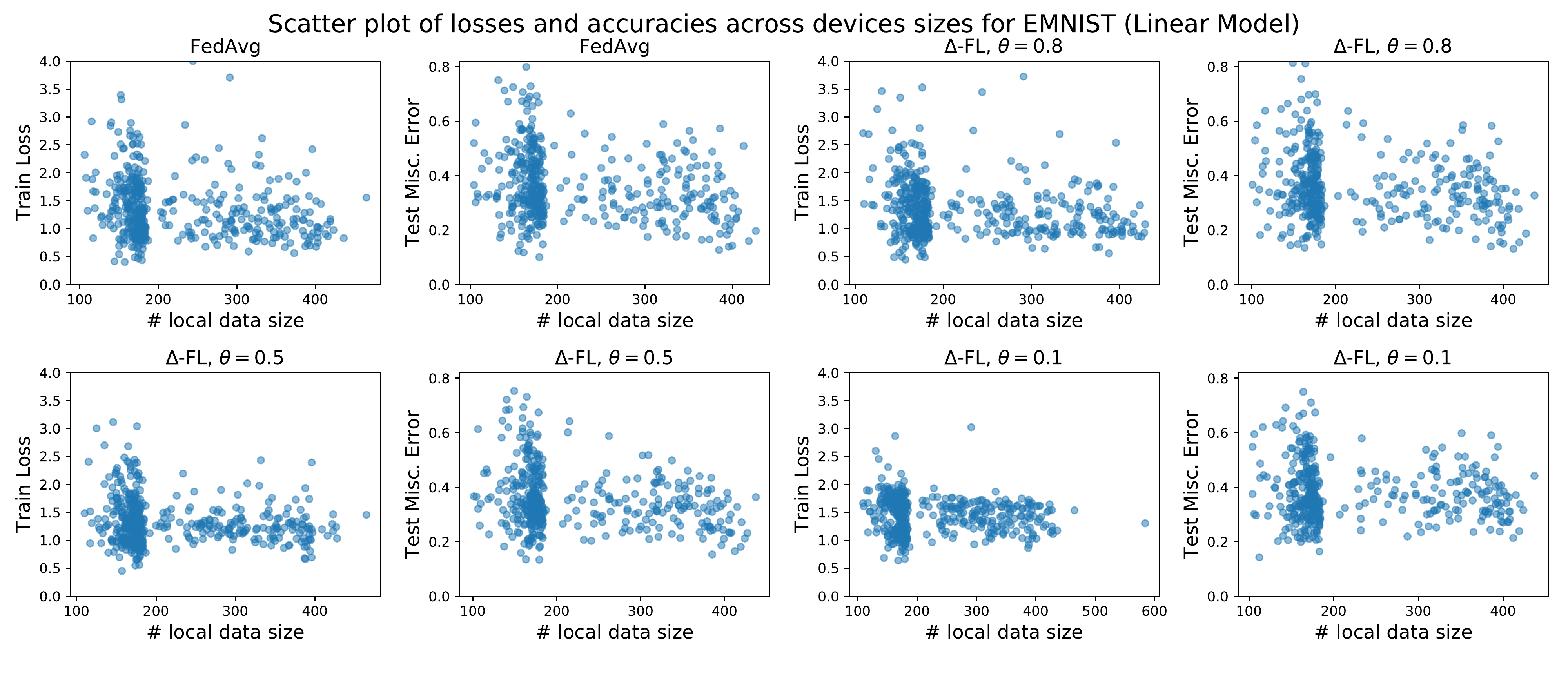}
   
   \includegraphics[width=\linewidth]{./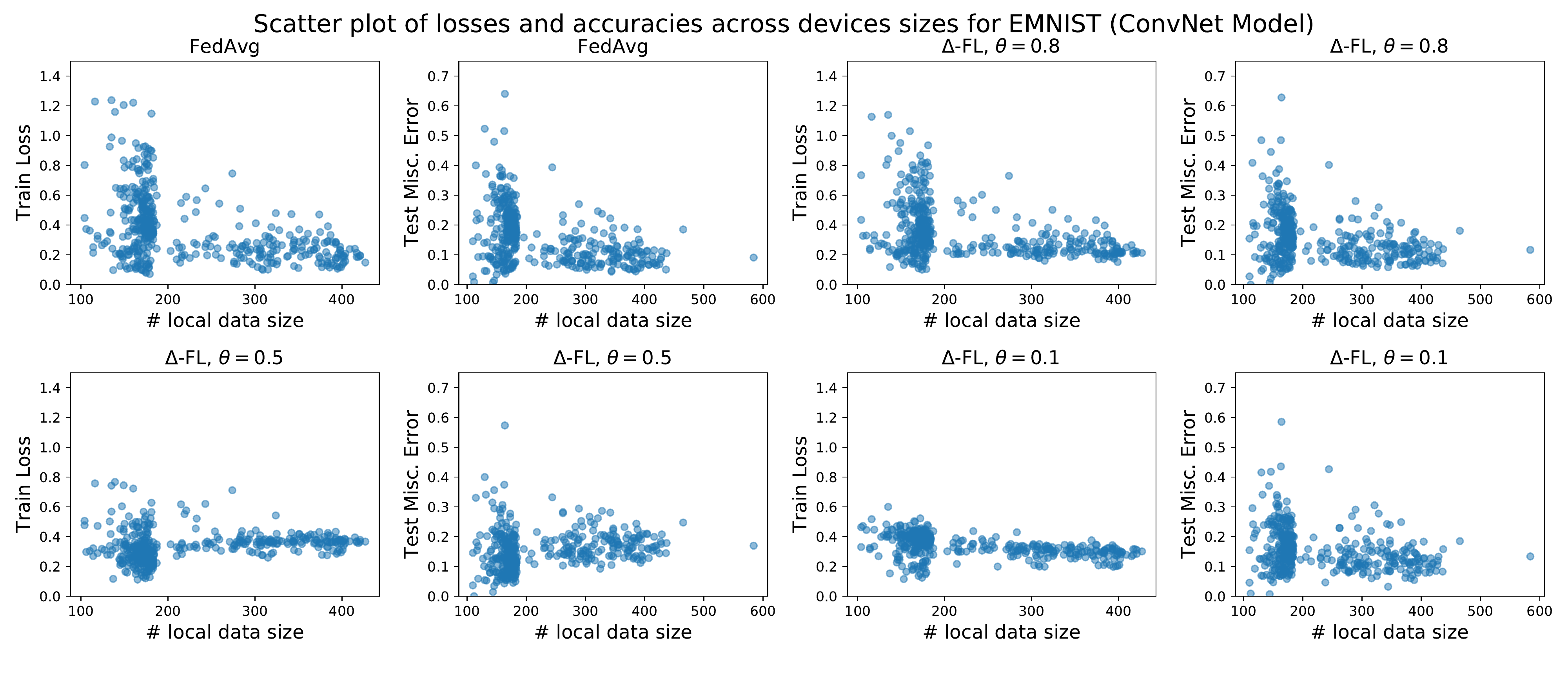}
   \caption{Scatter plot of (a) loss on training device vs. amount of local data, and (b) misclassification error on testing device vs. amount of local data for EMNIST.}
   \label{fig:a:expt:scatter-emnist}
\end{figure}

\begin{figure}[t!]

   \includegraphics[width=0.96\linewidth]{./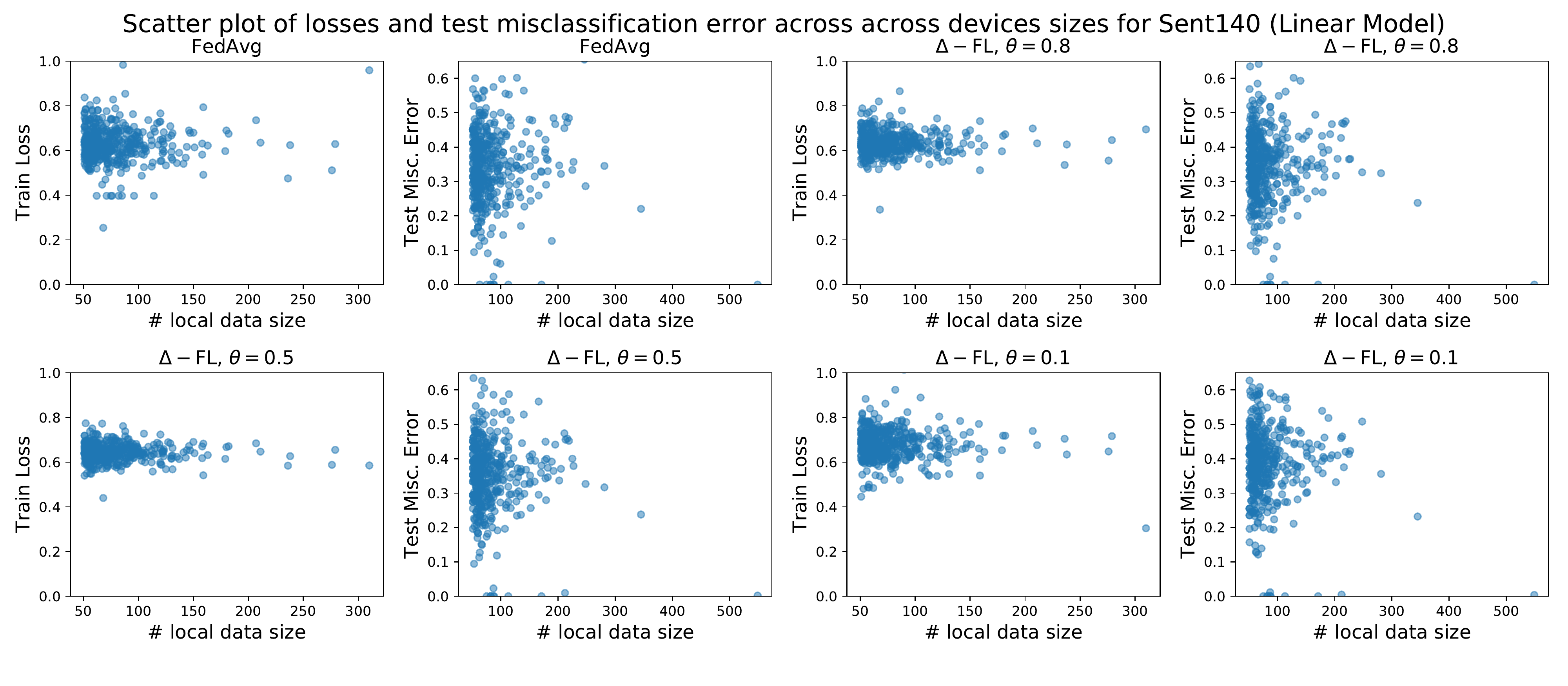}
   
   \includegraphics[width=0.96\linewidth]{./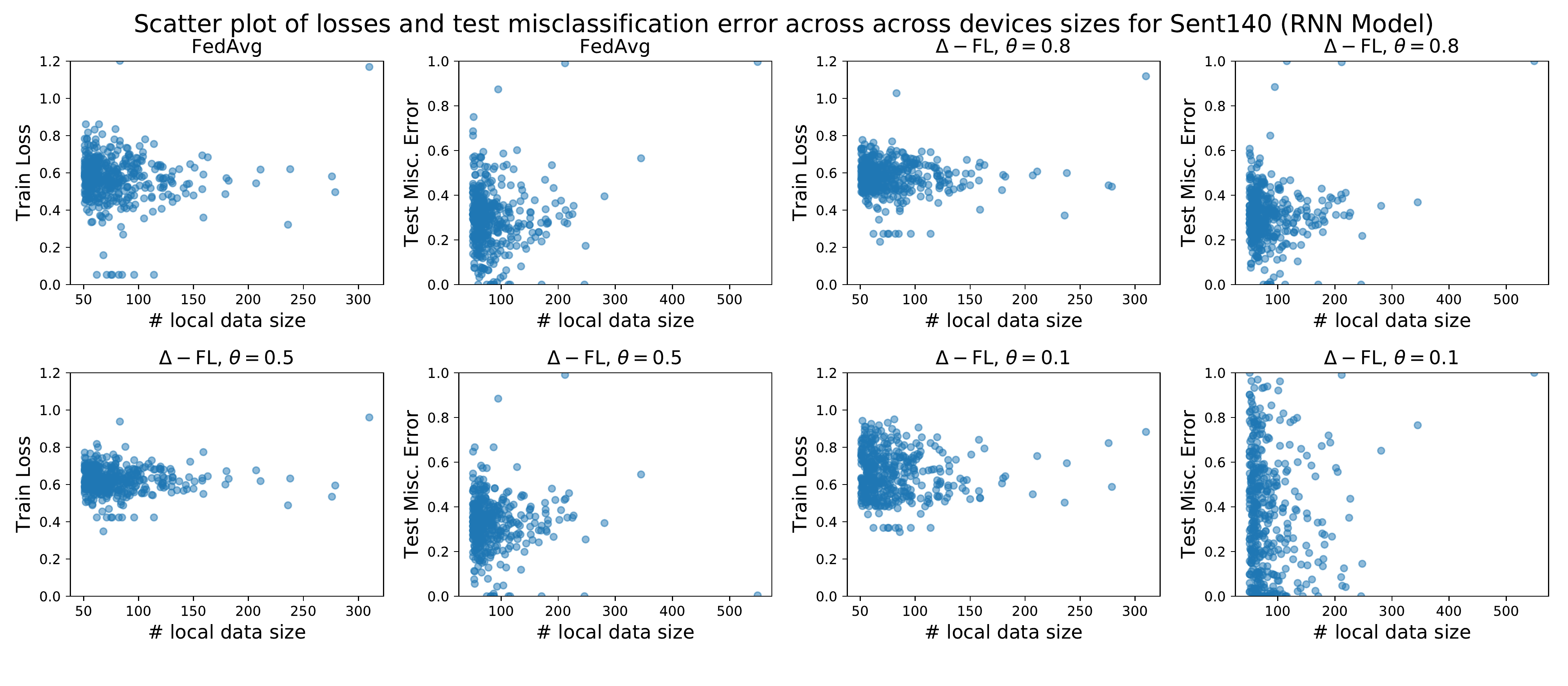}
   
   \includegraphics[width=0.96\linewidth]{./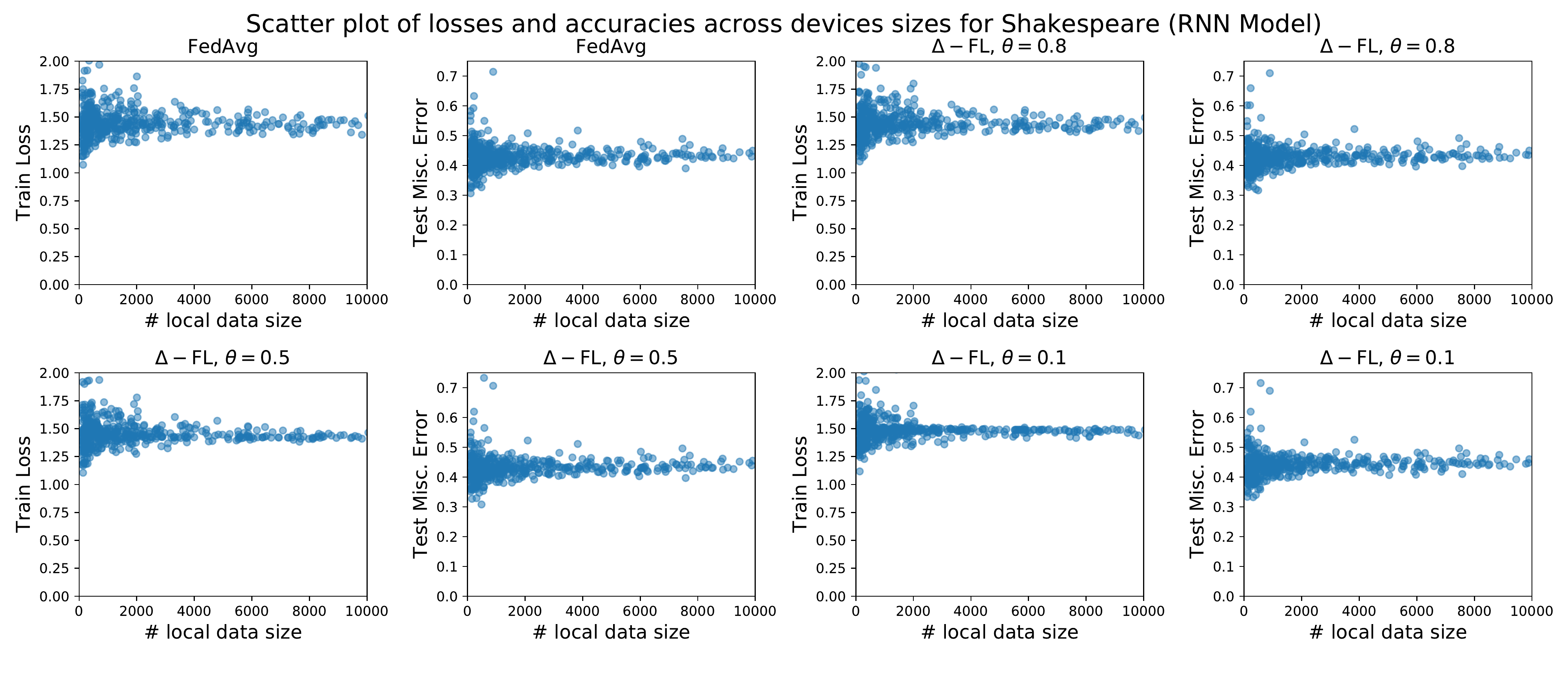}
   \caption{Scatter plot of (a) loss on training device vs. amount of local data, and (b) misclassification error on testing device vs. amount of local data for Sent140 and Shakespeare.}
   \label{fig:a:expt:scatter-sent-shake}
\end{figure}

\myparagraph{Number of Devices Selected per Communication Rounds}
Next, we plot the number of devices selected per round (after device 
filtering, if applicable). The shaded area denotes the maximum
and minimum over 5 random runs. We see from Figure~\ref{fig:a:expt:device-per-round}
that device-filtering is stable in the number of devices filtered out.
\begin{figure}[t!]
    \centering
    \includegraphics[width=\textwidth]{./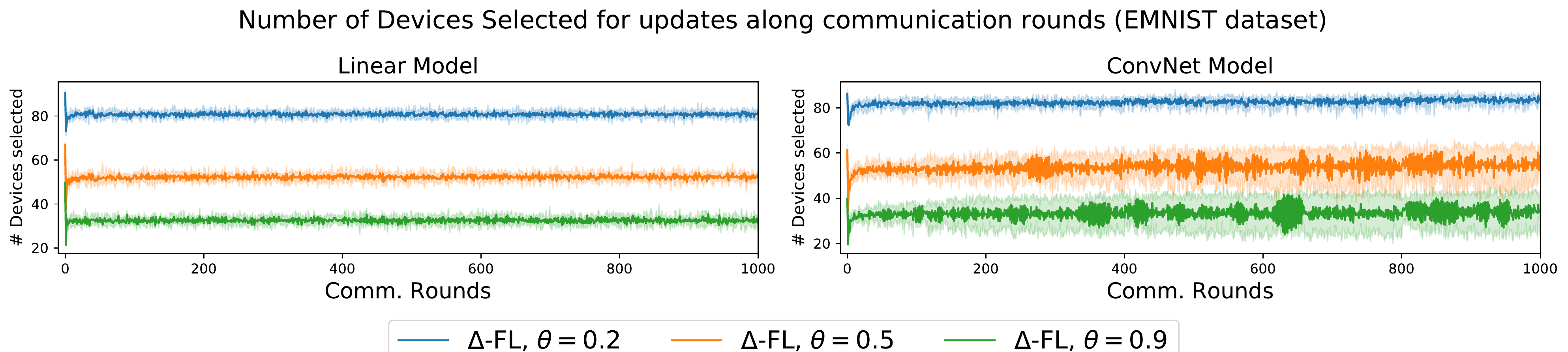}
    \caption{Number of devices selected per round (after device filtering, if applicable) for the EMNIST dataset. The shaded region denotes the 
    maximum and minimum over 5 random runs.}
    \label{fig:a:expt:device-per-round}
\end{figure}

\myparagraph{Effect of the Period of Update of $\eta$}
Here, we compare with the experiments to update $\eta$ every $T_\eta$ rounds. Equivalently, this amounts to running the $w$-step of the 
alternation minimization meta-algorithm for $T_\eta$ steps.
See Figure~\ref{fig:a:expt:t-eta-emnist}
for EMNIST linear model and Figure~\ref{fig:a:expt:t-eta-sent140}
for Sent140 linear model.
We see that on the Sent140 linear model, the choice of $T_\eta$ does not 
make any difference, while for EMNIST, larger $T_\eta$ appears to help. 
Note that Algorithm~\ref{algo:main:fed:proposed}
corresponds to $T_\eta = 1$.

\begin{figure}[t!]
    \centering
   \includegraphics[width=\linewidth]{./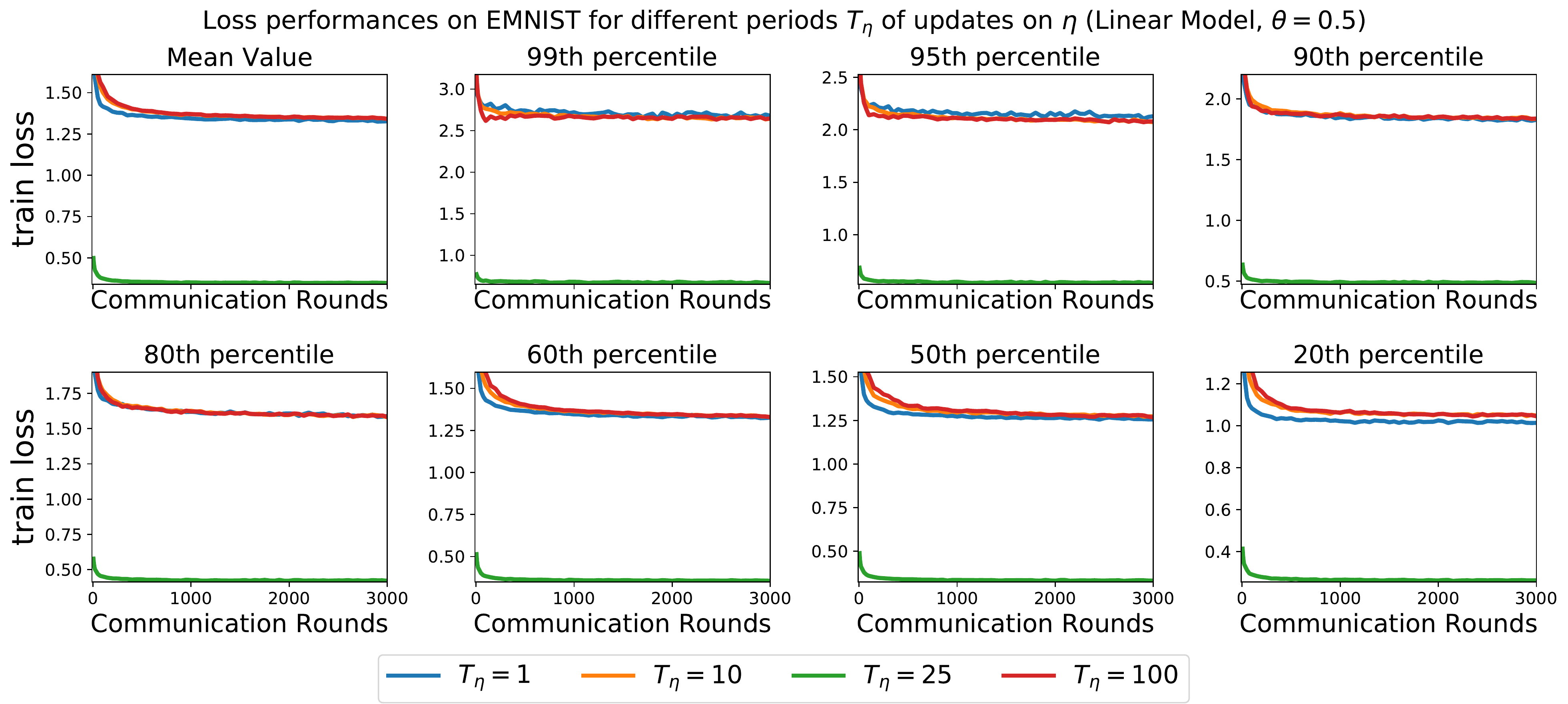}

   \includegraphics[width=\linewidth]{./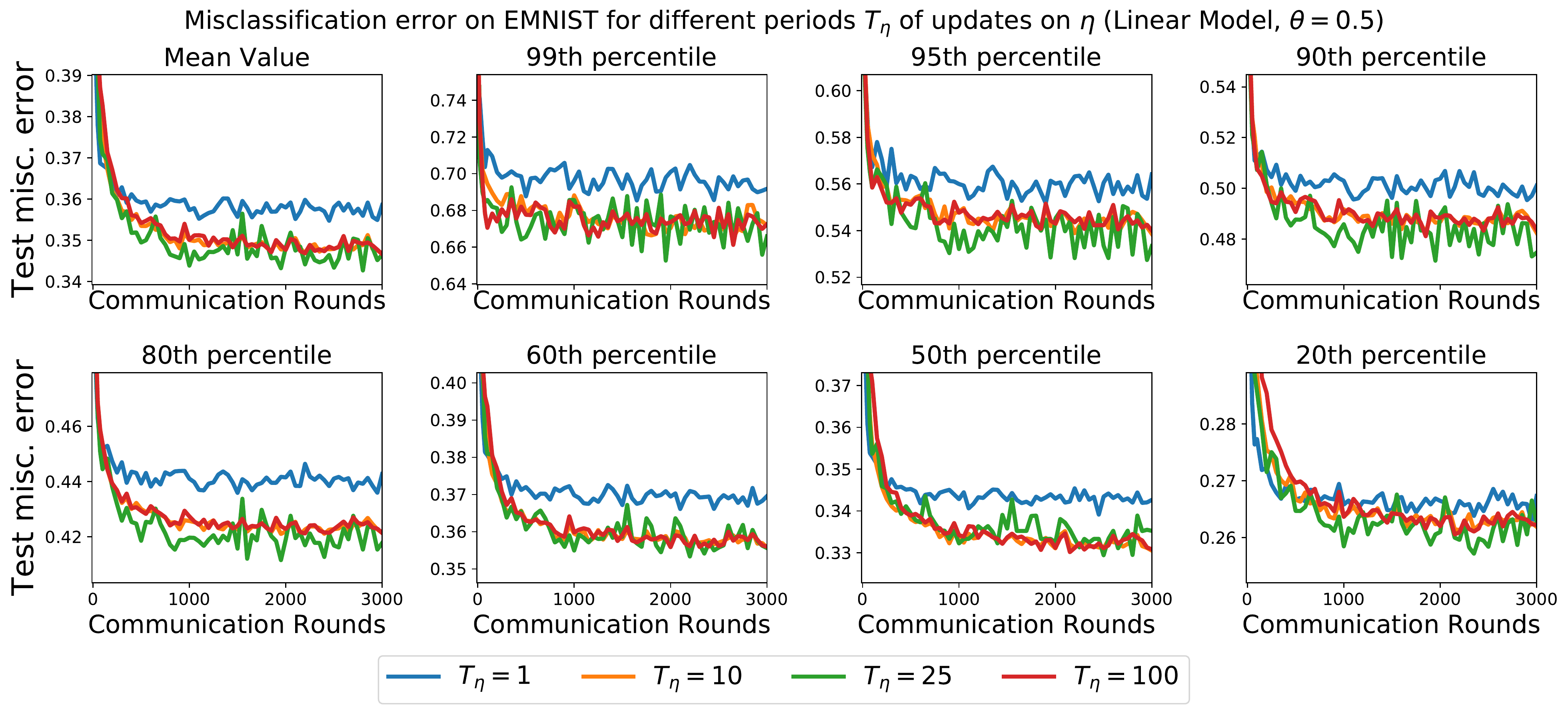}
   \caption{Effect of different choices of $T_\eta$ on EMNIST linear model.}
   \label{fig:a:expt:t-eta-emnist}
\end{figure}

\begin{figure}[t!]
    \centering
   \includegraphics[width=\linewidth]{./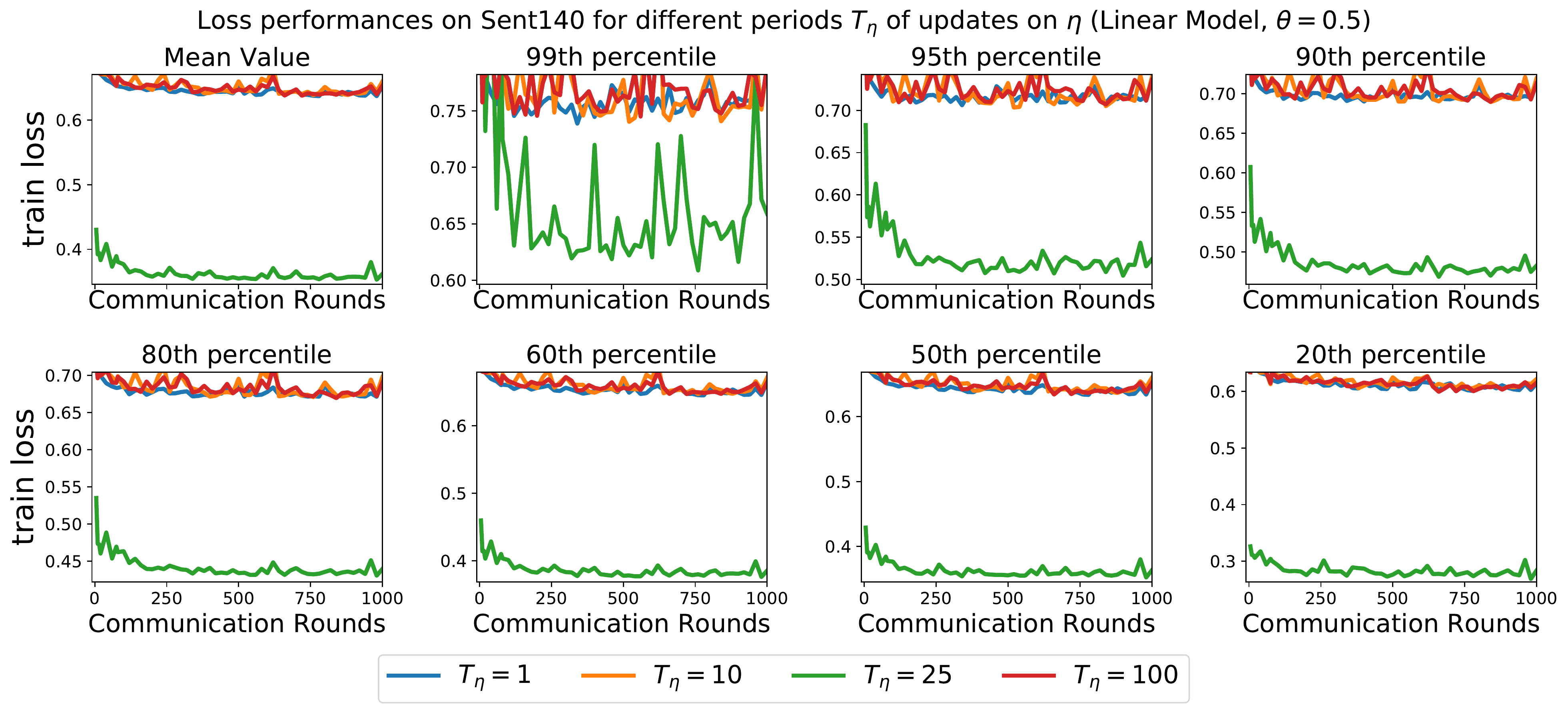}
   
   \includegraphics[width=\linewidth]{./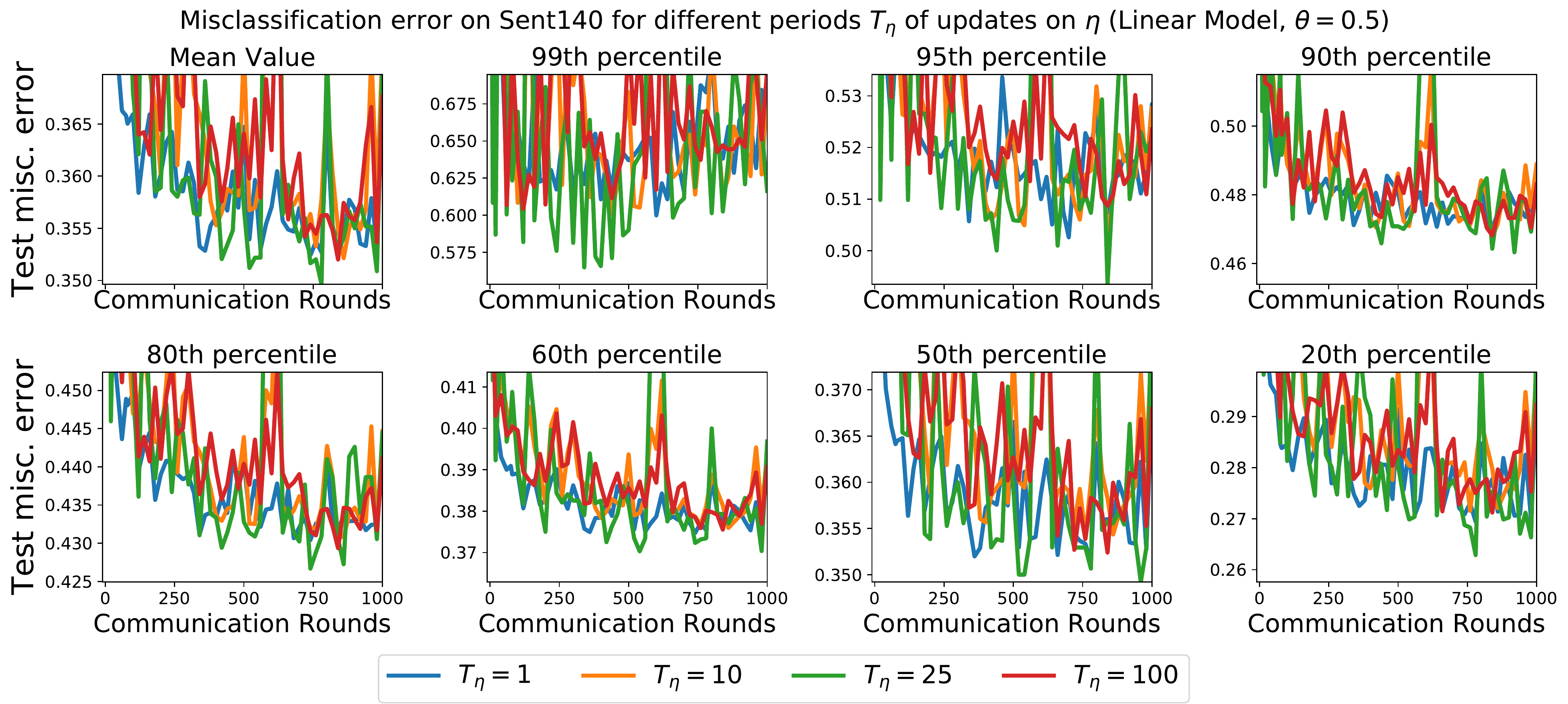}
   \caption{Effect of different choices of $T_\eta$ on Sent140 linear model.}
   \label{fig:a:expt:t-eta-sent140}
\end{figure} 
\clearpage
\section{Properties of Subdifferentials and Smoothing}\label{sec:a:technical_lemmas}
\subsection{Subdifferentials of Non-convex Functions}
\label{sec:a:techn:subdiff}

We briefly recall the standard notions of subgradients for nonsmooth functions. We follow the terminology of standard textbooks~\cite{clarke1990optimization,rockafellar2009variational}.
For a function 
$\varphi\colon\reals^d\rightarrow\reals\cup\{+\infty\}$, we define the regular (or Fr{\'e}chet) subdifferential of $\varphi$ at $\bar{x}$ as 
\[
\partial^R \varphi(\bar{x}) =\{z\in \mathbb{R}^n: ~\varphi(x)\geq
\varphi(\bar{x})+{z}\T(x-\bar{x}) + o(\norm{x-\bar{x}})\},
\]
which corresponds to the set of gradients of smooth functions that are below $\varphi$ and coincide with it at $x$. We then introduce
the limiting subdifferential as the set of all limits produced by regular subgradients
\[%
\partial^L \varphi(\bar{x})= \limsup_{x\rightarrow \bar{x}, \varphi(x)\rightarrow \varphi(\bar{x})}
\partial^R \varphi(x).
\]%
We also consider the (Clarke) subdifferential which can be defined, when $\varphi$ is locally Lipschitz, by the convex hull of the limiting subdifferential:
\[
\partial \varphi(\bar{x}) = \text{conv}\ \partial^L \varphi(\bar{x}).
\]
These notions generalize (sub)gradients of both
smooth functions and convex functions: for these functions indeed, the three  subdifferentials coincide, and they reduce to $\{\nabla \varphi(\bar x)\}$ when $\varphi$ is smooth and to the standard subdifferential from convex analysis when $\varphi$ is convex.

\subsection{Infimal Convolution Smoothing}
\label{sec:a:techn:smoothing}

A convex, non-smooth function $h$ can be smoothed by  infimal convolution with a smooth function~\citep{nesterov2005smooth,beck2012smoothing}. 
We use its dual representation, recalled below.

\begin{definition} \label{defn:smoothing:inf-conv}
	For a given convex function $h:\reals^m \to \reals$, a smoothing function $\omega: \dom h^* \to \reals$ which is 
	1-strongly convex with respect to $\norm{\cdot}$, 
	and a parameter $\nu > 0$, define
	\begin{align*}
	h_{\nu \omega}(z) := \max_{u \in \dom h^*} \left\{ \inp{u}{z} -  h^*(u) - \nu \omega(u) \right\} \,.
	\end{align*}
	as the smoothing of $h$ by $\nu \omega$.
\end{definition}

We now state a classical result showing how the parameter $\nu$ 
controls both the approximation error and the level of the smoothing.
For a proof, see~\citep[Proposition 39]{pillutla2018smoother}, which is an extension of~\citep[Theorem 4.1, Lemma 4.2]{beck2012smoothing}.

\begin{theorem} \label{thm:setting:beck-teboulle}
	Consider the setting of Def.~\ref{defn:smoothing:inf-conv}. 
	The smoothing $h_{\nu \omega}$ is continuously differentiable and its gradient, given by 
	\[
	\grad h_{\nu \omega}(z) = \argmax_{u \in \dom h^*} \left\{ \inp{u}{z} - h^*(u) - \nu \omega(u) \right\}
	\]
	is $1/\nu$-Lipschitz with respect to $\normd{{\cdot}}$,
	the dual norm to $\norm{\cdot}$. 
	Moreover, letting $h_{\nu \omega} \equiv h$ for $\nu = 0$, the smoothing satisfies, for all $\nu_1 \ge \nu_2 \ge 0$,
	\begin{align*}
		(\nu_1 - \nu_2) \inf_{u \in \dom h^*} \omega(u) 
		\le 
		h_{\nu_2 \omega}(z) - h_{\nu_1 \omega}(z) 
		\le 
		(\nu_1 - \nu_2) \sup_{u \in \dom h^*} \omega(u) \,.
	\end{align*}
\end{theorem}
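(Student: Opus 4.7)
The plan is to decompose the proof into three independent pieces corresponding to the three claims: differentiability with the stated gradient formula, the $1/\nu$-Lipschitz bound on $\grad h_{\nu\omega}$, and the sandwich inequality comparing $h_{\nu_1\omega}$ and $h_{\nu_2\omega}$. Throughout, I would exploit the observation that since $\omega$ is 1-strongly convex on $\dom h^*$ with respect to $\norm{\cdot}$, the function $u \mapsto h^*(u) + \nu\omega(u)$ is $\nu$-strongly convex, which will drive essentially every step.

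For the first part, I would invoke Danskin's theorem (in its convex-analytic form). Denote $u^\star(z) := \argmax_{u \in \dom h^*}\{\inp{u}{z} - h^*(u) - \nu\omega(u)\}$. Strong convexity of $h^* + \nu\omega$ guarantees that this argmax is a singleton, so $h_{\nu\omega}(z)$ is the optimal value of a maximization whose integrand is linear in $z$ with gradient $u$. Standard Danskin/envelope results then give that $h_{\nu\omega}$ is differentiable with $\grad h_{\nu\omega}(z) = u^\star(z)$, and continuity of $\grad h_{\nu\omega}$ follows from continuity of the argmax of a $\nu$-strongly convex function in its linear parameter.

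For the Lipschitz bound, I would use the first-order optimality condition $z \in \partial h^*(u^\star(z)) + \nu\, \partial\omega(u^\star(z))$. Taking two points $z_1, z_2$ with $u_i = u^\star(z_i)$, subtracting and pairing with $u_1 - u_2$ gives $\inp{z_1 - z_2}{u_1 - u_2} \ge \nu \norm{u_1 - u_2}^2$ by $\nu$-strong monotonicity of the subdifferential of a $\nu$-strongly convex function. Combining with the generalized Cauchy--Schwarz inequality $\inp{z_1-z_2}{u_1-u_2} \le \normd{z_1-z_2}\,\norm{u_1-u_2}$ then yields $\norm{u_1-u_2} \le \nu^{-1} \normd{z_1-z_2}$, which is exactly the desired Lipschitz estimate on $\grad h_{\nu\omega}$. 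I expect this step to be the main technical obstacle because it is the only place where the duality between $\norm{\cdot}$ and $\normd{\cdot}$ has to be handled carefully, rather than being automatic from Hilbert-space structure.

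For the sandwich inequality, I would simply compare the two values using optimality at each smoothing level. Writing $u_i := u^\star(z)$ for the optimizer corresponding to $\nu_i$, optimality of $u_2$ at level $\nu_2$ gives $h_{\nu_2\omega}(z) \ge \inp{u_1}{z} - h^*(u_1) - \nu_2\omega(u_1)$, and subtracting $h_{\nu_1\omega}(z) = \inp{u_1}{z} - h^*(u_1) - \nu_1\omega(u_1)$ produces $h_{\nu_2\omega}(z) - h_{\nu_1\omega}(z) \ge (\nu_1 - \nu_2)\omega(u_1) \ge (\nu_1-\nu_2)\inf_u \omega(u)$ since $\nu_1 \ge \nu_2$. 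The reverse direction is symmetric, using optimality of $u_1$ at level $\nu_1$ to upper-bound $h_{\nu_1\omega}(z)$, giving $h_{\nu_2\omega}(z) - h_{\nu_1\omega}(z) \le (\nu_1 - \nu_2)\omega(u_2) \le (\nu_1-\nu_2)\sup_u \omega(u)$. The boundary case $\nu_2 = 0$, where $h_{0\cdot\omega} \equiv h$ by convention, is handled by the same two inequalities together with the biconjugation identity $h(z) = \max_u\{\inp{u}{z} - h^*(u)\}$ (valid since $h$ is convex, closed, and proper), so no additional argument is needed.
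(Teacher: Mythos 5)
Your proof is correct. Note that the paper itself does not prove this theorem: it defers to the cited references (Proposition~39 of Pillutla et al.\ and Theorem~4.1 / Lemma~4.2 of Beck--Teboulle), and your three-part argument --- uniqueness of the maximizer from $\nu$-strong concavity plus a Danskin/envelope step, strong monotonicity of the subdifferential paired with generalized Cauchy--Schwarz for the $1/\nu$-Lipschitz bound in the dual norm, and cross-feasibility of the two maximizers for the sandwich inequality --- is exactly the standard route taken there. One cosmetic remark: rather than writing the optimality condition as $z \in \partial h^*(u^\star(z)) + \nu\,\partial\omega(u^\star(z))$, which implicitly invokes a subdifferential sum rule, it is cleaner to use $z \in \partial\bigl(h^* + \nu\omega\bigr)(u^\star(z))$ directly; the $\nu$-strong monotonicity estimate $\inp{z_1-z_2}{u_1-u_2} \ge \nu\normsq{u_1-u_2}$ then follows without any qualification condition, and the rest of your argument is unchanged.
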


\end{document}